\documentclass[11pt,letterpaper]{Sengoku}

\usepackage{booktabs}
\usepackage{multirow}
\usepackage{longtable}
\usepackage{makecell}
\usepackage{algorithm}
\usepackage{algpseudocode}
\usepackage[table]{xcolor} 
\usepackage{multirow}
\usepackage{booktabs}
\definecolor{best1}{HTML}{FFCCCC} 
\definecolor{best2}{HTML}{CCEEFF}

\title{\gradienttitle{ReBridge-Flow:}{Re-Coupling Posterior Bridges}{in Flow Matching}{for Image Restoration}}
\date{September 1, 2026}

\affiliationlayout{inline}
\affiliationgap{1.0em}

\affiliation{jsu}{Jiangsu University}
\affiliation{griffith}{Griffith University}
\affiliation{scu}{Sichuan University}
\affiliation{um}{University of Malaya}
\affiliation{ustc}{University of Science and Technology of China}
\affiliation{tju}{Tianjin University}
\affiliation{usq}{University of Southern Queensland}
\affiliation{sustech}{Southern University of Science and Technology}

\author[jsu]{Jiaqi Zhang}
\author[griffith]{Yiqi Wang}
\author[scu]{Hongjie Wu}
\author[um]{Bohan Guo}
\author[ustc]{Xinan Wang}
\author[tju]{Zichen Luo}
\author[usq]{Taotao Cai}
\author[usq]{Zhi Chen}
\author[sustech]{Mingkai Zheng\texorpdfstring{\textsuperscript{*}}{*}}

\metadatarowgap{0.9mm}
\addmetadata[AuxiliaryResources/Icons/Calendar.pdf]{Publication Date}{\articledate}
\addmetadata[AuxiliaryResources/Icons/Home.pdf]{Project Website}{https://jiaqizhang-sengoku.github.io/ReBridge-Flow/}
\addmetadata[AuxiliaryResources/Icons/GitHubLogo.pdf]{Code Repository}{https://github.com/JiaqiZhang-Sengoku/ReBridge-Flow}

\addmetadata[AuxiliaryResources/Icons/Email.pdf]{Corresponding Author}{Mingkai Zheng}

\hypersetup{
  pdftitle={ReBridge-Flow: Re-Coupling Posterior Bridges in Flow Matching for Image Restoration},
  pdfsubject={Research article}
}

\begin{document}

\markboth{Jiaqi Zhang}{}
\markright{ReBridge-Flow}

\begin{abstract}
Flow Matching provides an efficient generative prior for image restoration by learning continuous transport between source and data distributions. However, existing methods typically incorporate measurement constraints through local corrections. Such corrections may disrupt the source-clean endpoint coupling implicitly encoded by the pretrained flow, making the corrected endpoint pair incompatible with the current state. To address this issue, we propose ReBridge-Flow, a posterior bridge re-coupling method. Specifically, given the current state, ReBridge-Flow first decodes the corresponding local source and clean endpoints. It then incorporates measurement information through clean-side anchoring and synchronously re-couples the source endpoint, yielding a measurement-aware endpoint pair with improved local bridge compatibility. The re-coupled endpoints further define a posterior-informed transport direction for advancing the sampling process. We also introduce the Posterior Bridge Defect, which jointly characterizes measurement error, deviation from the flow prior, and bridge mismatch, and leads to explicit updates for clean-side anchoring and source-side re-coupling. Extensive experiments on multiple natural and medical image restoration tasks demonstrate that ReBridge-Flow effectively alleviates bridge mismatch and improves the structural consistency of restored images.
\end{abstract}

\keywords{Flow Matching, Image Restoration, Posterior Bridge, Endpoint Re-Coupling}

\maketitle

\section{Introduction}

\begin{figure}[!t]
\centering
\includegraphics[width=0.55\linewidth]{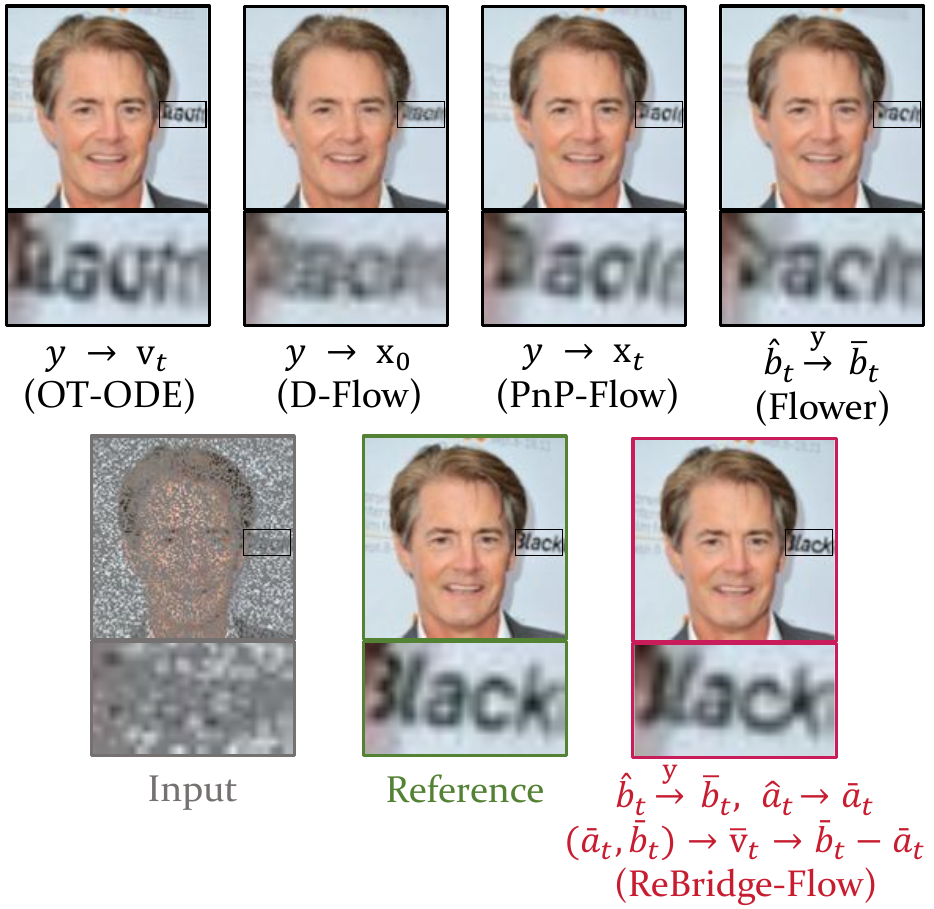} 
\caption{Comparison of restoration results under different measurement injection methods. ReBridge-Flow achieves clearer reconstructions through endpoint re-coupling.}
\label{Figure1}
\end{figure}

Image restoration aims to recover an underlying clean image
$\mathrm{x}\in\mathbb{R}^{\mathrm{n}}$ from a degraded observation
$\mathrm{y}\in\mathbb{R}^{\mathrm{m}}$~\cite{DBLP:journals/pami/Wang0H21,DBLP:journals/tmi/WangYMF18}.
This task is commonly formulated as:
\begin{equation}
\mathrm{y}=\mathrm{H}\mathrm{x}+\epsilon,
\quad
\epsilon\sim{\mathrm{N}}(0,\sigma_\mathrm{y}^2\mathrm{I}),
\label{Eq111}
\end{equation}
where $\mathrm{H}\in\mathbb{R}^{\mathrm{m}\times \mathrm{n}}$ is a known degradation operator
and $\epsilon$ denotes measurement noise.
Since $\mathrm{H}$ is typically non-invertible, the observation often admits
multiple feasible solutions.
Image restoration therefore requires both data consistency and an
effective image prior~\cite{DBLP:journals/spm/McCannJU17}.
Traditional deep restoration methods are usually trained for specific
degradations using paired data~\cite{DBLP:journals/tip/ZhangZCM017,DBLP:conf/eccv/ChenCZS22}.
In contrast, generative models can learn complex image distributions
and serve as general priors across different restoration tasks.

Diffusion models have become important tools for solving image inverse problems because of their strong generative capability~\cite{DDPM,DDIM}. Existing methods typically incorporate likelihood gradients~\cite{DBLP:conf/nips/ChungSRY22,DPS,DPPS,SPGD,SITCOM} or data-consistency projections~\cite{DDRM,DDNM,DDPG,PIRP,DPG} into the reverse sampling process to constrain generation using degraded observations. However, diffusion models usually require long iterative sampling chains, while their highly noisy intermediate states further complicate the design of measurement constraints and numerical solvers.

Recently, Flow Matching has been increasingly adopted for image restoration because it connects source and data distributions through continuous transport and enables efficient sampling~\cite{OT-ODE,FlowPriors,PnPFlow,RestoraFlow,D-Flow,Flower}. Existing methods inject measurement information at different sampling stages and impose constraints on local sampling variables to improve consistency between the restored output and the degraded observation. However, these constraints are usually treated as local interventions on individual variables, without explicitly considering the coupling between the source and clean endpoints. Since an intermediate Flow Matching state is jointly determined by an endpoint pair, correcting the clean endpoint while keeping the source endpoint fixed disrupts their original pairing. The corrected endpoint pair may then no longer accurately explain the current state. If subsequent updates continue to be constructed from such mismatched pairs, local directional errors may gradually accumulate and eventually manifest as structural drift, artifacts, or over-smoothing.

To address this issue, we propose ReBridge-Flow, which reformulates Flow Matching-based image restoration as measurement-conditioned posterior bridge re-coupling rather than the independent correction of a single sampling variable. As shown in Figure~\ref{Figure1}, existing methods apply measurement information to the velocity field, source endpoint, intermediate state, or clean-endpoint estimate. In contrast, ReBridge-Flow explicitly re-couples the source--clean endpoint pair, thereby improving detail recovery. Specifically, we first decode local source and clean endpoints from the current state using the pretrained velocity field. We then incorporate measurement information through clean-side anchoring and synchronously update the source endpoint to pair it with the corrected clean endpoint, thereby improving the local bridge compatibility between the endpoint pair and the current state. The re-coupled endpoint pair further defines a posterior-informed transport direction. To characterize this process in a unified manner, we introduce the Posterior Bridge Defect, which jointly accounts for measurement error, flow-prior preservation, and bridge residual. Our main contributions are summarized as follows:

\begin{itemize}
    \item We identify the bridge mismatch problem in Flow Matching-based image restoration: locally correcting an individual state or endpoint may disrupt the source-clean endpoint coupling and affect subsequent local transport.

    \item We propose ReBridge-Flow, which re-couples the posterior endpoint pair through clean-side anchoring and source-side re-coupling, yielding a measurement-aware endpoint pair with improved local bridge compatibility.

    \item Experiments across diverse natural and medical image restoration tasks demonstrate that ReBridge-Flow effectively suppresses error propagation, reduces structural drift and artifacts, and preserves clearer image details.
\end{itemize}

\section{Related Work}
With the development of generative models, learning-based generative priors have become an important paradigm for solving image restoration inverse problems. Among them, diffusion models and Flow Matching are widely adopted for their strong capability to model complex image distributions.
\paragraph{Diffusion-Based Image Restoration (DBIR).}
Existing DBIR methods can be broadly divided into two categories. The first category introduces measurement constraints into the reverse sampling process through gradient guidance.
DPS~\cite{DPS} guides posterior sampling using the gradient of a measurement-consistency loss.
$\Pi$GDM~\cite{PiGDM} combines a pseudoinverse operator with Jacobian computation to improve guidance accuracy.
RED-Diff~\cite{RED-Diff} formulates restoration as a
measurement-consistency optimization problem with score-matching regularization. SITCOM and SPGD~\cite{SITCOM, SPGD} explicitly control gradient updates
to improve sampling stability. The second category corrects restoration estimates through projection or
structured constraints.
DDRM~\cite{DDRM} and DDNM~\cite{DDNM} enforce data consistency using
singular value decomposition and range--null-space decomposition,
respectively.
DiffPIR~\cite{DiffPIR} employs half-quadratic splitting to convert
measurement constraints into proximal updates.
EquS~\cite{EquS} further introduces transformation consistency through an
equivariant inverse mapping and dual-trajectory sampling.

\paragraph{Flow Matching-Based Image Restoration (FMBIR).}
FMBIR methods typically incorporate measurement constraints into the sampling process of a pretrained flow model. OT-ODE~\cite{OT-ODE} directly injects measurement gradients into the ODE
dynamics to modify the velocity field and transport direction. Flow-Priors~\cite{FlowPriors} decomposes the global restoration objective into a sequence of local trajectory optimization problems. PnP-Flow~\cite{PnPFlow} alternates between data-consistency updates and flow-prior mappings. Restora-Flow~\cite{RestoraFlow} combines mask guidance with trajectory correction to keep intermediate states consistent with the degraded
observation. Another line of work operates on endpoint variables. D-Flow~\cite{D-Flow} optimizes the initial source point by backpropagating
through the complete Flow ODE. Flower~\cite{Flower} first estimates a flow-consistent clean endpoint and
then refines it using the measurement information.

\section{ReBridge-Flow}

ReBridge-Flow aims to incorporate measurement information while reducing the source-clean endpoint mismatch caused by local observation correction. As shown in Figure~\ref{Figure2}, ReBridge-Flow first decodes a local endpoint pair from the current state. It then incorporates the measurement through clean-side anchoring and re-couples the source endpoint to reduce the bridge mismatch between the corrected endpoint pair and the current state. The re-coupled endpoint pair further defines a posterior-informed local transport direction for subsequent sampling.

\paragraph{Motivation: Why Local Correction Breaks the Bridge}
In Flow Matching, the intermediate state at time $t$ is jointly determined by a source endpoint $a$ and a clean endpoint $b$:
\begin{equation}
e_t(a,b)=(1-t)a+tb.
\label{Eq2}
\end{equation}

Given the state $\mathrm{x}_t$ and a pretrained velocity field $\mathrm{v}_\theta$, the corresponding local endpoint estimates can be decoded as:
\begin{equation}
\hat a_t=\mathrm{x}_t-t\,\mathrm{v}_\theta(t,\mathrm{x}_t),
\quad
\hat b_t=\mathrm{x}_t+(1-t)\mathrm{v}_\theta(t,\mathrm{x}_t).
\label{Eq3}
\end{equation}

By construction, Eqs.~\eqref{Eq2} and~\eqref{Eq3} satisfy
$e_t(\hat a_t,\hat b_t)=\mathrm{x}_t$. We therefore refer to
$(\hat a_t,\hat b_t)$ as a pair of \emph{local pseudo-endpoints}: they provide an endpoint representation consistent with the current state and the predicted velocity.


\begin{figure}[!t]
\centering
\includegraphics[width=0.90\linewidth]{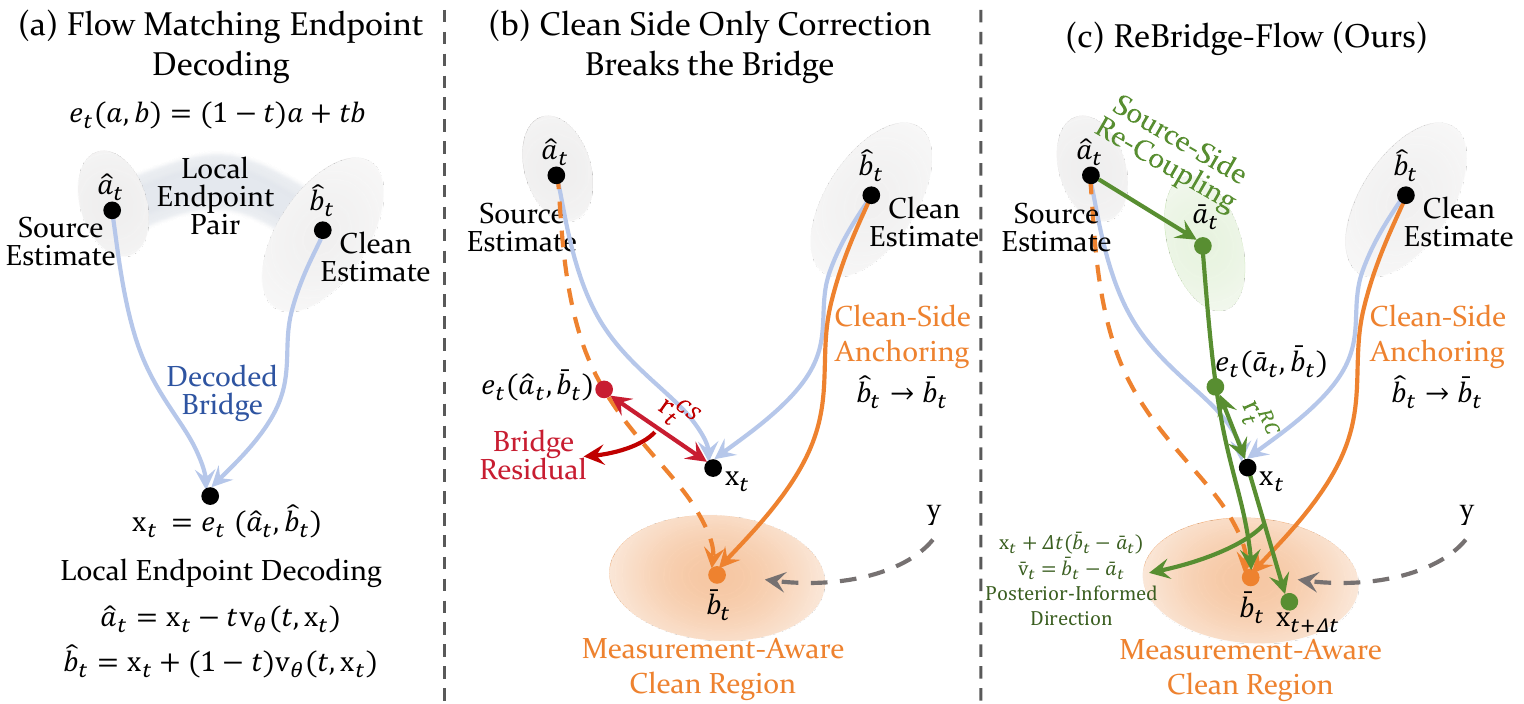} 
\caption{(a) The current state $\mathrm{x}_t$ lies on the original bridge defined by the local endpoint pair $(\hat{a}_t,\hat{b}_t)$. (b) Applying only clean-side anchoring disrupts the local compatibility between the endpoint pair and the current state. (c) ReBridge-Flow reduces the bridge residual through clean-side anchoring and source-side re-coupling.}
\label{Figure2}
\end{figure}

\begin{figure}[!t]
\centering
\includegraphics[width=\textwidth]{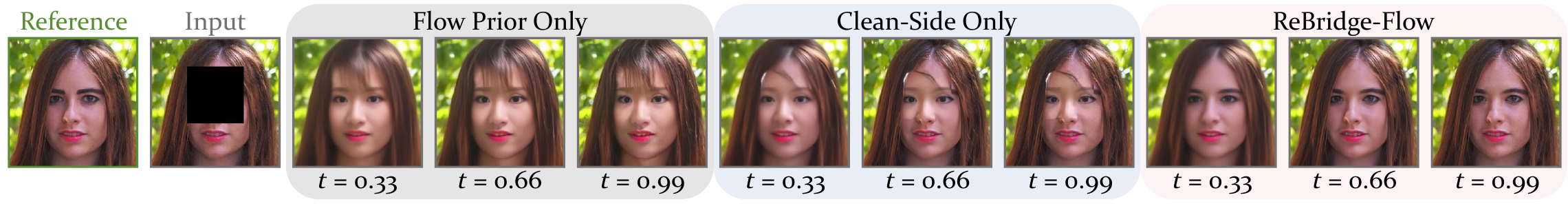} 
\caption{Bridge mismatch caused by local measurement correction. Flow Prior Only produces plausible but observation-inconsistent results, while Clean-Side Anchoring Only may accumulate structural drift and artifacts. ReBridge-Flow suppresses error propagation through source–clean endpoint re-coupling, yielding more stable restoration.}
\label{Figure3}
\end{figure}

Existing Flow Matching restoration methods typically apply local observation corrections to the current state, predicted velocity, or clean-side estimate. Although these corrections can reduce the measurement error, they do not necessarily preserve the compatibility between the corrected endpoint pair and the current state. If only the clean endpoint is corrected from $\hat b_t$ to $\bar b_t$, while the source endpoint $\hat a_t$ remains unchanged, the bridge residual after clean-side-only correction is:
\begin{equation}
\begin{gathered}
\mathrm{r}_t^{\mathrm{CS}} :=\mathrm{x}_t-e_t(\hat a_t,\bar b_t) =t(\hat b_t-\bar b_t),\\
\left\|\mathrm{r}_t^{\mathrm{CS}}\right\|_2 =t\left\|\hat b_t-\bar b_t\right\|_2.
\end{gathered}
\label{Eq4}
\end{equation}


Thus, any nonzero clean-side correction introduces an endpoint--state consistency residual proportional to the correction magnitude and to $t$.
The state itself remains unchanged; the inconsistency arises because the corrected endpoint pair no longer interpolates to the current state. This observation motivates our hypothesis that repeatedly constructing transport directions from such inconsistent pairs can accumulate trajectory error, manifested empirically as structural drift, artifacts, or over-smoothing.
Figure~\ref{Figure3} compares representative measurement-injection
strategies and supports this hypothesis.

\subsection{Posterior Bridge Defect}
Let $\pi(a,b)$ denote the unconditional endpoint coupling defined by the pretrained Flow Matching model, and let $\mathrm{P}_t=(e_t)_{\#}\pi$ be the intermediate distribution induced at time $t$. When the observation depends only on the clean endpoint, i.e., $\mathrm{p}(\mathrm{y}\mid a,b)=\mathrm{p}(\mathrm{y}\mid b)$, the measurement-conditioned endpoint coupling and its induced probability path are:
\begin{equation}
\mathrm d\pi^\mathrm{y}(a,b)
=
\frac{\mathrm{p}(\mathrm{y}\mid b)}{\mathrm{Z_y}}\,
\mathrm d\pi(a,b),
\quad
\mathrm{P}_t^\mathrm{y}=(e_t)_{\#}\pi^\mathrm{y},
\label{Eq5}
\end{equation}
where $\mathrm{Z_y}$ is the normalization constant. This relation shows that the observation not only restricts the feasible set of clean endpoints but also changes the posterior pairing between the source and clean endpoints. Therefore, correcting the clean side alone is generally insufficient to characterize the local endpoint coupling under the measurement posterior.

To jointly control the measurement error, local flow prior, and bridge mismatch, we define the Posterior Bridge Defect and its associated joint objective as:
\begin{equation}
\begin{gathered}
\operatorname{PBD}_t(\mathrm{x}_t,\mathrm{y})
:=
\min_{a,b}\mathcal J_t(a,b;\mathrm{x}_t,\mathrm{y}),\\
\begin{aligned}
&\mathcal J_t(a,b;\mathrm{x}_t,\mathrm{y})
=
\underbrace{
\frac{\|\mathrm{H}b-\mathrm{y}\|_2^2}{2\sigma_\mathrm{y}^2}
}_{\text{Measurement Defect}} + \underbrace{
\frac{\rho}{2}\|b-\hat b_t\|_2^2
+
\frac{\lambda}{2}\|a-\hat a_t\|_2^2
}_{\text{Flow-Prior Deviation}}
+
\underbrace{
\frac{\kappa}{2}
\|\mathrm{x}_t-e_t(a,b)\|_2^2
}_{\text{Bridge Residual}}.
\end{aligned}
\end{gathered}
\label{Eq6}
\end{equation}

The Measurement Defect evaluates how well the clean endpoint explains the observation. The Flow-Prior Deviation prevents the corrected endpoints from deviating excessively from the local predictions of the pretrained velocity field. The Bridge Residual measures the mismatch between the corrected endpoint pair and the current state. The parameters $\rho$, $\lambda$, and $\kappa$ control the clean-side prior, source-side prior, and bridge re-coupling strength, respectively.

\subsection{Closed-Form Posterior Bridge Re-Coupling}
For a linear degradation operator $H$, when $\rho>0$, $\lambda>0$, and $\kappa\geq0$, Eq.~\eqref{Eq6} defines a strictly convex quadratic objective in $(a,b)$. By analytically eliminating the source endpoint, we obtain the closed-form minimizer of the joint objective.

\paragraph{Posterior Clean-Side Anchoring.} For a linear degradation operator $\mathrm{H}$, minimizing the joint PBD objective in Eq.~\eqref{Eq6} and analytically eliminating the source endpoint $a$ yields the following closed-form update for the clean endpoint:

\begin{equation}
\begin{gathered}
\bar b_t
=
\hat b_t+
\mathrm{H}^\top
\left(\mathrm{H}\mathrm{H}^\top+\gamma_t \mathrm{I}\right)^{-1}
\left(\mathrm{y}-\mathrm{H}\hat b_t\right),
\end{gathered}
\label{Eq7}
\end{equation}
where $\gamma_t:=\sigma_\mathrm{y}^2\left[\rho+\frac{\kappa\lambda t^2}
{\lambda+\kappa(1-t)^2}\right]$. This update is a noise-aware proximal correction. A smaller $\gamma_t$ enforces a stronger observation constraint, whereas a larger $\gamma_t$ preserves more clean-side structure predicted by the pretrained flow. Unlike an independent data projection, $\gamma_t$ jointly accounts for the clean-side prior and the constraint induced by source-side re-coupling.

\paragraph{Source-Side Re-Coupling.} After obtaining $\bar b_t$, the source endpoint is updated in closed form as:
\begin{equation}
\bar a_t
=
\frac{
\lambda\hat a_t+
\kappa(1-t)\bigl(\mathrm{x}_t-t\bar b_t\bigr)
}{
\lambda+\kappa(1-t)^2
}.
\label{Eq8}
\end{equation}

Eq.~\eqref{Eq8} balances preservation of the source-side flow prior and reduction of the bridge residual. Therefore, source-side re-coupling is not an independent heuristic adjustment of the source endpoint. Instead, it is the exact optimal source update of the joint PBD objective.

\paragraph{Posterior-Informed State Propagation.} The re-coupled endpoint pair defines a posterior-informed local transport direction, and the current state is advanced through an explicit Euler update:
\begin{equation}
\mathrm{\bar v}_t:=\bar b_t-\bar a_t,
\quad
\mathrm{x}_{t+\Delta t}
=
\mathrm{x}_t+\Delta t\,\mathrm{\bar v}_t.
\label{Eq9}
\end{equation}

Eq.~\eqref{Eq9} does not directly project the current state onto a new bridge. Instead, it redefines the local transport direction through the re-coupled endpoints. Measurement information is therefore encoded in the new endpoint pairing and its induced local velocity, rather than being added to the pretrained velocity field as an independent external guidance term.

\subsection{ReBridge-Flow Sampling Algorithm}
The complete sampling procedure is summarized in Algorithm~\ref{Algorithm1}. Given a time schedule $0=t_0<t_1<\cdots<t_K=1$, ReBridge-Flow decodes a local source--clean endpoint pair from the current state at each sampling step. It then performs clean-side anchoring and source-side re-coupling, and advances the state using the posterior-informed direction defined by the re-coupled endpoints.

\begin{figure}[!b]
\centering
\includegraphics[width=0.60\linewidth]{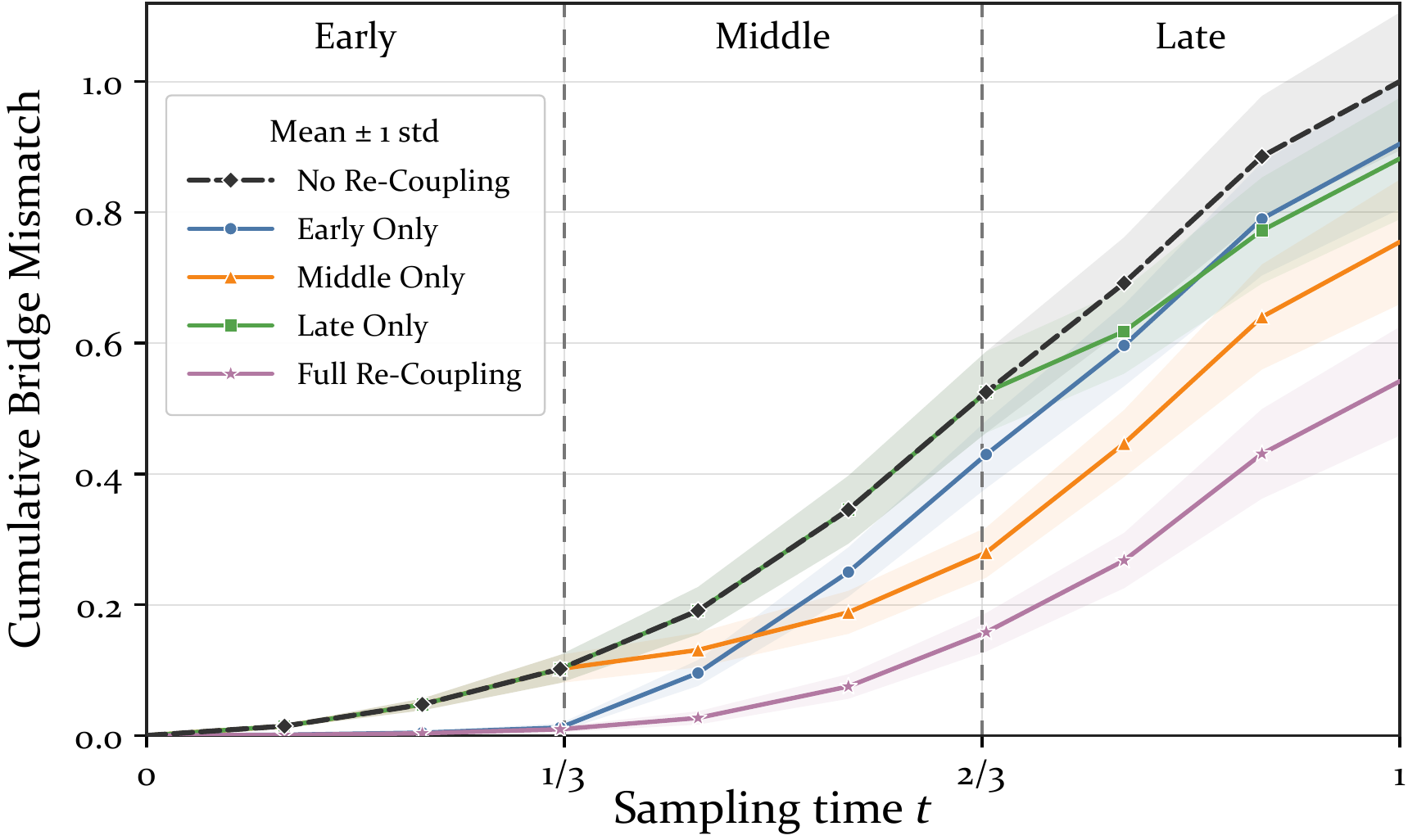}
\caption{Stage-wise analysis of source-side re-coupling.
Only the activation interval of source-side re-coupling is varied.
The curves show the cumulative normalized bridge mismatch over
the sampling trajectory.}
\label{Figure_ADD}
\end{figure}

\subsection{Theoretical Analysis}
We analyze ReBridge-Flow from two perspectives: the unique optimality of the PBD objective and the exact contraction of the bridge residual. Complete proofs are provided in the supplementary material.

\begin{proposition}[Unique Minimizer of the PBD Objective]
\label{Proposition1}
Assume that $\mathrm{H}$ is a linear operator and that $\sigma_\mathrm{y}^2>0$, $\rho>0$, $\lambda>0$, and $\kappa\geq0$. Since the endpoint-prior terms in $\mathcal J_t$ have positive weights and the remaining quadratic terms are positive semidefinite, $\mathcal J_t$ is at least $\mu$-strongly convex, where $\mu:=\min\{\lambda,\rho\}>0$. The endpoint pair obtained from Eqs.~\eqref{Eq7} and Eq.~\eqref{Eq8} satisfies:
\begin{equation}
\begin{gathered}
(\bar a_t,\bar b_t) = \operatorname*{arg\,min}_{a,b} \mathcal J_t(a,b;\mathrm{x}_t,\mathrm{y}),\\
\begin{aligned}
\mathcal J_t(a,b;\mathrm{x}_t,\mathrm{y}) &- \mathcal J_t(\bar a_t,\bar b_t;\mathrm{x}_t,\mathrm{y}) \geq {} \frac{\mu}{2} \left( \|a-\bar a_t\|_2^2+ \|b-\bar b_t\|_2^2 \right).
\end{aligned}
\end{gathered}
\label{Eq10}
\end{equation}
\end{proposition}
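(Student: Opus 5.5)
The plan is to treat $\mathcal J_t$ as an exactly quadratic function of the stacked variable $z=(a,b)\in\mathbb{R}^{2\mathrm{n}}$. For a quadratic with positive definite Hessian, strong convexity, uniqueness of the minimizer, and the growth bound in Eq.~\eqref{Eq10} all follow from a lower bound on the Hessian together with a first-order stationarity check. So the work splits into two parts: (i) bound the Hessian below by $\mu\mathrm{I}$, and (ii) verify that the pair produced by Eqs.~\eqref{Eq7}--\eqref{Eq8} makes the gradient vanish.

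\textbf{Hessian bound.} Writing $e_t(a,b)=(1-t)a+tb$, the Hessian of $\mathcal J_t$ is
\begin{equation*}
\nabla^2\mathcal J_t=
\begin{pmatrix}\lambda\mathrm{I}&0\\0&\rho\mathrm{I}+\sigma_\mathrm{y}^{-2}\mathrm{H}^\top\mathrm{H}\end{pmatrix}
+\kappa\begin{pmatrix}(1-t)^2\mathrm{I}&t(1-t)\mathrm{I}\\t(1-t)\mathrm{I}&t^2\mathrm{I}\end{pmatrix}.
\end{equation*}
The last block equals $\kappa\,(c c^\top)\otimes\mathrm{I}$ with $c=(1-t,t)^\top$, so it is positive semidefinite for $\kappa\ge0$. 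The term $\mathrm{H}^\top\mathrm{H}$ is also positive semidefinite. Hence $\nabla^2\mathcal J_t\succeq\operatorname{diag}(\lambda\mathrm{I},\rho\mathrm{I})\succeq\mu\mathrm{I}$. Because $\mathcal J_t$ is quadratic, its second-order Taylor expansion at any stationary point $z^\star$ is exact:
\begin{equation*}
\mathcal J_t(z)-\mathcal J_t(z^\star)=\tfrac12(z-z^\star)^\top\nabla^2\mathcal J_t\,(z-z^\star)\ge\tfrac{\mu}{2}\|z-z^\star\|_2^2 .
\end{equation*}
This gives Eq.~\eqref{Eq10} and uniqueness at once. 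It therefore remains to show that $z^\star=(\bar a_t,\bar b_t)$.

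\textbf{Stationarity in $a$.} Setting $\nabla_a\mathcal J_t=\lambda(a-\hat a_t)-\kappa(1-t)\bigl(\mathrm{x}_t-(1-t)a-tb\bigr)=0$ and solving for $a$ with $b$ held fixed gives exactly Eq.~\eqref{Eq8}. This step uses $D:=\lambda+\kappa(1-t)^2>0$.

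\textbf{Stationarity in $b$ (expected main obstacle).} Substituting $a=\bar a_t(b)$ back into the bridge residual and using $\mathrm{x}_t-(1-t)\hat a_t=t\hat b_t$, which holds by Eq.~\eqref{Eq3}, I expect the residual to simplify to
\begin{equation*}
\mathrm{x}_t-e_t(\bar a_t(b),b)=\frac{\lambda t}{D}(\hat b_t-b).
\end{equation*}
With this, the $b$-gradient becomes
\begin{equation*}
\sigma_\mathrm{y}^{-2}\mathrm{H}^\top(\mathrm{H}b-\mathrm{y})+\Bigl(\rho+\frac{\kappa\lambda t^2}{D}\Bigr)(b-\hat b_t)=0 .
\end{equation*}
Multiplying by $\sigma_\mathrm{y}^2$ yields $(\mathrm{H}^\top\mathrm{H}+\gamma_t\mathrm{I})(b-\hat b_t)=\mathrm{H}^\top(\mathrm{y}-\mathrm{H}\hat b_t)$, where $\gamma_t>0$ exactly as defined after Eq.~\eqref{Eq7}. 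Finally, the push-through identity $(\mathrm{H}^\top\mathrm{H}+\gamma_t\mathrm{I})^{-1}\mathrm{H}^\top=\mathrm{H}^\top(\mathrm{H}\mathrm{H}^\top+\gamma_t\mathrm{I})^{-1}$ converts this into Eq.~\eqref{Eq7}. The identity is valid since $\gamma_t>0$ makes both inverses exist, and it is verified by multiplying through by $\mathrm{H}^\top\mathrm{H}+\gamma_t\mathrm{I}$ on the left and $\mathrm{H}\mathrm{H}^\top+\gamma_t\mathrm{I}$ on the right.

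The delicate part is the residual simplification in this last step: the algebra of eliminating $a$ is where factors of $t$, $1-t$ and $D$ are easy to mismatch, so I would verify that simplification carefully before assembling the gradient.
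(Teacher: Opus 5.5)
Your proposal is correct and follows essentially the same route as the paper. Both arguments use the Hessian lower bound $\nabla^2\mathcal J_t\succeq\operatorname{diag}(\lambda\mathrm{I},\rho\mathrm{I})\succeq\mu\mathrm{I}$, eliminate $a$ through the residual identity $\mathrm{x}_t-e_t(a^\star(b),b)=\frac{\lambda t}{D_t}(\hat b_t-b)$, solve the same $\gamma_t$-normal equation via push-through, and finish with the exact second-order expansion at the zero-gradient point. The only difference is a harmless streamlining: you check joint stationarity directly by evaluating $\nabla_b\mathcal J_t$ at $a=a^\star(b)$, whereas the paper first forms the reduced objective $\widetilde{\mathcal J}_t(b)$ and uses a nested inequality chain to get global optimality before invoking the expansion.
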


Therefore, Eqs.~\eqref{Eq7} and~\eqref{Eq8} jointly produce the unique global minimizer of $\mathcal J_t$. The clean- and source-side updates should thus be viewed as two components of one joint optimization problem rather than as independently designed corrections.


\begin{algorithm}[!t]
\caption{ReBridge-Flow Sampling Algorithm}
\label{Algorithm1}
\small
\begin{algorithmic}[1]
\Require Degraded Observation $\mathrm{y}$, Measurement Operator $\mathrm{H}$, Parameters $\rho,\lambda,\kappa,\sigma_{\mathrm{y}}$,
Pretrained Velocity Field $\mathrm{v}_{\theta}$, Time Schedule
$0=t_0<t_1<\cdots<t_K=1$, and Time-Dependent Noise-Aware Anchoring Coefficient $\gamma_k$
\Ensure Restored Image $\hat{\mathrm{x}}$

\State Sample the initial state $\mathrm{x}_0\sim p_0$

\For{$k=0,1,\ldots,K-1$}

    \Statex \textit{Step 1: Local Endpoint Decoding}
    \State $\mathrm{v}_k
    \gets
    \mathrm{v}_{\theta}(t_k,\mathrm{x}_k)$
    \State $\hat a_k
    \gets
    \mathrm{x}_k-t_k\mathrm{v}_k$
    \State $\hat b_k
    \gets
    \mathrm{x}_k+(1-t_k)\mathrm{v}_k$

    \Statex \textit{Step 2: Posterior Clean-Side Anchoring}
    \State $\displaystyle
    \bar b_k
    \gets
    \hat b_k+
    \mathrm{H}^{\top}
    \left(
    \mathrm{H}\mathrm{H}^{\top}
    +\gamma_k\mathrm{I}
    \right)^{-1}
    \left(
    \mathrm{y}-\mathrm{H}\hat b_k
    \right)$

    \Statex \textit{Step 3: Source-Side Re-Coupling}
    \State $\displaystyle
    \bar a_k
    \gets
    \frac{
    \lambda\hat a_k+
    \kappa(1-t_k)
    \left(
    \mathrm{x}_k-t_k\bar b_k
    \right)
    }{
    \lambda+\kappa(1-t_k)^2
    }$

    \Statex \textit{Step 4: Posterior-Informed State Propagation}
    \State $\bar{\mathrm{v}}_k
    \gets
    \bar b_k-\bar a_k$
    \State $\Delta t_k
    \gets
    t_{k+1}-t_k$
    \State $\mathrm{x}_{k+1}
    \gets
    \mathrm{x}_k+
    \Delta t_k\,\bar{\mathrm{v}}_k$

\EndFor

\State $\hat{\mathrm{x}}\gets\mathrm{x}_K$
\State \Return $\hat{\mathrm{x}}$
\end{algorithmic}
\end{algorithm}

\begin{proposition}[Exact Contraction of the Bridge Residual]
\label{Proposition2}
Define the residual after clean-side-only correction as $\mathrm{r}_t^{\mathrm{CS}}:=\mathrm{x}_t-e_t(\hat a_t,\bar b_t)$, and define the residual after source-side re-coupling as $\mathrm{r}_t^{\mathrm{RC}}:=\mathrm{x}_t-e_t(\bar a_t,\bar b_t)$. Let $\eta_t:=\lambda/[\lambda+\kappa(1-t)^2]\in(0,1]$. Eq.~\eqref{Eq8} gives:
\begin{equation}
\mathrm{r}_t^{\mathrm{RC}}
=
\eta_t \mathrm{r}_t^{\mathrm{CS}},
\quad
\left\|\mathrm{r}_t^{\mathrm{RC}}\right\|_2
=
\eta_t t\left\|\hat b_t-\bar b_t\right\|_2
\leq
\left\|\mathrm{r}_t^{\mathrm{CS}}\right\|_2.
\label{Eq11}
\end{equation}
\end{proposition}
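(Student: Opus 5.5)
The plan is a direct substitution of Eq.~\eqref{Eq8} into the bridge map $e_t$ from Eq.~\eqref{Eq2}. We never need the explicit form of $\bar b_t$ from Eq.~\eqref{Eq7}. Everything holds for an arbitrary corrected clean endpoint $\bar b_t$, so the identity is purely algebraic in the source-side update. Throughout, write $D:=\lambda+\kappa(1-t)^2$, so that $\eta_t=\lambda/D$.

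First, compute $\mathrm{x}_t-t\bar b_t$ in terms of the pseudo-endpoints. Since $e_t(\hat a_t,\hat b_t)=\mathrm{x}_t$ by Eq.~\eqref{Eq3}, we have
\begin{equation*}
\mathrm{x}_t-t\bar b_t=(1-t)\hat a_t+t(\hat b_t-\bar b_t)=(1-t)\hat a_t+\mathrm{r}_t^{\mathrm{CS}},
\end{equation*}
where the last equality uses Eq.~\eqref{Eq4}. Substituting this into Eq.~\eqref{Eq8} gives
\begin{equation*}
\bar a_t=\frac{D\,\hat a_t+\kappa(1-t)\,\mathrm{r}_t^{\mathrm{CS}}}{D}=\hat a_t+\frac{\kappa(1-t)}{D}\,\mathrm{r}_t^{\mathrm{CS}},
\end{equation*}
because $\lambda+\kappa(1-t)^2=D$ collects the $\hat a_t$ terms.

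Next, evaluate the re-coupled residual. By definition,
\begin{equation*}
\mathrm{r}_t^{\mathrm{RC}}=\mathrm{x}_t-(1-t)\bar a_t-t\bar b_t=\mathrm{r}_t^{\mathrm{CS}}-(1-t)(\bar a_t-\hat a_t).
\end{equation*}
Using the expression for $\bar a_t-\hat a_t$ from the previous step, this equals
\begin{equation*}
\Bigl(1-\frac{\kappa(1-t)^2}{D}\Bigr)\mathrm{r}_t^{\mathrm{CS}}=\frac{\lambda}{D}\,\mathrm{r}_t^{\mathrm{CS}}=\eta_t\,\mathrm{r}_t^{\mathrm{CS}}.
\end{equation*}

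Finally, take norms and use Eq.~\eqref{Eq4} to obtain $\|\mathrm{r}_t^{\mathrm{RC}}\|_2=\eta_t t\|\hat b_t-\bar b_t\|_2$. For the interval $\eta_t\in(0,1]$, note that $\lambda>0$ and $\kappa(1-t)^2\ge0$ give $D\ge\lambda>0$. Hence $0<\eta_t\le1$, which yields $\|\mathrm{r}_t^{\mathrm{RC}}\|_2\le\|\mathrm{r}_t^{\mathrm{CS}}\|_2$. Equality holds exactly when $\kappa=0$, $t=1$, or $\mathrm{r}_t^{\mathrm{CS}}=0$.

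There is no real obstacle; the argument is straightforward bookkeeping. The one step to handle carefully is rewriting $\mathrm{x}_t-t\bar b_t$ via the pseudo-endpoint identity, which is what makes the $\hat a_t$ coefficients combine to exactly $D$.
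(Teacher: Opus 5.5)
Your proof is correct and follows essentially the same route as the paper's. Both use the pseudo-endpoint identity $\mathrm{x}_t=e_t(\hat a_t,\hat b_t)$ to rewrite Eq.~\eqref{Eq8} as an explicit source displacement $\bar a_t-\hat a_t=\frac{\kappa(1-t)}{\lambda+\kappa(1-t)^2}\,t(\hat b_t-\bar b_t)$ (you write it via $\mathrm{r}_t^{\mathrm{CS}}$, which is the same quantity), and then substitute into $\mathrm{r}_t^{\mathrm{RC}}=\mathrm{r}_t^{\mathrm{CS}}-(1-t)(\bar a_t-\hat a_t)$ to obtain the factor $\eta_t$; your equality cases ($\kappa=0$, $t=1$, or $\mathrm{r}_t^{\mathrm{CS}}=0$) also match the paper's case discussion.
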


Eq.~\eqref{Eq11} provides both the exact vector relation and the norm contraction between the residuals before and after re-coupling. When $\kappa>0$, $t\in(0,1)$, and the clean-side correction is nonzero, $\eta_t<1$, and the residual is therefore strictly contracted. Increasing the relative re-coupling strength $\kappa/\lambda$ decreases $\eta_t$ and strengthens local bridge repair. To illustrate this trajectory-level effect, Figure~\ref{Figure_ADD} compares the cumulative bridge mismatch when source-side re-coupling is enabled only in the early, middle, or late stage, or throughout the entire sampling process.
Full re-coupling consistently suppresses the mismatch, while middle-stage re-coupling provides the most pronounced reduction among the partial schedules.




\section{Experiment}
\paragraph{Datasets and Metrics.}
We evaluate ReBridge-Flow on six natural and medical image datasets. The natural image datasets include CelebA~\cite{Celeba}, AFHQ-Cat~\cite{AFHQ}, and COCO~\cite{COCO}, with resolutions of $128\times128$, $256\times256$, and $128\times128$, respectively. The medical image datasets include IXI-Brain~\cite{IXI}, PMUB~\cite{PMUB}, and X-Ray Hand~\cite{Hand1,Hand2}, with all images resized to $256\times256$. We select 100 images from each dataset for testing and use the same test samples for all methods. Following prior work, reconstruction quality is evaluated using Peak Signal-to-Noise Ratio (PSNR), Structural Similarity Index (SSIM), and Learned Perceptual Image Patch Similarity (LPIPS).

\paragraph{Tasks.}
For natural images, we consider five restoration tasks: Gaussian denoising, Gaussian deblurring, super-resolution (SR), random inpainting, and box inpainting. The noise standard deviation for denoising is $\sigma_\mathrm{y}=0.2$, and $70\%$ of pixels are removed for random inpainting. CelebA and COCO use $2\times$ SR and a $40\times40$ centered mask, while AFHQ-Cat uses $4\times$ SR and an $80\times80$ centered mask. For medical images, we evaluate Gaussian denoising, $2\times$ SR, random inpainting, and box inpainting. The denoising noise level is $\sigma_\mathrm{y}=0.08$, and $30\%$ of pixels are removed for random inpainting. IXI-Brain and X-Ray Hand use a $32\times32$ centered mask, while PMUB uses a $60\times60$ centered mask. Additionally, the added noise level is set to $\sigma_\mathrm{y}=0.05$ for deblurring and $\sigma_\mathrm{y}=0.01$ for all other tasks.

\paragraph{Implementation Details.}
For CelebA and AFHQ-Cat, we use the pretrained models from PnP-Flow~\cite{PnPFlow}. For COCO and X-Ray Hand, we use the pretrained models from Restora-Flow~\cite{RestoraFlow}. For IXI-Brain and PMUB, we train separate Flow Matching models with a learning rate of $1\times10^{-4}$, a batch size of 64, and 400 epochs. Model training is performed on a single NVIDIA A6000 48\,GB GPU, while all methods are evaluated on a single RTX 5090 32\,GB GPU. Across all datasets and restoration tasks, we set $\rho=1$, $\lambda=1$, and $\kappa=5$. The number of sampling steps is set to $K=100$, consistent with PnP-Flow.

\subsection{Comparison with State-of-the-Art Methods}
We compare ReBridge-Flow with six advanced flow-based image restoration methods: OT-ODE~\cite{OT-ODE}, Flow-Priors~\cite{FlowPriors}, D-Flow~\cite{D-Flow}, PnP-Flow~\cite{PnPFlow}, Restora-Flow~\cite{RestoraFlow}, and Flower~\cite{Flower}. For CelebA and AFHQ-Cat, we use the configurations provided by the official implementations. For COCO, IXI-Brain, PMUB, and X-Ray Hand, we validate the official parameter settings provided for CelebA and AFHQ-Cat, and report the results obtained with the best-performing configuration on the validation set.

\begin{table}[!t]
\centering
\caption{Quantitative comparison on CelebA (Top), AFHQ-Cat (Middle), and COCO (Bottom) under different restoration tasks. The \colorbox{best1}{best} and \colorbox{best2}{suboptimal} results are highlighted.}
\label{Table1}

\footnotesize 
\renewcommand{\arraystretch}{1.7} 
\setlength{\tabcolsep}{2.2pt} 
\setlength{\aboverulesep}{0pt}
\setlength{\belowrulesep}{0pt}

\resizebox{\textwidth}{!}{
\begin{tabular}{@{}l ccc ccc ccc ccc ccc@{}}
\toprule

\multicolumn{16}{c}{CelebA} \\
\midrule
\multirow{2}{*}{Model}
& \multicolumn{3}{c}{Denoising $\sigma_\mathrm{y}=0.2$}
& \multicolumn{3}{c}{Deblurring}
& \multicolumn{3}{c}{Super Res. $2\times$}
& \multicolumn{3}{c}{Rand. Inpaint. 70\%}
& \multicolumn{3}{c}{Box Inpaint. $40\times40$} \\
\cmidrule(lr){2-4} 
\cmidrule(lr){5-7} 
\cmidrule(lr){8-10} 
\cmidrule(lr){11-13} 
\cmidrule(l){14-16}
& PSNR$\uparrow$ & SSIM$\uparrow$ & LPIPS$\downarrow$
& PSNR$\uparrow$ & SSIM$\uparrow$ & LPIPS$\downarrow$
& PSNR$\uparrow$ & SSIM$\uparrow$ & LPIPS$\downarrow$
& PSNR$\uparrow$ & SSIM$\uparrow$ & LPIPS$\downarrow$
& PSNR$\uparrow$ & SSIM$\uparrow$ & LPIPS$\downarrow$ \\
\midrule
Degraded & 23.38$_{(0.25)}$ & 0.526$_{(0.066)}$ & 0.343$_{(0.133)}$ & 26.88$_{(2.01)}$ & 0.837$_{(0.019)}$ & 0.184$_{(0.091)}$ & 12.73$_{(1.32)}$ & 0.352$_{(0.074)}$ & 0.707$_{(0.250)}$ & 12.64$_{(1.32)}$ & 0.262$_{(0.065)}$ & 0.977$_{(0.250)}$ & 22.77$_{(1.56)}$ & 0.895$_{(0.008)}$ & 0.175$_{(0.060)}$ \\
\midrule

OT-ODE~\textcolor{gray}{\scriptsize [TMLR2024]} & 31.28$_{(1.00)}$ & 0.885$_{(0.024)}$ & 0.036$_{(0.014)}$ & 30.34$_{(1.67)}$ & 0.885$_{(0.019)}$ & 0.051$_{(0.023)}$ & 29.82$_{(1.60)}$ & 0.894$_{(0.026)}$ & 0.059$_{(0.023)}$ & 29.40$_{(1.74)}$ & 0.892$_{(0.026)}$ & 0.055$_{(0.029)}$ & 30.54$_{(3.21)}$ & 0.950$_{(0.016)}$ & 0.023$_{(0.010)}$ \\
Flow-Priors~\textcolor{gray}{\scriptsize [NeurIPS2024]} & 30.65$_{(0.61)}$ & 0.832$_{(0.039)}$ & 0.126$_{(0.059)}$ & 31.40$_{(0.75)}$ & 0.897$_{(0.019)}$ & 0.055$_{(0.024)}$ & 30.56$_{(0.73)}$ & 0.819$_{(0.037)}$ & 0.097$_{(0.041)}$ & 33.91$_{(2.59)}$ & 0.957$_{(0.013)}$ & 0.018$_{(0.008)}$ & 31.53$_{(3.88)}$ & \cellcolor{best2}0.966$_{(0.014)}$ & 0.028$_{(0.010)}$ \\
D-Flow~\textcolor{gray}{\scriptsize [PMLR2024]} & 28.36$_{(4.96)}$ & 0.775$_{(0.161)}$ & 0.083$_{(0.036)}$ & 32.07$_{(1.61)}$ & 0.922$_{(0.054)}$ & 0.048$_{(0.021)}$ & 32.67$_{(1.83)}$ & 0.917$_{(0.032)}$ & 0.033$_{(0.013)}$ & 33.83$_{(2.12)}$ & 0.945$_{(0.020)}$ & 0.021$_{(0.011)}$ & 30.61$_{(2.29)}$ & 0.916$_{(0.025)}$ & 0.042$_{(0.015)}$ \\
PnP-Flow~\textcolor{gray}{\scriptsize [ICLR2025]} & 32.75$_{(1.04)}$ & 0.921$_{(0.022)}$ & 0.060$_{(0.026)}$ & 34.52$_{(1.40)}$ & 0.941$_{(0.007)}$ & 0.046$_{(0.024)}$ & 32.23$_{(1.41)}$ & 0.920$_{(0.024)}$ & 0.063$_{(0.027)}$ & 33.82$_{(2.15)}$ & 0.953$_{(0.011)}$ & 0.022$_{(0.010)}$ & 31.30$_{(2.78)}$ & 0.944$_{(0.016)}$ & 0.045$_{(0.020)}$ \\
Restora-Flow~\textcolor{gray}{\scriptsize [WACV2026]} & 32.91$_{(0.94)}$ & 0.926$_{(0.014)}$ & \cellcolor{best1}0.020$_{(0.008)}$ & 30.75$_{(1.67)}$ & 0.892$_{(0.014)}$ & 0.049$_{(0.027)}$ & 32.37$_{(2.17)}$ & 0.925$_{(0.010)}$ & \cellcolor{best2}0.024$_{(0.009)}$ & \cellcolor{best1}34.08$_{(2.21)}$ & \cellcolor{best2}0.958$_{(0.013)}$ & \cellcolor{best2}0.015$_{(0.007)}$ & 31.77$_{(3.12)}$ & 0.961$_{(0.013)}$ & \cellcolor{best1}0.018$_{(0.008)}$ \\
Flower~\textcolor{gray}{\scriptsize [ICLR2026]} & 32.23$_{(1.64)}$ & 0.912$_{(0.020)}$ & \cellcolor{best2}0.033$_{(0.012)}$ & \cellcolor{best2}34.96$_{(1.62)}$ & \cellcolor{best2}0.947$_{(0.056)}$ & \cellcolor{best2}0.034$_{(0.036)}$ & \cellcolor{best2}33.92$_{(1.92)}$ & \cellcolor{best2}0.948$_{(0.015)}$ & 0.038$_{(0.006)}$ & 33.05$_{(1.85)}$ & 0.944$_{(0.016)}$ & 0.018$_{(0.006)}$ & \cellcolor{best2}31.85$_{(1.97)}$ & 0.965$_{(0.010)}$ & \cellcolor{best1}0.018$_{(0.007)}$ \\
\midrule
ReBridge-Flow~\textcolor{gray}{\scriptsize [Ours]} & \cellcolor{best1}33.33$_{(0.74)}$ & \cellcolor{best1}0.938$_{(0.005)}$ & \cellcolor{best1}0.020$_{(0.008)}$ & \cellcolor{best1}35.68$_{(0.78)}$ & \cellcolor{best1}0.956$_{(0.001)}$ & \cellcolor{best1}0.026$_{(0.012)}$ & \cellcolor{best1}34.51$_{(2.64)}$ & \cellcolor{best1}0.962$_{(0.009)}$ & \cellcolor{best1}0.014$_{(0.005)}$ & \cellcolor{best2}34.06$_{(2.43)}$ & \cellcolor{best1}0.962$_{(0.004)}$ & \cellcolor{best1}0.013$_{(0.007)}$ & \cellcolor{best1}32.53$_{(3.09)}$ & \cellcolor{best1}0.971$_{(0.006)}$ & \cellcolor{best2}0.019$_{(0.007)}$ \\

\midrule[0.8pt]

\multicolumn{16}{c}{AFHQ-Cat} \\
\midrule
\multirow{2}{*}{Model}
& \multicolumn{3}{c}{Denoising $\sigma_\mathrm{y}=0.2$}
& \multicolumn{3}{c}{Deblurring}
& \multicolumn{3}{c}{Super Res. $4\times$}
& \multicolumn{3}{c}{Rand. Inpaint. 70\%}
& \multicolumn{3}{c}{Box Inpaint. $80\times80$} \\
\cmidrule(lr){2-4} 
\cmidrule(lr){5-7} 
\cmidrule(lr){8-10} 
\cmidrule(lr){11-13} 
\cmidrule(l){14-16}
& PSNR$\uparrow$ & SSIM$\uparrow$ & LPIPS$\downarrow$
& PSNR$\uparrow$ & SSIM$\uparrow$ & LPIPS$\downarrow$
& PSNR$\uparrow$ & SSIM$\uparrow$ & LPIPS$\downarrow$
& PSNR$\uparrow$ & SSIM$\uparrow$ & LPIPS$\downarrow$
& PSNR$\uparrow$ & SSIM$\uparrow$ & LPIPS$\downarrow$ \\
\midrule
Degraded & 24.02$_{(0.23)}$ & 0.513$_{(0.080)}$ & 0.465$_{(0.169)}$ & 22.95$_{(1.99)}$ & 0.612$_{(0.091)}$ & 0.511$_{(0.237)}$ & 12.39$_{(1.90)}$ & 0.269$_{(0.080)}$ & 0.868$_{(0.250)}$ & 14.31$_{(1.90)}$ & 0.317$_{(0.087)}$ & 0.950$_{(0.250)}$ & 22.13$_{(2.33)}$ & 0.906$_{(0.010)}$ & 0.127$_{(0.056)}$ \\
\midrule

OT-ODE~\textcolor{gray}{\scriptsize [TMLR2024]} & 31.32$_{(1.34)}$ & 0.916$_{(0.022)}$ & 0.083$_{(0.033)}$ & 24.57$_{(2.09)}$ & 0.631$_{(0.074)}$ & 0.196$_{(0.105)}$ & 26.65$_{(1.71)}$ & 0.781$_{(0.080)}$ & 0.191$_{(0.072)}$ & 30.54$_{(2.05)}$ & 0.872$_{(0.030)}$ & 0.094$_{(0.039)}$ & 25.30$_{(3.07)}$ & 0.905$_{(0.014)}$ & 0.085$_{(0.035)}$ \\
Flow-Priors~\textcolor{gray}{\scriptsize [NeurIPS2024]} & 31.55$_{(0.79)}$ & 0.905$_{(0.027)}$ & 0.081$_{(0.035)}$ & 26.17$_{(2.40)}$ & 0.705$_{(0.063)}$ & \cellcolor{best1}0.185$_{(0.097)}$ & 25.89$_{(1.78)}$ & 0.730$_{(0.040)}$ & 0.186$_{(0.077)}$ & 32.23$_{(2.62)}$ & 0.890$_{(0.017)}$ & 0.068$_{(0.028)}$ & \cellcolor{best2}27.29$_{(3.80)}$ & 0.935$_{(0.015)}$ & 0.058$_{(0.023)}$ \\
D-Flow~\textcolor{gray}{\scriptsize [PMLR2024]} & 30.41$_{(3.14)}$ & 0.892$_{(0.098)}$ & 0.062$_{(0.027)}$ & 24.84$_{(2.10)}$ & 0.635$_{(0.079)}$ & 0.204$_{(0.098)}$ & 25.82$_{(2.83)}$ & 0.721$_{(0.075)}$ & 0.168$_{(0.069)}$ & 33.06$_{(2.73)}$ & 0.908$_{(0.021)}$ & 0.058$_{(0.031)}$ & 26.90$_{(2.99)}$ & 0.931$_{(0.019)}$ & 0.072$_{(0.028)}$ \\
PnP-Flow~\textcolor{gray}{\scriptsize [ICLR2025]} & 32.59$_{(1.31)}$ & 0.913$_{(0.022)}$ & 0.095$_{(0.044)}$ & 27.69$_{(2.47)}$ & 0.755$_{(0.059)}$ & 0.347$_{(0.185)}$ & 26.82$_{(2.47)}$ & 0.772$_{(0.053)}$ & 0.184$_{(0.083)}$ & 33.54$_{(2.77)}$ & \cellcolor{best2}0.922$_{(0.017)}$ & \cellcolor{best1}0.042$_{(0.022)}$ & 27.06$_{(3.64)}$ & 0.917$_{(0.016)}$ & 0.074$_{(0.031)}$ \\
Restora-Flow~\textcolor{gray}{\scriptsize [WACV2026]} & \cellcolor{best2}32.99$_{(1.28)}$ & \cellcolor{best2}0.925$_{(0.012)}$ & \cellcolor{best1}0.052$_{(0.023)}$ & 25.46$_{(1.96)}$ & 0.683$_{(0.063)}$ & 0.234$_{(0.108)}$ & \cellcolor{best2}27.95$_{(2.37)}$ & \cellcolor{best2}0.804$_{(0.056)}$ & \cellcolor{best2}0.162$_{(0.074)}$ & 33.41$_{(2.75)}$ & 0.914$_{(0.021)}$ & 0.055$_{(0.028)}$ & 27.18$_{(3.60)}$ & 0.940$_{(0.015)}$ & \cellcolor{best1}0.048$_{(0.019)}$ \\
Flower~\textcolor{gray}{\scriptsize [ICLR2026]} & 31.66$_{(1.80)}$ & 0.908$_{(0.030)}$ & 0.089$_{(0.030)}$ & 27.77$_{(1.55)}$ & \cellcolor{best2}0.764$_{(0.083)}$ & 0.253$_{(0.042)}$ & 26.31$_{(1.39)}$ & 0.745$_{(0.057)}$ & 0.201$_{(0.067)}$ & \cellcolor{best2}33.83$_{(2.62)}$ & 0.913$_{(0.018)}$ & \cellcolor{best2}0.043$_{(0.007)}$ & 27.01$_{(3.78)}$ & \cellcolor{best2}0.943$_{(0.028)}$ & 0.058$_{(0.019)}$ \\
\midrule
ReBridge-Flow~\textcolor{gray}{\scriptsize [Ours]} & \cellcolor{best1}33.51$_{(1.11)}$ & \cellcolor{best1}0.936$_{(0.008)}$ & \cellcolor{best2}0.058$_{(0.026)}$ & \cellcolor{best1}28.66$_{(2.58)}$ & \cellcolor{best1}0.781$_{(0.012)}$ & \cellcolor{best2}0.190$_{(0.101)}$ & \cellcolor{best1}28.16$_{(2.84)}$ & \cellcolor{best1}0.820$_{(0.030)}$ & \cellcolor{best1}0.119$_{(0.056)}$ & \cellcolor{best1}34.13$_{(2.81)}$ & \cellcolor{best1}0.935$_{(0.008)}$ & \cellcolor{best1}0.042$_{(0.020)}$ & \cellcolor{best1}27.42$_{(3.70)}$ & \cellcolor{best1}0.949$_{(0.008)}$ & \cellcolor{best2}0.054$_{(0.018)}$ \\

\midrule[0.8pt]

\multicolumn{16}{c}{COCO} \\
\midrule
\multirow{2}{*}{Model}
& \multicolumn{3}{c}{Denoising $\sigma_\mathrm{y}=0.2$}
& \multicolumn{3}{c}{Deblurring}
& \multicolumn{3}{c}{Super Res. $2\times$}
& \multicolumn{3}{c}{Rand. Inpaint. 70\%}
& \multicolumn{3}{c}{Box Inpaint. $40\times40$} \\
\cmidrule(lr){2-4} 
\cmidrule(lr){5-7} 
\cmidrule(lr){8-10} 
\cmidrule(lr){11-13} 
\cmidrule(l){14-16}
& PSNR$\uparrow$ & SSIM$\uparrow$ & LPIPS$\downarrow$
& PSNR$\uparrow$ & SSIM$\uparrow$ & LPIPS$\downarrow$
& PSNR$\uparrow$ & SSIM$\uparrow$ & LPIPS$\downarrow$
& PSNR$\uparrow$ & SSIM$\uparrow$ & LPIPS$\downarrow$
& PSNR$\uparrow$ & SSIM$\uparrow$ & LPIPS$\downarrow$ \\
\midrule
Degraded & 20.00$_{(0.26)}$ & 0.442$_{(0.110)}$ & 0.285$_{(0.115)}$ & 24.24$_{(2.11)}$ & 0.748$_{(0.043)}$ & 0.309$_{(0.138)}$ & 12.74$_{(1.98)}$ & 0.241$_{(0.080)}$ & 0.667$_{(0.250)}$ & 13.03$_{(1.99)}$ & 0.258$_{(0.079)}$ & 0.941$_{(0.250)}$ & 22.16$_{(2.63)}$ & 0.904$_{(0.011)}$ & 0.144$_{(0.049)}$ \\
\midrule
OT-ODE~\textcolor{gray}{\scriptsize [TMLR2024]} & 27.53$_{(1.75)}$ & 0.810$_{(0.043)}$ & 0.065$_{(0.028)}$ & 25.91$_{(2.28)}$ & 0.788$_{(0.046)}$ & 0.128$_{(0.064)}$ & 23.79$_{(2.50)}$ & 0.744$_{(0.062)}$ & 0.146$_{(0.067)}$ & 23.97$_{(2.65)}$ & 0.763$_{(0.060)}$ & 0.132$_{(0.064)}$ & 23.37$_{(3.52)}$ & 0.913$_{(0.017)}$ & \cellcolor{best1}0.072$_{(0.029)}$ \\
Flow-Priors~\textcolor{gray}{\scriptsize [NeurIPS2024]} & 27.06$_{(1.13)}$ & 0.750$_{(0.067)}$ & 0.116$_{(0.048)}$ & 27.08$_{(1.22)}$ & 0.769$_{(0.041)}$ & 0.117$_{(0.060)}$ & 24.85$_{(2.18)}$ & 0.699$_{(0.055)}$ & 0.110$_{(0.043)}$ & 25.97$_{(3.33)}$ & 0.855$_{(0.046)}$ & 0.055$_{(0.027)}$ & 23.58$_{(3.45)}$ & 0.927$_{(0.016)}$ & \cellcolor{best2}0.084$_{(0.034)}$ \\
D-Flow~\textcolor{gray}{\scriptsize [PMLR2024]} & 21.11$_{(2.07)}$ & 0.549$_{(0.085)}$ & 0.254$_{(0.099)}$ & 25.57$_{(2.09)}$ & 0.762$_{(0.094)}$ & 0.242$_{(0.127)}$ & 24.82$_{(3.04)}$ & 0.778$_{(0.062)}$ & 0.082$_{(0.034)}$ & 26.32$_{(3.18)}$ & 0.841$_{(0.052)}$ & 0.052$_{(0.025)}$ & 23.40$_{(3.05)}$ & 0.824$_{(0.041)}$ & 0.115$_{(0.044)}$ \\
PnP-Flow~\textcolor{gray}{\scriptsize [ICLR2025]} & 28.98$_{(1.82)}$ & 0.856$_{(0.044)}$ & 0.127$_{(0.050)}$ & 26.32$_{(2.29)}$ & 0.785$_{(0.019)}$ & 0.142$_{(0.071)}$ & 26.73$_{(2.24)}$ & 0.827$_{(0.047)}$ & 0.118$_{(0.048)}$ & 28.14$_{(3.04)}$ & \cellcolor{best2}0.896$_{(0.035)}$ & 0.052$_{(0.024)}$ & 24.57$_{(3.47)}$ & 0.892$_{(0.023)}$ & 0.121$_{(0.044)}$ \\
Restora-Flow~\textcolor{gray}{\scriptsize [WACV2026]} & \cellcolor{best2}30.56$_{(1.82)}$ & \cellcolor{best2}0.905$_{(0.024)}$ & \cellcolor{best1}0.025$_{(0.012)}$ & 25.21$_{(1.91)}$ & 0.747$_{(0.021)}$ & 0.135$_{(0.072)}$ & \cellcolor{best2}27.42$_{(3.13)}$ & \cellcolor{best2}0.877$_{(0.038)}$ & \cellcolor{best1}0.044$_{(0.021)}$ & 27.36$_{(3.12)}$ & 0.881$_{(0.041)}$ & \cellcolor{best2}0.040$_{(0.016)}$ & \cellcolor{best2}24.79$_{(3.57)}$ & \cellcolor{best2}0.930$_{(0.017)}$ & \cellcolor{best2}0.084$_{(0.028)}$ \\
Flower~\textcolor{gray}{\scriptsize [ICLR2026]} & 29.98$_{(1.85)}$ & 0.885$_{(0.034)}$ & 0.059$_{(0.015)}$ & \cellcolor{best2}28.82$_{(0.88)}$ & \cellcolor{best2}0.872$_{(0.068)}$ & \cellcolor{best2}0.112$_{(0.046)}$ & 27.33$_{(1.84)}$ & 0.876$_{(0.030)}$ & 0.051$_{(0.016)}$ & \cellcolor{best2}28.18$_{(2.13)}$ & 0.880$_{(0.030)}$ & 0.046$_{(0.016)}$ & 24.35$_{(4.19)}$ & 0.924$_{(0.035)}$ & 0.097$_{(0.040)}$ \\
\midrule
ReBridge-Flow~\textcolor{gray}{\scriptsize [Ours]} & \cellcolor{best1}30.65$_{(1.53)}$ & \cellcolor{best1}0.906$_{(0.009)}$ & \cellcolor{best2}0.026$_{(0.012)}$ & \cellcolor{best1}29.16$_{(1.12)}$ & \cellcolor{best1}0.909$_{(0.001)}$ & \cellcolor{best1}0.059$_{(0.029)}$ & \cellcolor{best1}27.90$_{(3.18)}$ & \cellcolor{best1}0.892$_{(0.036)}$ & \cellcolor{best2}0.045$_{(0.019)}$ & \cellcolor{best1}28.77$_{(3.32)}$ & \cellcolor{best1}0.910$_{(0.019)}$ & \cellcolor{best1}0.034$_{(0.015)}$ & \cellcolor{best1}26.01$_{(4.00)}$ & \cellcolor{best1}0.932$_{(0.011)}$ & 0.086$_{(0.032)}$ \\

\bottomrule
\end{tabular}
}
\end{table}

\begin{table}[!t]
\centering
\caption{Average quantitative comparison on IXI-Brain, PMUB, and X-Ray Hand datasets. The \colorbox{best1}{best} and \colorbox{best2}{suboptimal} results are highlighted.}
\label{Table2}

\footnotesize 
\renewcommand{\arraystretch}{1.5} 
\setlength{\tabcolsep}{1.4pt} 
\setlength{\aboverulesep}{0pt}
\setlength{\belowrulesep}{0pt}

\resizebox{0.6\columnwidth}{!}{%
\begin{tabular}{@{}l ccc ccc ccc@{}}
\toprule

\multirow{2}{*}{Model} 
& \multicolumn{3}{c}{IXI-Brain Avg.} 
& \multicolumn{3}{c}{PMUB Avg.} 
& \multicolumn{3}{c}{X-Ray Hand Avg.} \\
\cmidrule(lr){2-4} \cmidrule(lr){5-7} \cmidrule(l){8-10}
& PSNR$\uparrow$ & SSIM$\uparrow$ & LPIPS$\downarrow$ 
& PSNR$\uparrow$ & SSIM$\uparrow$ & LPIPS$\downarrow$ 
& PSNR$\uparrow$ & SSIM$\uparrow$ & LPIPS$\downarrow$ \\
\midrule

Degraded & 16.82 & 0.442 & 0.466 & 14.48 & 0.535 & 0.340 & 15.23 & 0.368 & 0.554 \\

\midrule

OT-ODE & 26.99 & 0.832 & 0.072 & 20.84 & 0.829 & 0.057 & 23.63 & 0.688 & 0.090 \\

Flow-Priors & 26.59 & 0.877 & \cellcolor{best2}0.054 & 22.82 & 0.888 & \cellcolor{best2}0.034 & 21.41 & 0.807 & 0.071 \\

D-Flow & 24.44 & 0.767 & 0.137 & 21.53 & 0.826 & 0.049 & 20.96 & 0.739 & 0.083 \\

PnP-Flow & 28.47 & 0.877 & 0.080 & \cellcolor{best2}23.61 & \cellcolor{best2}0.900 & 0.037 & 25.77 & \cellcolor{best2}0.855 & \cellcolor{best2}0.048 \\

Restora-Flow & 27.02 & 0.864 & 0.070 & 21.05 & 0.841 & 0.035 & 23.76 & 0.764 & 0.058 \\

Flower & \cellcolor{best2}28.97 & \cellcolor{best2}0.905 & 0.056 & 22.90 & 0.888 & 0.043 & \cellcolor{best2}26.16 & 0.851 & 0.053 \\

\midrule

ReBridge-Flow & \cellcolor{best1}30.60 & \cellcolor{best1}0.931 & \cellcolor{best1}0.037 & \cellcolor{best1}24.44 & \cellcolor{best1}0.920 & \cellcolor{best1}0.027 & \cellcolor{best1}27.48 & \cellcolor{best1}0.877 & \cellcolor{best1}0.046 \\

\bottomrule
\end{tabular}%
}
\end{table}

\begin{figure}[!t]
\centering
\includegraphics[width=0.98\textwidth]{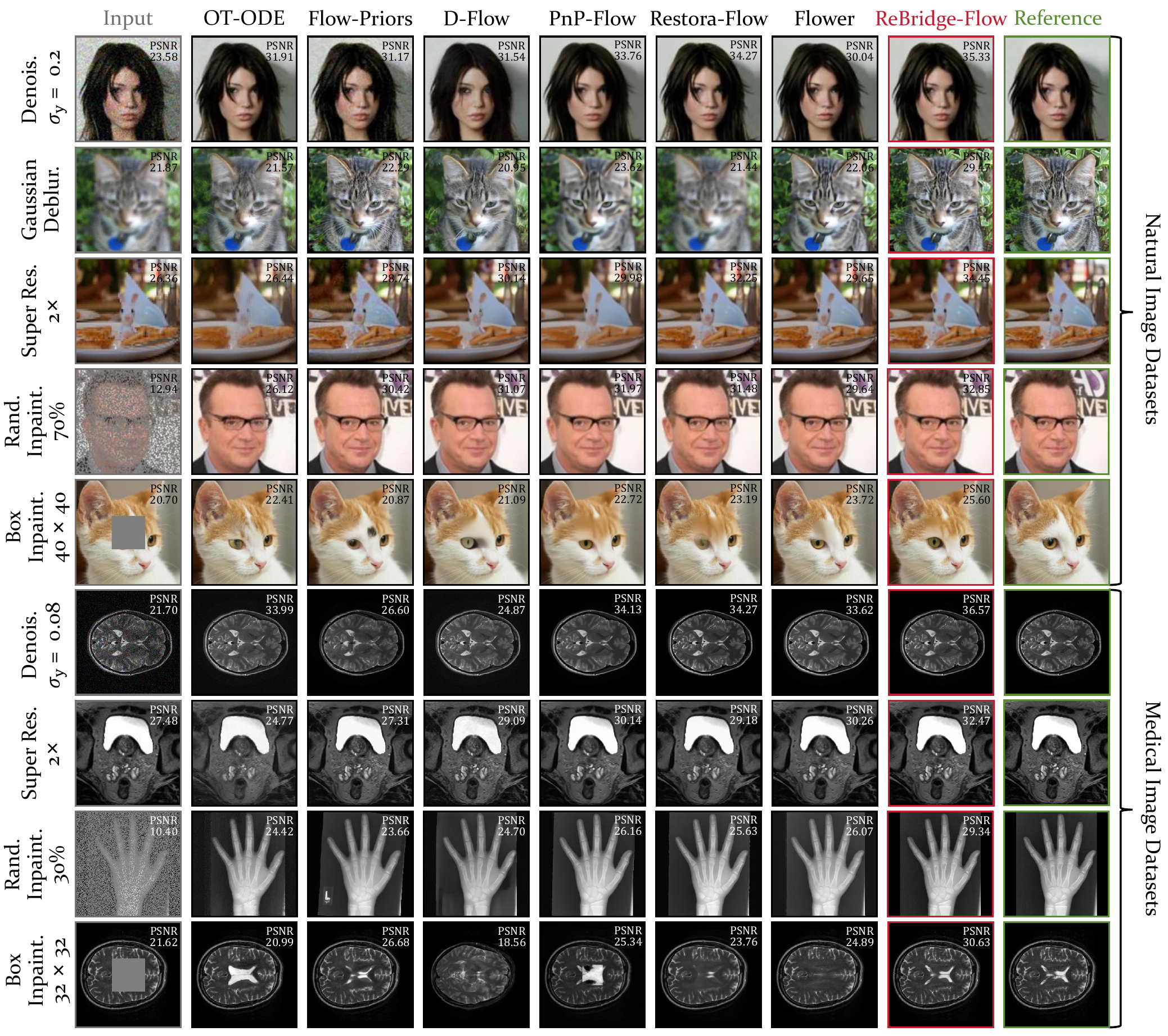} 
\caption{Qualitative comparisons of five image restoration tasks on six natural and medical datasets.}
\label{Figure4}
\end{figure}

To evaluate the restoration capability of ReBridge-Flow across different data domains and degradation settings, we conduct quantitative and qualitative comparisons on six datasets. As shown in Tables~\ref{Table1} and~\ref{Table2}, ReBridge-Flow achieves consistently competitive performance across various tasks. Compared with OT-ODE, Flow-Priors, and D-Flow, ReBridge-Flow exhibits more consistent performance across tasks and better balances reconstruction accuracy and perceptual quality. It also generally achieves higher PSNR and SSIM than PnP-Flow, Restora-Flow, and Flower. This trend is more pronounced on medical images, confirming that ReBridge-Flow preserves critical structures more reliably. Furthermore, the qualitative results in Figure~\ref{Figure4} show that baselines may retain residual noise or produce over-smoothed results in denoising and deblurring, indicating that local measurement correction struggles to balance data consistency and the flow prior. In contrast, ReBridge-Flow more effectively removes degradation while preserving sharp textures and edges. In SR and random inpainting, some methods recover plausible global content but suffer from blurred boundaries or local structural shifts. ReBridge-Flow better preserves contours and structures while recovering more natural local details. On medical images, it also reconstructs tissue boundaries with greater continuity and completeness, while producing fewer artifacts that alter the original morphology. These results demonstrate that posterior bridge re-coupling preserves local bridge compatibility while incorporating measurement information, thereby suppressing error propagation and improving reconstruction reliability.

\begin{table}[!t]
\centering
\caption{Ablation results for different endpoint-handling strategies. The \colorbox{best1}{best} and \colorbox{best2}{suboptimal} results are highlighted.}
\label{Table3}

\footnotesize 
\renewcommand{\arraystretch}{1.6} 
\setlength{\tabcolsep}{3pt} 
\setlength{\aboverulesep}{0pt}
\setlength{\belowrulesep}{0pt}

\resizebox{0.8\columnwidth}{!}{
\begin{tabular}{@{}l ccc ccc@{}}
\toprule
\multirow{2}{*}{Method} 
& \multicolumn{3}{c}{CelebA | Rand. Inpaint. 70\%} 
& \multicolumn{3}{c}{AFHQ-Cat | Super Res. $4\times$} \\
\cmidrule(lr){2-4} \cmidrule(l){5-7}
& PSNR$\uparrow$ & SSIM$\uparrow$ & LPIPS$\downarrow$ 
& PSNR$\uparrow$ & SSIM$\uparrow$ & LPIPS$\downarrow$ \\
\midrule
Flow Prior Only      & 29.80 & 0.884 & 0.083 & 27.04 & 0.782 & 0.150 \\
Clean-Side Anchoring Only ($a = \hat{a}_t$)  
& 28.89 & 0.891 & 0.059 & \cellcolor{best2}27.26 & \cellcolor{best2}0.793 & \cellcolor{best2}0.138 \\
Hard Re-Coupling ($\lambda = 0$)   
& \cellcolor{best2}29.83 & \cellcolor{best2}0.906 & \cellcolor{best2}0.028 & 27.19 & 0.786 & 0.143 \\
No Clean-Side Prior ($\rho = 0$)        
& 14.63 & 0.645 & 0.363 & 11.90 & 0.225 & 0.898 \\
\midrule
ReBridge-Flow   & \cellcolor{best1}34.06 & \cellcolor{best1}0.962 & \cellcolor{best1}0.013 & \cellcolor{best1}28.16 & \cellcolor{best1}0.820 & \cellcolor{best1}0.119 \\
\bottomrule
\end{tabular}%
}
\end{table}

\subsection{Ablation Study}
To compare different endpoint-handling strategies, we conduct ablation experiments on CelebA with Random Inpainting 70\% and AFHQ-Cat with $4\times$ SR. As shown in Table~\ref{Table3}, using the flow prior alone cannot incorporate the observation and therefore yields limited restoration performance. Clean-side anchoring ($a=\hat{a}_t$) improves measurement consistency, but keeping the source endpoint fixed still breaks the local bridge compatibility between the endpoint pair and the current state. Hard Re-Coupling ($\lambda=0$) further reduces the bridge residual, but becomes less stable because the source-side prior is removed. No Clean-Side Prior ($\rho=0$) causes a substantial performance drop, demonstrating that this constraint is essential for preventing measurement corrections from deviating from the local flow prior. The full ReBridge-Flow achieves the best performance on both tasks, indicating that effective restoration requires preserving the flow priors at both endpoints while jointly correcting their coupling.

\subsection{Hyperparameter Sensitivity Analysis}
To analyze the effects of the source-side prior weight $\lambda$ and the clean-side prior weight $\rho$, we fix $\kappa=5$ and vary $\rho$ and $\lambda$. As shown in Figure~\ref{Figure5}, ReBridge-Flow remains stable over a moderate parameter range and achieves the best performance at $\rho=\lambda=1$, with a PSNR of 28.16 and an LPIPS of 0.119. A large $\lambda$ overly restricts source-endpoint adaptation and weakens bridge re-coupling, whereas a small $\lambda$ fails to preserve the source-side flow prior. Similarly, extreme values of $\rho$ disrupt the balance between measurement consistency and the clean-side prior.

\begin{figure}[!t]
\centering
\includegraphics[width=0.8\linewidth]{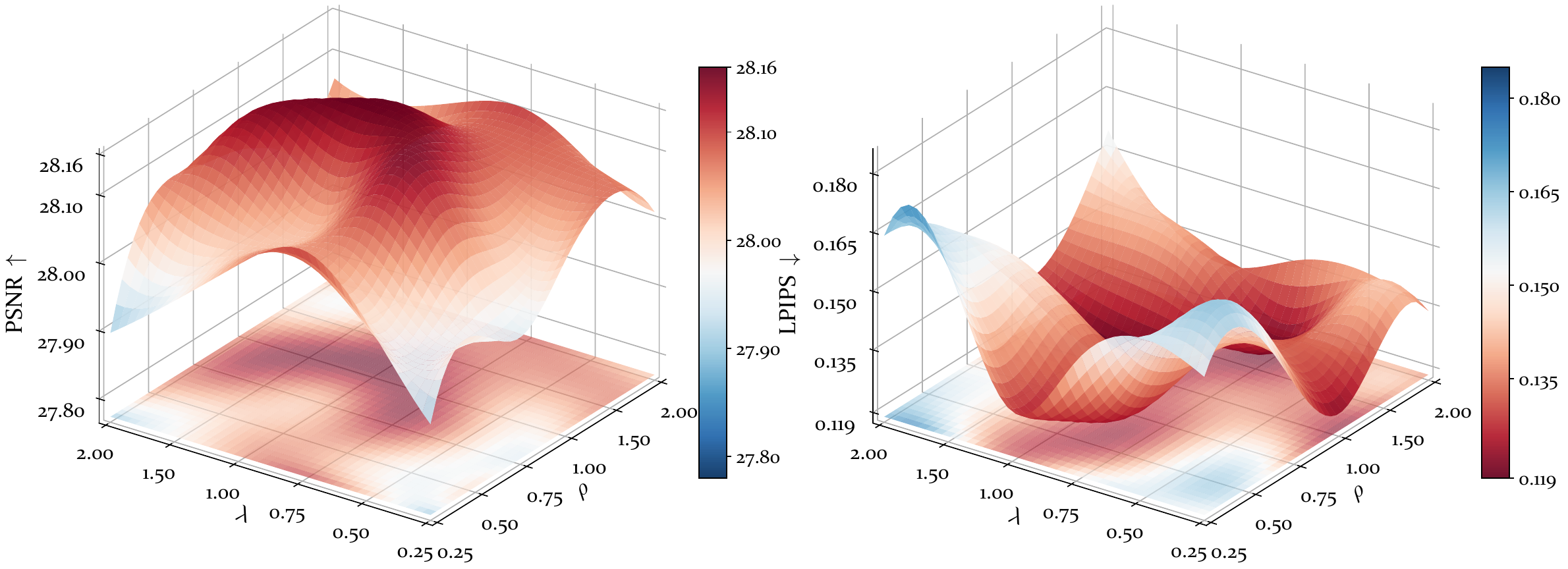} 
\caption{Parameter sensitivity analysis on $\lambda$ and $\rho$ of ReBridge-Flow.}
\label{Figure5}
\end{figure}

\subsection{Computational Efficiency}
Table~\ref{Table4} compares the restoration performance and computational cost of different methods on the CelebA deblurring task. ReBridge-Flow achieves the best results across all three quantitative metrics, with an average inference time of 6.75s and a GPU memory footprint of only 0.79~GB. Compared with most baselines that rely on iterative optimization or backpropagation, ReBridge-Flow runs faster and requires less GPU memory. Although Restora-Flow has a shorter inference time and OT-ODE uses slightly less memory, both deliver substantially lower restoration performance than ReBridge-Flow. These results demonstrate that the closed-form endpoint re-coupling mechanism achieves a favorable balance between restoration quality and computational efficiency without additional iterative optimization.

\begin{table}[!t]
\centering
\caption{Comparison of deblurring performance and computational efficiency across different methods. The \colorbox{best1}{best} results are highlighted.}
\label{Table4}

\footnotesize 
\renewcommand{\arraystretch}{1.4} 
\setlength{\tabcolsep}{4pt} 
\setlength{\aboverulesep}{0pt}
\setlength{\belowrulesep}{0pt}

\resizebox{0.90\columnwidth}{!}{%
\begin{tabular}{lccccccc}
\toprule
Method & OT-ODE & Flow-Priors & D-Flow & PnP-Flow & Restora-Flow & Flower & ReBridge-Flow \\
\midrule
PSNR$\uparrow$
& 30.34
& 31.40
& 32.07
& 34.52
& 30.75
& 34.96
& \cellcolor{best1}35.68 \\

SSIM$\uparrow$
& 0.885
& 0.897
& 0.922
& 0.941
& 0.892
& 0.947
& \cellcolor{best1}0.956 \\

LPIPS$\downarrow$
& 0.051
& 0.055
& 0.048
& 0.046
& 0.049
& 0.034
& \cellcolor{best1}0.026 \\

Avg Time (s)$\downarrow$
& 7.91
& 46.07
& 125.01
& 8.86
& \cellcolor{best1}3.41
& 11.46
& 6.75 \\

Memory (GB)$\downarrow$
& \cellcolor{best1}0.70
& 4.62
& 6.30
& 6.30
& 0.79
& 6.32
& 0.79 \\
\bottomrule
\end{tabular}%
}
\end{table}
\section{Discussion}
At the level of the resulting state update, both ReBridge-Flow and velocity-guidance methods modify the local transport direction. ReBridge-Flow may therefore appear to be merely a stronger form of measurement guidance. However, they incorporate observation corrections in different ways. OT-ODE~\cite{OT-ODE} directly adds the measurement gradient to the ODE velocity, whereas Flower~\cite{Flower} uses the observation to correct the clean endpoint while leaving the source endpoint determined by the original local flow prediction. Given the same corrected clean endpoint $\bar b_t$, a clean-side-only correction produces the direction $\bar b_t-\hat a_t$. ReBridge-Flow instead synchronously updates the source endpoint through the joint PBD objective and propagates with $\bar b_t-\bar a_t$. Re-coupling therefore does not simply amplify the measurement residual. Instead, it controls how the observation correction is allocated across the two endpoints while preserving the local flow priors on both the source and clean sides. Simply increasing the strength of external guidance may further reduce the measurement error, but it does not directly resolve the inconsistency between the corrected endpoint pair and the current state.

The bridge residual is neither equivalent to the final reconstruction error nor a substitute for a measurement-consistency metric. It indicates whether the corrected endpoint pair can still explain the current state. Reducing this residual alone does not guarantee monotonic improvements in PSNR or SSIM at every step. It can, however, help prevent subsequent transport directions from being repeatedly constructed from an inconsistent local endpoint pair. In the PBD objective, the Measurement Defect, Flow-Prior Deviation, and Bridge Residual constrain measurement consistency, preservation of the generative prior, and local propagation consistency, respectively. They therefore serve distinct roles. The importance of re-coupling also varies with sampling time. At early times, the effect of clean-endpoint correction on the bridge residual is limited by the coefficient $t$. Near the terminal time, the interpolation weight $1-t$ of the source endpoint becomes small, which reduces its ability to compensate for the residual. At intermediate times, both endpoints have non-negligible interpolation weights. Re-coupling therefore typically produces a more pronounced residual reduction, consistent with the stage-wise experimental results. Similar to Flow-Priors~\cite{FlowPriors} and Restora-Flow~\cite{RestoraFlow}, which emphasize local-trajectory or intermediate-state consistency, ReBridge-Flow constrains local propagation errors as they arise rather than replacing the final data-consistency objective with the bridge residual.

\paragraph{Limitations.}
When the degradation operator is known and linear, and the pretrained flow model is reasonably matched to the target image domain, ReBridge-Flow can coordinate measurement constraints and endpoint priors in closed form and achieve stable performance across multiple restoration tasks. However, its current scope remains primarily limited to known linear degradations and assumes that the measurement-noise level can be reasonably estimated. For general large-scale operators, solving the linear system required by clean-side anchoring may also introduce additional computational cost. Meanwhile, the quality of the local endpoints depends on the pretrained velocity field. When test images deviate substantially from the training distribution or the observation loses too much information, the restored results may still be affected by biases in the generative prior. The current theoretical results guarantee local PBD optimality and bridge-residual contraction, but they do not directly guarantee a monotonic decrease in the global reconstruction error. Moreover, the medical-image experiments use known synthetic degradations. These results are intended mainly to validate restoration performance and cannot replace systematic evaluation under real clinical degradations.

\paragraph{Future Work.}
In future work, we will extend ReBridge-Flow to nonlinear and unknown degradations, for example by deriving corresponding endpoint updates through local linearization or joint estimation of the degradation operator. We will also investigate adaptive adjustment of $\rho$, $\lambda$, and $\kappa$ according to the sampling stage, measurement residual, and bridge residual. This may reduce manual parameter selection across different data domains and degradation levels.

\section{Conclusion}
We proposed ReBridge-Flow, which reformulated FMBIR as a measurement-conditioned posterior bridge re-coupling problem. To address the disruption of source-clean endpoint pairing caused by local measurement correction, we constructed a measurement-consistent and locally bridge-compatible endpoint pair through clean-side anchoring and source-side re-coupling, and used it to define a posterior-informed transport direction. The Posterior Bridge Defect further unified measurement error, deviation from the flow prior, and bridge residual, yielding closed-form updates. Extensive experiments demonstrated that ReBridge-Flow effectively alleviated bridge mismatch and achieved stable performance across diverse restoration tasks. In the future, we will extend the method to image restoration under nonlinear and unknown degradations.


\bibliography{References}

\appendix
\clearpage
\section{More Related Work}
Existing generative image restoration methods differ not only in the generative priors they employ, but also in where measurement information is injected into the generative process and whether the resulting local correction remains consistent with the transport structure of the pretrained model. Diffusion-based posterior sampling methods typically convert the measurement likelihood into stepwise corrections to the score or denoising direction. For example, DPS~\cite{DPS}, $\Pi$GDM~\cite{PiGDM}, SITCOM~\cite{SITCOM}, SPGD~\cite{SPGD}, PIRP~\cite{PIRP}, DPG~\cite{DPG}, LEADer~\cite{LEADer}, and~\cite{DDGF} guide reverse sampling through measurement-residual gradients, pseudoinverse operators, or controlled gradient updates. These methods can handle various degradation tasks. However, the measurement constraints mainly act on local estimates around the current noisy state and usually require repeated gradient or Jacobian evaluations throughout the sampling chain. Another class of methods formulates data consistency as an explicit optimization problem or a proximal subproblem. RED-Diff~\cite{RED-Diff} and DiffPIR~\cite{DiffPIR} coordinate the generative prior and measurement constraints through variational regularization and half-quadratic splitting, respectively. DDRM~\cite{DDRM}, DDNM~\cite{DDNM}, and EquS~\cite{EquS} instead construct structured data-consistency updates using singular value decomposition, range--null-space decomposition, or equivariance constraints. These methods impose more direct constraints on the restoration result, but their update variables are usually still the current state or the current clean-image estimate. The relationships among different local variables along the generative trajectory are not explicitly modeled. In Flow Matching, image restoration is formulated as continuous transport from a source distribution to a data distribution. This formulation allows measurement constraints to act on the velocity field, trajectory state, or endpoint variables. OT-ODE~\cite{OT-ODE} directly modifies the ODE velocity, Flow-Priors~\cite{FlowPriors} decomposes the global inverse problem into local trajectory optimization problems, PnP-Flow~\cite{PnPFlow} alternates between data-consistency and flow-prior mappings, and Restora-Flow~\cite{RestoraFlow} constrains intermediate states through mask guidance and trajectory correction. Although these methods use different solution strategies, they mainly treat measurement injection as an intervention on a single local variable, while the interpolation relation among the source endpoint, clean endpoint, and current state remains implicit. Endpoint-based methods make more explicit use of the transport structure of Flow Matching. D-Flow~\cite{D-Flow} optimizes the source endpoint by backpropagating through the full Flow ODE. This enables control over the final result from the initial condition, but requires gradient computation along the entire generative trajectory. Flower~\cite{Flower} estimates a flow-consistent clean endpoint from the current state and then corrects it using the measurement, thereby avoiding backpropagation through the full trajectory. Prior work has therefore progressed from velocity guidance and state projection to one-sided endpoint correction. However, it has not directly addressed whether the source and clean endpoints can still jointly explain the current state after measurement correction. Building on this line of work, ReBridge-Flow treats the two endpoints as joint variables of the same local bridge. It explicitly re-couples the measurement-conditioned source--clean endpoint pair by jointly constraining the measurement error, endpoint deviations from the flow prior, and the bridge residual between the endpoint pair and the current state.

\section{Symbols Table}
We summarize the main notations used in this paper and their meanings in Table~\ref{Symbols}.
\begingroup 
\small 
\renewcommand{\arraystretch}{1.15}
\setlength{\tabcolsep}{6pt}
\setlength{\aboverulesep}{0pt}
\setlength{\belowrulesep}{0pt}

\begin{longtable}{@{}p{0.30\linewidth}p{0.60\linewidth}@{}}
\caption{Symbols Table.}
\label{Symbols} \\
\toprule
Symbol & Description \\
\midrule
\endfirsthead

\multicolumn{2}{c}{{\tablename\ \thetable{} -- continued from previous page}} \\
\toprule
Symbol & Description \\
\midrule
\endhead

\midrule
\multicolumn{2}{r}{{Continued on next page}} \\
\endfoot

\bottomrule
\endlastfoot

\multicolumn{2}{@{}l}{\textit{Observation Model and Degradation Process}} \\[2pt]
$\mathrm{x}\in\mathbb{R}^{\mathrm{n}}$ & Clean image to be recovered \\
$\mathrm{y}\in\mathbb{R}^{\mathrm{m}}$ & Degraded observation \\
$\mathrm{n},\mathrm{m}$ & Dimensions of the image and observation \\
$\mathrm{H}\in\mathbb{R}^{\mathrm{m}\times\mathrm{n}}$ & Known linear degradation operator \\
$\mathrm{H}^{\top}$ & Transpose of $\mathrm{H}$ \\
$\epsilon$ & Measurement noise \\
$\sigma_\mathrm{y}$ & Measurement-noise standard deviation \\
$\mathrm{I}$ & Identity operator \\
${\mathrm{N}}(0,\sigma_\mathrm{y}^{2}\mathrm{I})$ & Zero-mean Gaussian noise distribution \\

\midrule
\multicolumn{2}{@{}l}{\textit{Flow Matching and Local Endpoint Decoding}} \\[2pt]
$t\in[0,1]$ & Continuous sampling time \\
$a,b$ & Source and clean endpoints \\
$e_t(a,b)$ & Linear interpolation map between endpoints \\
$\mathrm{x}_t$ & Intermediate state at time $t$ \\
$\mathrm{v}_\theta(t,\mathrm{x}_t)$ & Pretrained velocity field \\
$\theta$ & Parameters of the velocity field \\
$\hat a_t,\hat b_t$ & Decoded local source and clean endpoints \\
$\pi(a,b)$ & Unconditional endpoint coupling \\
$\mathrm{P}_t=(e_t)_{\#}\pi$ & Intermediate distribution induced by $\pi$ \\
$(e_t)_{\#}$ & Pushforward operator associated with $e_t$ \\

\midrule
\multicolumn{2}{@{}l}{\textit{Posterior Bridge Defect and Endpoint Re-Coupling}} \\[2pt]
$\mathrm{p}(\mathrm{y}\mid b)$ & Observation likelihood conditioned on $b$ \\
$\pi^\mathrm{y}$ & Measurement-conditioned endpoint coupling \\
$\mathrm{P}_t^\mathrm{y}$ & Measurement-conditioned probability path \\
$\mathrm{Z_y}$ & Normalization constant of $\pi^\mathrm{y}$ \\
$\mathcal J_t(a,b;\mathrm{x}_t,\mathrm{y})$ & Joint Posterior Bridge Defect objective \\
$\operatorname{PBD}_t(\mathrm{x}_t,\mathrm{y})$ & Minimum value of the joint objective \\
$\rho,\lambda,\kappa$ & Clean-side prior, source-side prior, and re-coupling weights \\
$\gamma_t$ & Noise-aware clean-side anchoring coefficient \\
$\bar b_t$ & Measurement-aware clean endpoint \\
$\bar a_t$ & Re-coupled source endpoint \\
$\mathrm{\bar v}_t$ & Posterior-informed direction from the re-coupled endpoints \\
$\mathrm{r}_t^{\mathrm{CS}}$ & Bridge residual after clean-side-only correction \\
$\mathrm{r}_t^{\mathrm{RC}}$ & Bridge residual after source-side re-coupling \\
$\eta_t$ & Bridge-residual contraction factor \\
$\mu=\min\{\lambda,\rho\}$ & Strong-convexity lower bound of the PBD objective \\

\midrule
\multicolumn{2}{@{}l}{\textit{Discrete Sampling Process}} \\[2pt]
$K$ & Total number of sampling steps \\
$k$ & Sampling-step index \\
$t_k$ & Sampling time at step $k$ \\
$\Delta t_k=t_{k+1}-t_k$ & Time increment at step $k$ \\
$p_0$ & Source distribution of Flow Matching \\
$\mathrm{x}_k$ & Sampling state at step $k$ \\
$\mathrm{v}_k$ & Pretrained velocity at step $k$ \\
$\hat a_k,\hat b_k$ & Decoded local endpoints at step $k$ \\
$\bar a_k,\bar b_k$ & Re-coupled endpoints at step $k$ \\
$\bar{\mathrm{v}}_k$ & Posterior-informed direction at step $k$ \\
$\hat{\mathrm{x}}$ & Final restored image \\

\midrule
\multicolumn{2}{@{}l}{\textit{Mathematical Operations and Evaluation Metrics}} \\[2pt]
$\|\cdot\|_2$ & Euclidean $\ell_2$ norm \\
PSNR & Peak signal-to-noise ratio \\
SSIM & Structural similarity index \\
LPIPS & Learned perceptual image patch similarity \\

\end{longtable}
\endgroup

\section{Additional Preliminaries and Standing Identities}

\subsection{Linear Flow Matching Bridge}

For a source endpoint $a$ and a clean endpoint $b$, we use the linear Flow Matching bridge~\cite{DBLP:conf/iclr/LipmanCBNL23, DBLP:conf/iclr/LiuG023, AgentTraces}:
\begin{equation}
e_t(a,b)=(1-t)a+tb.
\label{EqSuppLinearBridge}
\end{equation}

This interpolation satisfies the endpoint conditions $e_0(a,b)=a$ and $e_1(a,b)=b$. Differentiating it with respect to time gives:
\begin{equation}
\frac{\partial e_t(a,b)}{\partial t}=b-a.
\label{EqSuppBridgeVelocity}
\end{equation}

Therefore, under the linear bridge, the endpoint difference $b-a$ gives the constant conditional velocity from the source endpoint to the clean endpoint. In the following, we use the pretrained velocity field at the current state to predict this local endpoint difference, from which we decode a local endpoint representation compatible with the current state.

\subsection{Local Pseudo-Endpoint Identities}

Given the current state $\mathrm{x}_t$ and the pretrained velocity field $\mathrm{v}_\theta(t,\mathrm{x}_t)$, the local source and clean endpoints are defined as:
\begin{equation}
\hat a_t=\mathrm{x}_t-t\,\mathrm{v}_\theta(t,\mathrm{x}_t),
\quad
\hat b_t=\mathrm{x}_t+(1-t)\mathrm{v}_\theta(t,\mathrm{x}_t).
\label{EqSuppPseudoEndpoints}
\end{equation}

We next verify two basic identities satisfied by this endpoint pair. First, substituting Eq.~\eqref{EqSuppPseudoEndpoints} into the linear bridge gives:
\begin{align}
e_t(\hat a_t,\hat b_t)
&=(1-t)\hat a_t+t\hat b_t \notag\\
&=(1-t)\left[\mathrm{x}_t-t\,\mathrm{v}_\theta(t,\mathrm{x}_t)\right]
+t\left[\mathrm{x}_t+(1-t)\mathrm{v}_\theta(t,\mathrm{x}_t)\right] \notag\\
&=(1-t)\mathrm{x}_t+t\mathrm{x}_t \notag\\
&=\mathrm{x}_t.
\label{EqSuppPseudoInterpolation}
\end{align}

Second, the difference between the two local endpoints is:
\begin{align}
\hat b_t-\hat a_t
&=\mathrm{x}_t+(1-t)\mathrm{v}_\theta(t,\mathrm{x}_t)
-\left[\mathrm{x}_t-t\,\mathrm{v}_\theta(t,\mathrm{x}_t)\right] \notag\\
&=\mathrm{v}_\theta(t,\mathrm{x}_t).
\label{EqSuppPseudoVelocity}
\end{align}

Therefore, $(\hat a_t,\hat b_t)$ satisfies both:
\begin{equation}
e_t(\hat a_t,\hat b_t)=\mathrm{x}_t,
\quad
\hat b_t-\hat a_t=\mathrm{v}_\theta(t,\mathrm{x}_t).
\label{EqSuppTwoIdentities}
\end{equation}

The first identity shows that this endpoint pair exactly reconstructs the current state through linear interpolation. The second shows that the endpoint difference equals the current pretrained velocity. Since this pair is decoded locally at each current state and is not required to share fixed endpoints with a single global sample trajectory, we refer to it as a pair of local pseudo-endpoints.

\subsection{Clean-Side-Only Bridge Residual}

Consider the case in which measurement correction is applied only to the clean endpoint. Specifically, $\hat b_t$ is corrected to $\bar b_t$, while the source endpoint remains $\hat a_t$. The corrected endpoint pair generally no longer interpolates exactly to the current state. We define the resulting bridge residual as:
\begin{equation}
\mathrm{r}_t^{\mathrm{CS}}
:=
\mathrm{x}_t-e_t(\hat a_t,\bar b_t).
\label{EqSuppCSDefinition}
\end{equation}

Using Eq.~\eqref{EqSuppPseudoInterpolation}, this residual can be expanded directly as:
\begin{align}
\mathrm{r}_t^{\mathrm{CS}}
&=e_t(\hat a_t,\hat b_t)-e_t(\hat a_t,\bar b_t) \notag\\
&=\left[(1-t)\hat a_t+t\hat b_t\right]
-\left[(1-t)\hat a_t+t\bar b_t\right] \notag\\
&=t(\hat b_t-\bar b_t).
\label{EqSuppCSResidual}
\end{align}

Since $t\in[0,1]$, taking the $\ell_2$-norm of Eq.~\eqref{EqSuppCSResidual} gives:
\begin{equation}
\left\|\mathrm{r}_t^{\mathrm{CS}}\right\|_2
=
t\left\|\hat b_t-\bar b_t\right\|_2.
\label{EqSuppCSResidualNorm}
\end{equation}

This result shows that the current state $\mathrm{x}_t$ itself is not changed by clean-side anchoring. Instead, the inconsistency arises from the corrected endpoint representation. When $\bar b_t\neq\hat b_t$ and $t>0$, correcting only the clean endpoint introduces a nonzero bridge residual. For clean-endpoint corrections of the same magnitude, the residual grows linearly with the current time $t$.

\subsection{Measurement-Conditioned Endpoint Coupling}

Let $\pi(a,b)$ denote the unconditional source--clean endpoint coupling induced by the pretrained Flow Matching model. The intermediate distribution obtained by pushing this coupling forward through the linear map $e_t$ is:
\begin{equation}
\mathrm{P}_t=(e_t)_{\#}\pi.
\label{EqSuppUnconditionalPath}
\end{equation}

Under the linear observation model considered in this work, the observation depends only on the clean endpoint $b$. The corresponding Gaussian likelihood satisfies:
\begin{equation}
\mathrm{p}(\mathrm{y}\mid a,b)
=
\mathrm{p}(\mathrm{y}\mid b)
\propto
\exp\left(
-\frac{\|\mathrm{H}b-\mathrm{y}\|_2^2}{2\sigma_\mathrm{y}^2}
\right).
\label{EqSuppLikelihood}
\end{equation}

Using this likelihood to perform Bayesian reweighting of the unconditional endpoint coupling gives the measurement-conditioned endpoint coupling:
\begin{equation}
\mathrm d\pi^\mathrm{y}(a,b)
=
\frac{\mathrm{p}(\mathrm{y}\mid b)}{\mathrm{Z_y}}\,
\mathrm d\pi(a,b).
\label{EqSuppPosteriorCoupling}
\end{equation}
The normalization constant is:
\begin{equation}
\mathrm{Z_y}
:=
\int
\mathrm{p}(\mathrm{y}\mid b)\,
\mathrm d\pi(a,b).
\label{EqSuppNormalization}
\end{equation}

The corresponding measurement-conditioned probability path is:
\begin{equation}
\mathrm{P}_t^\mathrm{y}
=
(e_t)_{\#}\pi^\mathrm{y}.
\label{EqSuppPosteriorPath}
\end{equation}

Although the likelihood term explicitly depends only on the clean endpoint $b$, posterior reweighting acts on the joint endpoint pair $(a,b)$. The observation therefore changes not only the relative weights of feasible clean endpoints, but also their posterior pairing with source endpoints. ReBridge-Flow uses the joint PBD objective to construct a measurement-aware endpoint pair around the current state. This jointly accounts for observation consistency, the local flow prior, and endpoint--state bridge compatibility.

\section{Complete Derivation of Closed-Form Posterior Bridge Re-Coupling}

\subsection{Restatement of the PBD Objective}

The Posterior Bridge Defect is defined as:
\begin{equation}
\operatorname{PBD}_t(\mathrm{x}_t,\mathrm{y})
:=
\min_{a,b}
\mathcal J_t(a,b;\mathrm{x}_t,\mathrm{y}).
\label{EqSuppPBD}
\end{equation}

Its joint objective is:
\begin{equation}
\begin{aligned}
\mathcal J_t(a,b;\mathrm{x}_t,\mathrm{y})
={}&
\frac{\|\mathrm{H}b-\mathrm{y}\|_2^2}{2\sigma_\mathrm{y}^2}
+
\frac{\rho}{2}\|b-\hat b_t\|_2^2+
\frac{\lambda}{2}\|a-\hat a_t\|_2^2
+
\frac{\kappa}{2}
\|\mathrm{x}_t-(1-t)a-tb\|_2^2.
\end{aligned}
\label{EqSuppPBDObjective}
\end{equation}

The first term measures the consistency between the candidate clean endpoint $b$ and the observation $\mathrm{y}$. The second and third terms constrain the clean and source endpoints from deviating excessively from their local pseudo-endpoints. The fourth term measures whether the candidate endpoint pair $(a,b)$ can explain the current state $\mathrm{x}_t$ through the linear bridge. Throughout the following derivation, we assume $\sigma_\mathrm{y}^2>0$, $\rho>0$, $\lambda>0$, and $\kappa\geq0$.

\subsection{First-Order Optimality Condition for \texorpdfstring{$a$}{a}}

We first fix an arbitrary clean endpoint $b$ and regard Eq.~\eqref{EqSuppPBDObjective} as a function of the source endpoint $a$. The measurement term and the clean-side prior term are independent of $a$. Therefore:
\begin{equation}
\nabla_a\mathcal J_t
=
\lambda(a-\hat a_t)
-
\kappa(1-t)
\left[
\mathrm{x}_t-(1-t)a-tb
\right].
\label{EqSuppGradientA}
\end{equation}

Setting $\nabla_a\mathcal J_t=0$ gives:
\begin{equation}
\lambda(a-\hat a_t)
-
\kappa(1-t)
\left[
\mathrm{x}_t-(1-t)a-tb
\right]
=0.
\label{EqSuppStationaryA}
\end{equation}

Collecting all terms involving $a$, we obtain:
\begin{equation}
\left[
\lambda+\kappa(1-t)^2
\right]a
=
\lambda\hat a_t
+
\kappa(1-t)(\mathrm{x}_t-tb).
\label{EqSuppLinearA}
\end{equation}

For notational convenience, define:
\begin{equation}
D_t:=\lambda+\kappa(1-t)^2.
\label{EqSuppDt}
\end{equation}

Since $\lambda>0$ and $\kappa\geq0$, we have $D_t>0$ for every $t\in[0,1]$. Therefore, for a fixed $b$, the unique minimizer with respect to $a$ is:
\begin{equation}
a^\star(b)
=
\frac{
\lambda\hat a_t+
\kappa(1-t)(\mathrm{x}_t-tb)
}{
D_t
}.
\label{EqSuppConditionalSource}
\end{equation}

Eq.~\eqref{EqSuppConditionalSource} shows that, for any candidate clean endpoint $b$, the corresponding optimal source endpoint is an analytic solution that balances the source-side flow prior with the current bridge compatibility. Setting $b=\bar b_t$ recovers the source-side re-coupling update used in the main paper.

\subsection{Elimination of the Source Endpoint}

We next substitute $a^\star(b)$ back into the original objective. To simplify the resulting expression, we first use the local pseudo-endpoint identity:
\begin{equation}
\mathrm{x}_t=(1-t)\hat a_t+t\hat b_t.
\label{EqSuppLocalIdentityAgain}
\end{equation}

From Eqs.~\eqref{EqSuppConditionalSource} and~\eqref{EqSuppLocalIdentityAgain}, the displacement of the optimal source endpoint relative to $\hat a_t$ is:
\begin{align}
a^\star(b)-\hat a_t
&=
\frac{
\lambda\hat a_t+
\kappa(1-t)(\mathrm{x}_t-tb)
-D_t\hat a_t
}{
D_t
} \notag\\
&=
\frac{
\kappa(1-t)
\left[
\mathrm{x}_t-(1-t)\hat a_t-tb
\right]
}{
D_t
} \notag\\
&=
\frac{
\kappa t(1-t)
}{
D_t
}
(\hat b_t-b).
\label{EqSuppSourceDeviation}
\end{align}

Therefore, for a fixed $b$, the source-side prior term becomes:
\begin{equation}
\frac{\lambda}{2}
\|a^\star(b)-\hat a_t\|_2^2
=
\frac{
\lambda\kappa^2t^2(1-t)^2
}{
2D_t^2
}
\|\hat b_t-b\|_2^2.
\label{EqSuppSourcePriorReduced}
\end{equation}

We then compute the bridge residual after substituting $a^\star(b)$. Using Eq.~\eqref{EqSuppSourceDeviation} gives:
\begin{align}
\mathrm{x}_t-e_t(a^\star(b),b)
&=
(1-t)\hat a_t+t\hat b_t
-(1-t)a^\star(b)-tb \notag\\
&=
(1-t)\left[\hat a_t-a^\star(b)\right]
+t(\hat b_t-b) \notag\\
&=
-\frac{\kappa t(1-t)^2}{D_t}
(\hat b_t-b)
+t(\hat b_t-b) \notag\\
&=
\frac{\lambda t}{D_t}
(\hat b_t-b).
\label{EqSuppConditionalBridgeResidual}
\end{align}

The corresponding bridge-residual term is:
\begin{equation}
\frac{\kappa}{2}
\|\mathrm{x}_t-e_t(a^\star(b),b)\|_2^2
=
\frac{
\kappa\lambda^2t^2
}{
2D_t^2
}
\|\hat b_t-b\|_2^2.
\label{EqSuppBridgeTermReduced}
\end{equation}

Adding Eqs.~\eqref{EqSuppSourcePriorReduced} and~\eqref{EqSuppBridgeTermReduced} gives:
\begin{align}
&
\frac{\lambda}{2}
\|a^\star(b)-\hat a_t\|_2^2
+
\frac{\kappa}{2}
\|\mathrm{x}_t-e_t(a^\star(b),b)\|_2^2 \notag\\
&=
\frac{
\lambda\kappa^2t^2(1-t)^2
+
\kappa\lambda^2t^2
}{
2D_t^2
}
\|\hat b_t-b\|_2^2 \notag\\
&=
\frac{
\kappa\lambda t^2
\left[
\kappa(1-t)^2+\lambda
\right]
}{
2D_t^2
}
\|\hat b_t-b\|_2^2 \notag\\
&=
\frac{\kappa\lambda t^2}{2D_t}
\|b-\hat b_t\|_2^2.
\label{EqSuppCombinedReducedTerms}
\end{align}

After eliminating the source endpoint, the reduced objective depending only on $b$ is:
\begin{equation}
\widetilde{\mathcal J}_t(b)
=
\frac{\|\mathrm{H}b-\mathrm{y}\|_2^2}{2\sigma_\mathrm{y}^2}
+
\frac{1}{2}
\left[
\rho+
\frac{\kappa\lambda t^2}
{\lambda+\kappa(1-t)^2}
\right]
\|b-\hat b_t\|_2^2.
\label{EqSuppReducedObjective}
\end{equation}

The second term in the reduced objective contains both the original clean-side prior weight $\rho$ and an additional weight jointly induced by source-side re-coupling and the bridge residual.

\subsection{Normal Equation for the Clean Endpoint}

Define:
\begin{equation}
\gamma_t
:=
\sigma_\mathrm{y}^2
\left[
\rho+
\frac{\kappa\lambda t^2}
{\lambda+\kappa(1-t)^2}
\right].
\label{EqSuppGamma}
\end{equation}

Because $\sigma_\mathrm{y}^2>0$, $\rho>0$, and $D_t>0$, we have $\gamma_t>0$. Using Eq.~\eqref{EqSuppGamma}, the gradient of the reduced objective with respect to $b$ is:
\begin{equation}
\nabla_b\widetilde{\mathcal J}_t(b)
=
\frac{1}{\sigma_\mathrm{y}^2}
\mathrm{H}^{\top}(\mathrm{H}b-\mathrm{y})
+
\frac{\gamma_t}{\sigma_\mathrm{y}^2}
(b-\hat b_t).
\label{EqSuppGradientB}
\end{equation}

Setting the gradient to zero and multiplying by $\sigma_\mathrm{y}^2$ yields the normal equation:
\begin{equation}
\mathrm{H}^{\top}(\mathrm{H}\bar b_t-\mathrm{y})
+
\gamma_t(\bar b_t-\hat b_t)
=0.
\label{EqSuppNormalEquationB}
\end{equation}

Rearranging Eq.~\eqref{EqSuppNormalEquationB} gives:
\begin{equation}
\left(
\mathrm{H}^{\top}\mathrm{H}
+
\gamma_t\mathrm{I}
\right)
(\bar b_t-\hat b_t)
=
\mathrm{H}^{\top}
(\mathrm{y}-\mathrm{H}\hat b_t).
\label{EqSuppNormalEquationShifted}
\end{equation}

Since $\gamma_t>0$, the matrix $\mathrm{H}^{\top}\mathrm{H}+\gamma_t\mathrm{I}$ is positive definite. Indeed, for any nonzero vector $z\in\mathbb{R}^{\mathrm{n}}$:
\begin{equation}
z^{\top}
\left(
\mathrm{H}^{\top}\mathrm{H}
+
\gamma_t\mathrm{I}
\right)z
=
\|\mathrm{H}z\|_2^2
+
\gamma_t\|z\|_2^2
>0.
\label{EqSuppPositiveDefiniteClean}
\end{equation}

The matrix is therefore invertible, and:
\begin{equation}
\bar b_t
=
\hat b_t
+
\left(
\mathrm{H}^{\top}\mathrm{H}
+
\gamma_t\mathrm{I}
\right)^{-1}
\mathrm{H}^{\top}
(\mathrm{y}-\mathrm{H}\hat b_t).
\label{EqSuppCleanUpdatePrimal}
\end{equation}

\subsection{Push-Through Identity}

To obtain the measurement-space expression used in the main paper, we use the following push-through identity:
\begin{equation}
\left(
\mathrm{H}^{\top}\mathrm{H}
+
\gamma_t\mathrm{I}
\right)^{-1}
\mathrm{H}^{\top}
=
\mathrm{H}^{\top}
\left(
\mathrm{H}\mathrm{H}^{\top}
+
\gamma_t\mathrm{I}
\right)^{-1}.
\label{EqSuppPushThrough}
\end{equation}

This identity follows directly from:
\begin{equation}
\left(
\mathrm{H}^{\top}\mathrm{H}
+
\gamma_t\mathrm{I}
\right)
\mathrm{H}^{\top}
=
\mathrm{H}^{\top}
\left(
\mathrm{H}\mathrm{H}^{\top}
+
\gamma_t\mathrm{I}
\right).
\label{EqSuppPushThroughBase}
\end{equation}

Since $\gamma_t>0$, both $\mathrm{H}^{\top}\mathrm{H}+\gamma_t\mathrm{I}$ and $\mathrm{H}\mathrm{H}^{\top}+\gamma_t\mathrm{I}$ are positive definite. Left-multiplying Eq.~\eqref{EqSuppPushThroughBase} by $\left(\mathrm{H}^{\top}\mathrm{H}+\gamma_t\mathrm{I}\right)^{-1}$ and right-multiplying it by $\left(\mathrm{H}\mathrm{H}^{\top}+\gamma_t\mathrm{I}\right)^{-1}$ gives Eq.~\eqref{EqSuppPushThrough}. The dimensions of the two identity matrices are determined by context: the former is $\mathrm{n}\times\mathrm{n}$, whereas the latter is $\mathrm{m}\times\mathrm{m}$.

Substituting Eq.~\eqref{EqSuppPushThrough} into Eq.~\eqref{EqSuppCleanUpdatePrimal} gives the final posterior clean-side anchoring update:
\begin{equation}
\bar b_t
=
\hat b_t
+
\mathrm{H}^{\top}
\left(
\mathrm{H}\mathrm{H}^{\top}
+
\gamma_t\mathrm{I}
\right)^{-1}
\left(
\mathrm{y}-\mathrm{H}\hat b_t
\right).
\label{EqSuppCleanUpdate}
\end{equation}

Eq.~\eqref{EqSuppCleanUpdate} is identical to Eq.~\eqref{Eq7}. This expression only requires solving a regularized linear system in the measurement space.

\subsection{Recovery of the Source Endpoint}

After obtaining $\bar b_t$, substituting $b=\bar b_t$ into Eq.~\eqref{EqSuppConditionalSource} recovers the corresponding optimal source endpoint:
\begin{equation}
\bar a_t
=
\frac{
\lambda\hat a_t+
\kappa(1-t)
\bigl(
\mathrm{x}_t-t\bar b_t
\bigr)
}{
\lambda+\kappa(1-t)^2
}.
\label{EqSuppSourceUpdate}
\end{equation}

Eq.~\eqref{EqSuppSourceUpdate} is identical to Eq.~\eqref{Eq8}. Using Eq.~\eqref{EqSuppSourceDeviation}, the source-side update can also be written explicitly relative to the original local source endpoint:
\begin{equation}
\bar a_t-\hat a_t
=
\frac{\kappa t(1-t)}
{\lambda+\kappa(1-t)^2}
(\hat b_t-\bar b_t).
\label{EqSuppSourceResponse}
\end{equation}

Equivalently:
\begin{equation}
\bar a_t-\hat a_t
=
-
\frac{\kappa t(1-t)}
{\lambda+\kappa(1-t)^2}
(\bar b_t-\hat b_t).
\label{EqSuppSourceResponseOpposite}
\end{equation}

Thus, when clean-side anchoring moves the clean endpoint from $\hat b_t$ to $\bar b_t$, the optimal source endpoint is adjusted synchronously in the opposite direction. The adjustment magnitude is jointly determined by $t$, $\lambda$, and $\kappa$. A larger $\lambda$ keeps the source endpoint closer to $\hat a_t$, whereas a larger $\kappa$ makes the source endpoint compensate more strongly for the bridge mismatch introduced by clean-side anchoring.

\subsection{Summary of the Joint Minimization}

The derivation above first performs exact conditional minimization over the source endpoint, and then minimizes the reduced clean-side objective. Specifically:
\begin{equation}
a^\star(b)
=
\operatorname*{arg\,min}_{a}
\mathcal J_t(a,b;\mathrm{x}_t,\mathrm{y}),
\quad
\bar b_t
=
\operatorname*{arg\,min}_{b}
\widetilde{\mathcal J}_t(b),
\quad
\bar a_t=a^\star(\bar b_t).
\label{EqSuppNestedMinimization}
\end{equation}

Therefore, for any candidate endpoint pair $(a,b)$:
\begin{equation}
\mathcal J_t(a,b;\mathrm{x}_t,\mathrm{y})
\geq
\mathcal J_t(a^\star(b),b;\mathrm{x}_t,\mathrm{y})
=
\widetilde{\mathcal J}_t(b)
\geq
\widetilde{\mathcal J}_t(\bar b_t)
=
\mathcal J_t(\bar a_t,\bar b_t;\mathrm{x}_t,\mathrm{y}).
\label{EqSuppNestedInequality}
\end{equation}

This shows that clean-side anchoring and source-side re-coupling are not two independently designed correction steps. The clean endpoint $\bar b_t$ is the minimizer of the reduced objective obtained after eliminating the source endpoint, while $\bar a_t$ is the conditionally optimal source endpoint associated with this optimal clean endpoint. Together, they form the global minimizer of the original joint PBD objective. The next section further proves that this minimizer is unique and establishes the quadratic-growth lower bound in Eq.~\eqref{Eq10}.

\section{Proof of Proposition 1: Unique Minimizer of the PBD Objective}

\begin{proof}
In this proof, we fix $t$, $\mathrm{x}_t$, $\mathrm{y}$, $\hat a_t$, and $\hat b_t$. We first prove that $\mathcal J_t$ is $\mu$-strongly convex in the joint variable $(a,b)$. We then use the closed-form derivation above to show that Eqs.~\eqref{Eq7} and~\eqref{Eq8} give its unique global minimizer. Finally, we establish the objective-gap lower bound in Eq.~\eqref{Eq10}.

From Eq.~\eqref{EqSuppPBDObjective}, the Hessian of $\mathcal J_t$ with respect to $(a,b)$ is:
\begin{equation}
\nabla_{(a,b)}^2\mathcal J_t
=
\begin{bmatrix}
\left[\lambda+\kappa(1-t)^2\right]\mathrm{I}
&
\kappa t(1-t)\mathrm{I}
\\
\kappa t(1-t)\mathrm{I}
&
\displaystyle
\frac{1}{\sigma_\mathrm{y}^2}
\mathrm{H}^{\top}\mathrm{H}
+
\left(\rho+\kappa t^2\right)\mathrm{I}
\end{bmatrix}.
\label{EqSuppHessian}
\end{equation}

Let $\delta a,\delta b\in\mathbb{R}^{\mathrm{n}}$ be arbitrary perturbations. The quadratic form induced by this Hessian is:
\begin{equation}
\label{EqSuppHessianQuadraticForm}
\left\langle
\begin{bmatrix}
\delta a\\
\delta b
\end{bmatrix},
\nabla_{(a,b)}^2\mathcal J_t
\begin{bmatrix}
\delta a\\
\delta b
\end{bmatrix}
\right\rangle =
\lambda\|\delta a\|_2^2
+
\rho\|\delta b\|_2^2
+
\frac{1}{\sigma_\mathrm{y}^2}
\|\mathrm{H}\delta b\|_2^2
+
\kappa
\|(1-t)\delta a+t\delta b\|_2^2.
\end{equation}
The measurement and bridge-residual terms are both nonnegative. Therefore:
\begin{align}
&
\left\langle
\begin{bmatrix}
\delta a\\
\delta b
\end{bmatrix},
\nabla_{(a,b)}^2\mathcal J_t
\begin{bmatrix}
\delta a\\
\delta b
\end{bmatrix}
\right\rangle \notag\geq
\lambda\|\delta a\|_2^2
+
\rho\|\delta b\|_2^2 \notag\geq
\mu
\left(
\|\delta a\|_2^2+
\|\delta b\|_2^2
\right),
\label{EqSuppStrongConvexityBound}
\end{align}
where:
\begin{equation}
\mu:=\min\{\lambda,\rho\}>0.
\label{EqSuppMu}
\end{equation}

Thus:
\begin{equation}
\nabla_{(a,b)}^2\mathcal J_t
\succeq
\mu\mathrm{I}.
\label{EqSuppHessianLowerBound}
\end{equation}

This shows that $\mathcal J_t$ is $\mu$-strongly convex in the joint variable $(a,b)$. In particular, its Hessian is positive definite. Hence, $\mathcal J_t$ is a coercive, strictly convex quadratic function. It has at most one stationary point, and any stationary point must be its unique global minimizer.

The preceding derivation shows that, for any fixed $b$, Eq.~\eqref{EqSuppConditionalSource} gives the unique conditional minimizer $a^\star(b)$ with respect to $a$. Substituting this conditional minimizer into the original objective gives the reduced objective $\widetilde{\mathcal J}_t(b)$. Eq.~\eqref{EqSuppCleanUpdate} gives the unique minimizer $\bar b_t$ of this reduced objective, while Eq.~\eqref{EqSuppSourceUpdate} satisfies $\bar a_t=a^\star(\bar b_t)$.

Therefore, for any $(a,b)$, Eq.~\eqref{EqSuppNestedInequality} gives:
\begin{equation}
\mathcal J_t(a,b;\mathrm{x}_t,\mathrm{y})
\geq
\mathcal J_t(\bar a_t,\bar b_t;\mathrm{x}_t,\mathrm{y}).
\label{EqSuppGlobalOptimality}
\end{equation}

Hence:
\begin{equation}
(\bar a_t,\bar b_t)
=
\operatorname*{arg\,min}_{a,b}
\mathcal J_t(a,b;\mathrm{x}_t,\mathrm{y}).
\label{EqSuppArgmin}
\end{equation}

Together with the strong convexity established in the Hessian, this shows that the global minimizer is unique.

Define:
\begin{equation}
\delta a:=a-\bar a_t,
\quad
\delta b:=b-\bar b_t.
\label{EqSuppDeltaAB}
\end{equation}

Because $\mathcal J_t$ is a quadratic function with a constant Hessian and:
\begin{equation}
\nabla_{(a,b)}
\mathcal J_t(\bar a_t,\bar b_t;\mathrm{x}_t,\mathrm{y})
=0,
\label{EqSuppZeroGradient}
\end{equation}
its second-order expansion around $(\bar a_t,\bar b_t)$ is exact. Therefore:
\begin{align}
&
\mathcal J_t(a,b;\mathrm{x}_t,\mathrm{y})
-
\mathcal J_t(\bar a_t,\bar b_t;\mathrm{x}_t,\mathrm{y}) \notag=
\frac{1}{2\sigma_\mathrm{y}^2}
\|\mathrm{H}\delta b\|_2^2
+
\frac{\rho}{2}\|\delta b\|_2^2
+
\frac{\lambda}{2}\|\delta a\|_2^2
+
\frac{\kappa}{2}
\|(1-t)\delta a+t\delta b\|_2^2.
\label{EqSuppExactObjectiveGap}
\end{align}

Dropping the two nonnegative measurement and bridge-residual terms gives:
\begin{equation}
\label{EqSuppObjectiveGapBound}
\mathcal J_t(a,b;\mathrm{x}_t,\mathrm{y})
-
\mathcal J_t(\bar a_t,\bar b_t;\mathrm{x}_t,\mathrm{y}) \geq
\frac{\lambda}{2}\|\delta a\|_2^2
+
\frac{\rho}{2}\|\delta b\|_2^2 \geq
\frac{\mu}{2}
\left(
\|\delta a\|_2^2+
\|\delta b\|_2^2
\right).
\end{equation}

Substituting Eq.~\eqref{EqSuppDeltaAB} into the inequality above gives:
\begin{equation}
\label{EqSuppFinalObjectiveGap}
\mathcal J_t(a,b;\mathrm{x}_t,\mathrm{y})
-
\mathcal J_t(\bar a_t,\bar b_t;\mathrm{x}_t,\mathrm{y})
\geq
\frac{\mu}{2}
\left(
\|a-\bar a_t\|_2^2
+
\|b-\bar b_t\|_2^2
\right).
\end{equation}

Therefore, Eqs.~\eqref{Eq7} and~\eqref{Eq8} jointly give the unique global minimizer of the PBD objective. Moreover, the objective value of any candidate pair away from this endpoint pair increases by at least a quadratic amount in its joint endpoint distance.
\end{proof}

\section{Proof of Proposition 2: Exact Contraction of the Bridge Residual}

\begin{proof}
Define the bridge residual after clean-side-only correction as:
\begin{equation}
\mathrm{r}_t^{\mathrm{CS}}
:=
\mathrm{x}_t-e_t(\hat a_t,\bar b_t),
\label{EqSuppCSResidualDefinitionProof}
\end{equation}
and the bridge residual after source-side re-coupling as:
\begin{equation}
\mathrm{r}_t^{\mathrm{RC}}
:=
\mathrm{x}_t-e_t(\bar a_t,\bar b_t).
\label{EqSuppRCResidualDefinition}
\end{equation}

Again, let:
\begin{equation}
D_t:=\lambda+\kappa(1-t)^2.
\label{EqSuppDtProofTwo}
\end{equation}

Using the local pseudo-endpoint identity $\mathrm{x}_t=e_t(\hat a_t,\hat b_t)$, we have:
\begin{align}
\mathrm{r}_t^{\mathrm{CS}}
&=
e_t(\hat a_t,\hat b_t)
-
e_t(\hat a_t,\bar b_t) \notag\\
&=
t(\hat b_t-\bar b_t).
\label{EqSuppCSResidualProof}
\end{align}

From Eq.~\eqref{EqSuppSourceResponse}, source-side re-coupling satisfies:
\begin{equation}
\bar a_t-\hat a_t
=
\frac{\kappa t(1-t)}{D_t}
(\hat b_t-\bar b_t).
\label{EqSuppSourceResponseProof}
\end{equation}

Therefore:
\begin{equation}
\hat a_t-\bar a_t
=
-
\frac{\kappa t(1-t)}{D_t}
(\hat b_t-\bar b_t).
\label{EqSuppOppositeSourceResponseProof}
\end{equation}

Using the linear interpolation relations for the two endpoint pairs:
\begin{align}
\mathrm{r}_t^{\mathrm{RC}}
&=
e_t(\hat a_t,\hat b_t)
-
e_t(\bar a_t,\bar b_t) \notag\\
&=
(1-t)(\hat a_t-\bar a_t)
+
t(\hat b_t-\bar b_t).
\label{EqSuppRCResidualExpansion}
\end{align}

Substituting Eq.~\eqref{EqSuppOppositeSourceResponseProof} into Eq.~\eqref{EqSuppRCResidualExpansion} gives:
\begin{align}
\mathrm{r}_t^{\mathrm{RC}}
&=
-\frac{\kappa t(1-t)^2}{D_t}
(\hat b_t-\bar b_t)
+
t(\hat b_t-\bar b_t) \notag\\
&=
\left[
1-
\frac{\kappa(1-t)^2}{D_t}
\right]
t(\hat b_t-\bar b_t) \notag\\
&=
\frac{
D_t-\kappa(1-t)^2
}{
D_t
}
t(\hat b_t-\bar b_t) \notag\\
&=
\frac{\lambda}{D_t}
t(\hat b_t-\bar b_t).
\label{EqSuppRCResidualReduced}
\end{align}

Define:
\begin{equation}
\eta_t
:=
\frac{\lambda}
{\lambda+\kappa(1-t)^2}
=
\frac{\lambda}{D_t}.
\label{EqSuppEta}
\end{equation}

Combining this definition with Eq.~\eqref{EqSuppCSResidualProof} gives the exact vector relation:
\begin{equation}
\mathrm{r}_t^{\mathrm{RC}}
=
\eta_t\mathrm{r}_t^{\mathrm{CS}}.
\label{EqSuppExactResidualContraction}
\end{equation}

Since $\lambda>0$, $\kappa\geq0$, and $t\in[0,1]$, we have:
\begin{equation}
0<\eta_t\leq1.
\label{EqSuppEtaRange}
\end{equation}

Taking the $\ell_2$-norm of Eq.~\eqref{EqSuppExactResidualContraction} gives:
\begin{align}
\left\|\mathrm{r}_t^{\mathrm{RC}}\right\|_2
&=
\eta_t
\left\|\mathrm{r}_t^{\mathrm{CS}}\right\|_2 \notag\\
&=
\eta_t t
\left\|\hat b_t-\bar b_t\right\|_2 \notag\\
&\leq
\left\|\mathrm{r}_t^{\mathrm{CS}}\right\|_2.
\label{EqSuppResidualNormContraction}
\end{align}

When $\kappa=0$, we have $\eta_t=1$. The source endpoint is not re-coupled, and the residual is not contracted. When $t=1$, the linear bridge is determined entirely by the clean endpoint, so changing the source endpoint cannot reduce the current bridge residual; again, $\eta_t=1$. When $t=0$, $\mathrm{r}_t^{\mathrm{CS}}=0$, and both residuals are zero.

When $\kappa>0$, $t\in(0,1)$, and $\bar b_t\neq\hat b_t$, we have $\eta_t<1$ and $\|\mathrm{r}_t^{\mathrm{CS}}\|_2>0$. Therefore:
\begin{equation}
\left\|\mathrm{r}_t^{\mathrm{RC}}\right\|_2
<
\left\|\mathrm{r}_t^{\mathrm{CS}}\right\|_2.
\label{EqSuppStrictResidualContraction}
\end{equation}

Thus, source-side re-coupling provides an exact multiplicative contraction of the local endpoint-state consistency residual introduced by clean-side anchoring. This conclusion concerns the local bridge residual at the current sampling time. It does not additionally assume or claim that the global reconstruction error contracts by the same factor.
\end{proof}

\section{Dataset Descriptions and Detailed Experimental Details}

\subsection{Dataset Descriptions}

\paragraph{CelebA.}
CelebA~\cite{Celeba} is a large-scale face image dataset containing celebrity face images with diverse identities, poses, expressions, and appearance attributes. Following the experimental protocols of PnP-Flow and Restora-Flow, we resize all images to a resolution of $128\times128$. For evaluation, we use the same 100 test images as PnP-Flow and Restora-Flow.

\paragraph{AFHQ-Cat.}
AFHQ-Cat~\cite{AFHQ} is the cat subset of the Animal Faces-HQ dataset. It contains approximately 5,000 high-quality training images of cat faces. The images cover diverse cat breeds, poses, fur colors, textures, backgrounds, and lighting conditions. All images are resized to a resolution of $256\times256$. Following the setup of PnP-Flow, we evaluate the model on the same 100 test images used in the related work.

\paragraph{COCO.}
COCO~\cite{COCO} is a large-scale image dataset containing multiple object categories and complex natural scenes. Its training set contains approximately 118,000 images covering a wide range of visual categories, including people, animals, vehicles, indoor objects, and outdoor environments. Following the setup of Restora-Flow, the training images are resized to $128\times128$. A fixed set of 100 images is selected from the COCO validation set for image restoration experiments. We directly use the COCO-pretrained Flow Matching model provided by Restora-Flow.

\paragraph{IXI-Brain.}
IXI-Brain~\cite{IXI} is a publicly available brain magnetic resonance imaging dataset containing scans from nearly 600 healthy subjects collected at three hospitals in London. The dataset provides multiple MRI modalities, including T1-weighted, T2-weighted, and proton-density-weighted scans. We use the T2-weighted brain MRI data to train and evaluate the generative prior. The original 3D volumes have an approximate voxel size of $0.94\times0.94\times1.2\,\mathrm{mm}^{3}$ and a matrix size of approximately $256\times256\times n$, where $n$ denotes the number of axial slices. Since we employ a 2D Flow Matching model, we extract axial 2D slices from the 3D volumes and discard boundary slices containing little anatomical information. We use 20,000 2D slices to train the Flow Matching model and reserve 100 slices as a candidate test set. All slices are processed to a resolution of $256\times256$ and normalized to $[-1,1]$.

\paragraph{PMUB.}
PMUB~\cite{PMUB}, formally known as the Prostate-MRI-US-Biopsy dataset, is a publicly available medical imaging dataset for prostate cancer diagnosis and biopsy research. It provides T2-weighted prostate MRI scans, together with multiparametric MRI sequences such as T1-weighted, diffusion-weighted, and dynamic contrast-enhanced imaging. We use the T2-weighted MRI data to train a Flow Matching-based generative prior for prostate images. The PMUB images have an in-plane spatial resolution of $0.547\,\mathrm{mm}\times0.547\,\mathrm{mm}$ and an inter-slice spacing of $1.5\,\mathrm{mm}$. Following the preprocessing protocol of DDGF, we extract axial 2D slices from the 3D MRI volumes and discard boundary slices without sufficient anatomical information. The training set contains 7,219 slices from 120 MRI volumes. The independent test set contains 100 slices from an additional 10 MRI volumes. All images are resized to $256\times256$ and normalized to $[-1,1]$.

\paragraph{X-Ray Hand.}
X-Ray Hand~\cite{Hand1,Hand2} is a medical imaging dataset composed of hand radiographs. It contains 895 hand X-ray images in total. Following the setup of Restora-Flow, all images are resized to $256\times256$. We use 100 images as the test set, matching the evaluation set size used for the other datasets. For this dataset, we directly use the pretrained Flow Matching model provided by Restora-Flow without additional training.

\subsection{Model Sources}

The six Flow Matching generative priors used in this work consist of publicly available pretrained models and models trained by us. For CelebA and AFHQ-Cat, we use the pretrained Flow Matching models released by PnP-Flow~\cite{PnPFlow}. These models use a standard Gaussian source distribution, employ a U-Net as the velocity network, and are trained with Mini-Batch Optimal Transport Flow Matching. The CelebA model is trained for 200 epochs with a learning rate of $1\times10^{-4}$ and a batch size of 128. The AFHQ-Cat model uses the same learning rate and is trained for 400 epochs with a batch size of 64.

For COCO and X-Ray Hand, we use the pretrained models released by Restora-Flow~\cite{RestoraFlow}. The COCO Flow Matching model is trained for 300 epochs with a learning rate of $1\times10^{-4}$ and a batch size of 64.

For IXI-Brain and PMUB, we separately train unconditional Flow Matching models from scratch and parameterize the velocity field $\mathrm{v}_\theta(t,\mathrm{x}_t)$ with a U-Net. During training, a clean endpoint $b$ is sampled from the corresponding dataset, a source endpoint $a\sim{\mathrm{N}}(0,\mathrm{I})$ is sampled from the standard Gaussian distribution, and $t\sim\mathcal{U}[0,1]$ is sampled independently. The intermediate state is then constructed along the linear path $\mathrm{x}_t=(1-t)a+tb$, with $b-a$ used as the target velocity. The models are trained by minimizing the Conditional Flow Matching loss:
\begin{equation}
\mathcal{L}_{\mathrm{FM}}(\theta)=\mathbb{E}_{t,(a,b)\sim\pi}\left[\frac{1}{2}\left\|\mathrm{v}_\theta(t,\mathrm{x}_t)-(b-a)\right\|_2^2\right].
\label{EqSuppFMLoss}
\end{equation}
All training images are resized to $256\times256$ and normalized to $[-1,1]$. Both models are trained for 400 epochs with a learning rate of $1\times10^{-4}$ and a batch size of 64. Training is performed on a single NVIDIA A6000 GPU with 48GB of memory.

\subsection{Restoration Task Settings}

For natural images, we consider five restoration tasks: Gaussian denoising, Gaussian deblurring, super-resolution, random inpainting, and box inpainting. Gaussian denoising uses a noise level of $\sigma_\mathrm{y}=0.2$. For random inpainting, 70\% of the pixels are removed. CelebA and COCO are evaluated on $2\times$ super-resolution and box inpainting with a centered $40\times40$ mask. AFHQ-Cat is evaluated on $4\times$ super-resolution and box inpainting with a centered $80\times80$ mask. For Gaussian deblurring, the additional measurement-noise level is set to $\sigma_\mathrm{y}=0.05$. For the other natural-image inverse tasks, the additional noise level is set to $\sigma_\mathrm{y}=0.01$.

For medical images, we evaluate Gaussian denoising, $2\times$ super-resolution, random inpainting, and box inpainting. Gaussian denoising uses a noise level of $\sigma_\mathrm{y}=0.08$. For random inpainting, 30\% of the pixels are removed. IXI-Brain and X-Ray Hand use a centered $32\times32$ mask for box inpainting, whereas PMUB uses a centered $60\times60$ mask. Except for denoising, Gaussian measurement noise with a standard deviation of $\sigma_\mathrm{y}=0.01$ is added to all medical image restoration tasks.

\begin{table}[!htbp]
\centering
\renewcommand{\arraystretch}{1.2}
\setlength{\tabcolsep}{4pt}
\caption{Mechanism-level comparison of Flow Matching-based image restoration methods according to the variables explicitly corrected during inference. A check mark indicates a direct correction or optimization of the corresponding object.}
\label{Table_Mechanism_Comparison}
\resizebox{\textwidth}{!}{%
\begin{tabular}{@{}l l c c c c c c@{}}
\toprule
\multirow{3.5}{*}{Method} & 
\multirow{3.5}{*}{\begin{tabular}{@{}l@{}}Primary Explicit Correction\end{tabular}} & 
\multicolumn{6}{c}{Explicitly Corrected Object or Relation} \\
\cmidrule(lr){3-8}
& & 
\begin{tabular}{@{}c@{}}Velocity \\ Field\end{tabular} & 
\begin{tabular}{@{}c@{}}Source \\ Variable\end{tabular} & 
\begin{tabular}{@{}c@{}}State / Local \\ Trajectory\end{tabular} & 
\begin{tabular}{@{}c@{}}Clean / Destination \\ Endpoint\end{tabular} & 
\begin{tabular}{@{}c@{}}Joint Source-Clean \\ Endpoint Pair\end{tabular} & 
\begin{tabular}{@{}c@{}}Explicit Pairwise \\ Bridge Residual\end{tabular} \\
\midrule
OT-ODE~\cite{OT-ODE}          & ODE Vector field                        & $\checkmark$ & $\times$     & $\times$     & $\times$     & $\times$     & $\times$     \\
Flow-Priors~\cite{FlowPriors} & Intermediate State via Local MAP        & $\times$     & $\times$     & $\checkmark$ & $\times$     & $\times$     & $\times$     \\
D-Flow~\cite{D-Flow}          & Initial Source Point                    & $\times$     & $\checkmark$ & $\times$     & $\times$     & $\times$     & $\times$     \\
PnP-Flow~\cite{PnPFlow}       & State with Flow-Path Reprojection       & $\times$     & $\times$     & $\checkmark$ & $\times$     & $\times$     & $\times$     \\
Restora-Flow~\cite{RestoraFlow}& Mask-Guided Trajectory State           & $\times$     & $\times$     & $\checkmark$ & $\times$     & $\times$     & $\times$     \\
Flower~\cite{Flower}          & Clean Destination and Reprojected State & $\times$     & $\times$     & $\checkmark$ & $\checkmark$ & $\times$     & $\times$     \\
\midrule
ReBridge-Flow                 & Source-Clean Endpoint Pair              & $\times$     & $\checkmark$ & $\times$     & $\checkmark$ & $\checkmark$ & $\checkmark$ \\
\bottomrule
\end{tabular}%
}
\end{table}

\subsection{Comparison Methods}
As shown in Table~\ref{Table_Mechanism_Comparison}, we select six representative Flow Matching-based image restoration methods for comparison. These methods cover different mechanisms, including velocity-field correction, local-trajectory optimization, source-variable optimization, intermediate-state correction, and clean-endpoint updating. For reproduction, we prioritize the official implementations and recommended configurations of each method. For CelebA and AFHQ-Cat, we directly use the officially provided parameter settings. For COCO, IXI-Brain, PMUB, and X-Ray Hand, we evaluate the official configurations used for CelebA and AFHQ-Cat on the validation set and select the configuration with the best validation performance. We also report the reproduction hyperparameters of all methods across different datasets and tasks in Tables~\ref{Param2}, ~\ref{Param3}, ~\ref{Param4}, ~\ref{Param5}, ~\ref{Param6}, and ~\ref{Param7}. All methods are evaluated using the same test images, degradation settings, evaluation metrics, and hardware environment.

\paragraph{OT-ODE.}
OT-ODE~\cite{OT-ODE} directly injects the measurement-consistency gradient into the ODE dynamics, thereby correcting the pretrained velocity field and the local transport direction. We use its official implementation and determine the parameters for each dataset and degradation task according to the unified protocol described above.

\paragraph{Flow-Priors.}
Flow-Priors~\cite{FlowPriors} decomposes the global inverse problem into a sequence of local-trajectory optimization problems. During sampling, it alternately incorporates the measurement constraint and the pretrained flow prior. We retain its official local optimization procedure and use the recommended initialization and parameter configurations.

\paragraph{D-Flow.}
D-Flow~\cite{D-Flow} optimizes the initial source variable by backpropagating through the complete Flow ODE, such that the generated result satisfies the given observation. We use the official source-point optimization procedure and preserve its gradient computation and iterative optimization scheme.

\paragraph{PnP-Flow.}
PnP-Flow~\cite{PnPFlow} alternates between data-consistency updates and flow-prior mappings to progressively constrain the current restoration state. We reproduce it using the official code and recommended settings. We also set the number of sampling steps for our method to $K=100$, consistent with the experimental setting of PnP-Flow.

\paragraph{Restora-Flow.}
Restora-Flow~\cite{RestoraFlow} uses mask guidance and trajectory correction to constrain intermediate states and reduce the inconsistency between the restoration trajectory and the degraded observation. We adopt its official implementation and task configurations. The pretrained Flow Matching models used for the COCO and X-Ray Hand experiments are also provided by Restora-Flow.

\paragraph{Flower.}
Flower~\cite{Flower} estimates a clean endpoint that is consistent with the pretrained flow from the current state. It then corrects this endpoint using the measurement information and uses the updated result for subsequent sampling. We follow its official endpoint estimation and measurement-correction procedures and apply the same validation-based parameter-selection protocol as for the other baselines.

\section{More Experiments}

\subsection{Effect of the Number of Sampling Steps}

As shown in Table~\ref{Table_Sampling_Steps}, all three evaluation metrics improve consistently as the number of sampling steps increases from $K=10$ to $K=100$. In particular, $K=100$ achieves the highest PSNR of $28.16$\,dB and the highest SSIM of $0.820$. When the number of sampling steps is further increased to $K=200$, LPIPS reaches its lowest value of $0.116$.

\begin{table}[!ht]
\centering
\caption{Effect of the number of sampling steps on $4\times$ SR using the AFHQ-Cat dataset. The \colorbox{best1}{best} results are highlighted.}
\label{Table_Sampling_Steps}

\small 
\renewcommand{\arraystretch}{1.4} 
\setlength{\tabcolsep}{25pt} 
\setlength{\aboverulesep}{0pt}
\setlength{\belowrulesep}{0pt}

\resizebox{0.6\columnwidth}{!}{%
\begin{tabular}{@{}c c c c@{}}
\toprule
$K$ & PSNR$\uparrow$ & SSIM$\uparrow$ & LPIPS$\downarrow$ \\
\midrule
10  & 26.91 & 0.782 & 0.158 \\
20  & 27.48 & 0.798 & 0.141 \\
50  & 27.94 & 0.812 & 0.127 \\
100 & \colorbox{best1}{28.16} & \colorbox{best1}{0.820} & 0.119 \\
200 & 28.14 & 0.811 & \colorbox{best1}{0.116} \\
\bottomrule
\end{tabular}%
}
\end{table}

\subsection{Sensitivity to the Bridge Re-Coupling Strength \texorpdfstring{$\kappa$}{kappa}}
We analyze the effect of the bridge re-coupling strength $\kappa$ on the CelebA $2\times$ super-resolution task while fixing $\rho=\lambda=1$. As shown in Figure~\ref{Figure_Kappa}, when $\kappa=0$, bridge re-coupling is disabled, resulting in a PSNR of $33.72$\,dB, an SSIM of $0.950$, and an LPIPS of $0.021$. As $\kappa$ increases, the restoration performance generally improves. For example, at $\kappa=0.5$, PSNR increases to $33.98$\,dB, although LPIPS temporarily rises to $0.024$. When $\kappa$ is increased to $1$ and $2$, PSNR reaches $34.01$\,dB and $34.12$\,dB, respectively, while SSIM improves from $0.956$ to $0.959$. At $\kappa=5$, all three metrics achieve their best values: $34.51$\,dB PSNR, $0.962$ SSIM, and $0.014$ LPIPS. Compared with $\kappa=0$, this setting improves PSNR by $0.79$\,dB and SSIM by $0.012$, while reducing LPIPS by $0.007$. When $\kappa$ is further increased to $8$ and $10$, PSNR decreases to $33.94$\,dB and $33.30$\,dB, respectively. This indicates that an excessively strong bridge constraint may impair the balance between measurement consistency and the endpoint priors.

\begin{figure}[!htbp]
\centering
\includegraphics[width=0.85\textwidth]{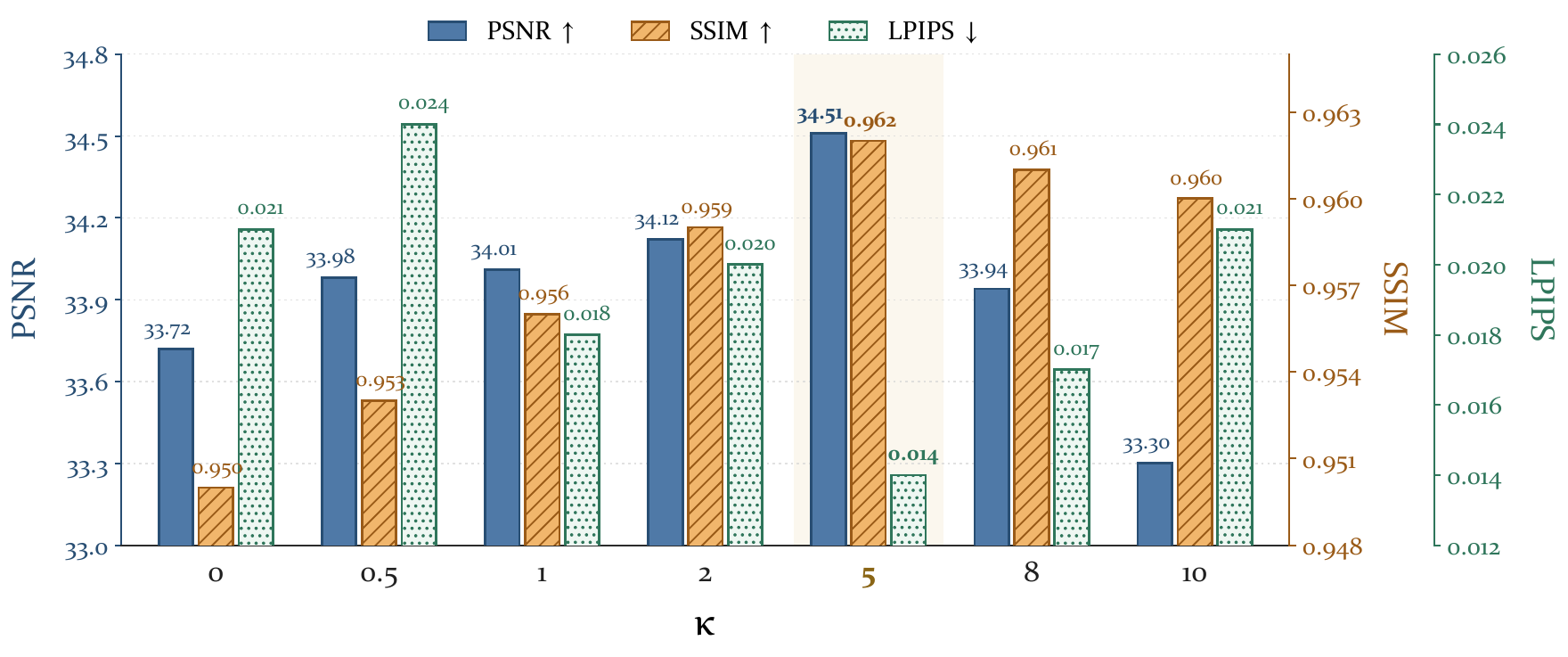} 
\caption{Sensitivity to the bridge re-coupling strength $\kappa$ on CelebA $2\times$ super-resolution.}
\label{Figure_Kappa}
\end{figure}

\subsection{Trajectory-Level Analysis of PBD Components}
To analyze the temporal roles of the Posterior Bridge Defect components during sampling, we track the Measurement Defect $\mathcal{M}_t$, clean-side Flow-Prior Deviation $\mathcal{F}_t^b$, source-side Flow-Prior Deviation $\mathcal{F}_t^a$, and Bridge Residual $\mathcal{B}_t$ at each sampling step on CelebA $2\times$ super-resolution, random inpainting, Gaussian deblurring, and box inpainting. All values are normalized by the image dimensionality, and the sampling trajectory is divided into early, middle, and late stages. We compare Clean-Side Only, Same-$\bar b_t$, Frozen Source, and ReBridge-Flow. Same-$\bar b_t$, Frozen Source applies the same clean-side anchoring as ReBridge-Flow but keeps the source endpoint fixed at $\hat a_t$, thereby isolating the effect of source-side re-coupling. The curves and shaded regions denote the mean and one standard deviation across samples, respectively.

As shown in Figure~\ref{Figure_PBD}, Same-$\bar b_t$, Frozen Source and ReBridge-Flow exhibit nearly identical Measurement Defect and clean-side Flow-Prior Deviation, indicating comparable clean-side corrections. Their main difference lies in whether the source endpoint is updated synchronously. The source-side Flow-Prior Deviation of ReBridge-Flow is concentrated in the middle stage and gradually approaches zero at both ends, consistent with the temporal factor $t(1-t)$ in the source update. This controlled source-side correction substantially reduces the Bridge Residual across all four tasks, with the largest difference observed in Gaussian deblurring. In contrast, correcting only the clean endpoint or keeping the source endpoint fixed produces a larger endpoint--state mismatch during the middle stage. These results show that ReBridge-Flow does not merely seek the lowest Measurement Defect. Instead, it uses a controlled source-side adjustment to balance observation consistency, flow-prior preservation, and local bridge compatibility.

\begin{figure}[!t]
\centering
\includegraphics[width=0.90\textwidth]{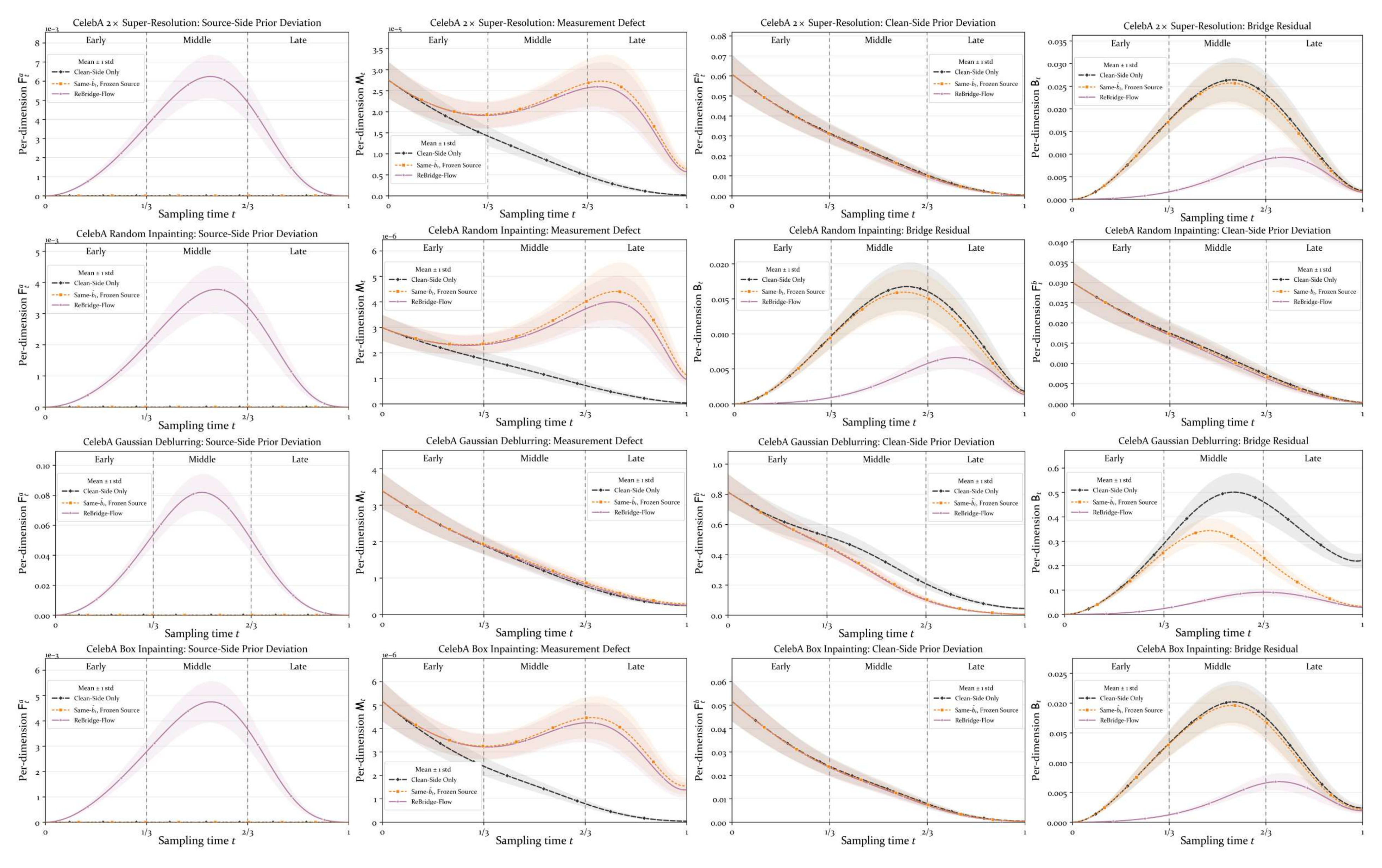} 
\caption{
Mechanism-consistent simulation of the temporal evolution of the PBD components
on four CelebA restoration tasks. We compare Clean-Side Only,
Same-$\bar b_t$, Frozen Source, and ReBridge-Flow. Curves show
the mean $\pm$ one standard deviation over samples.
}
\label{Figure_PBD}
\end{figure}

\subsection{Challenging Tasks}
To further evaluate the restoration capability of different methods under severe information loss, we conduct additional high-factor super-resolution experiments on CelebA. We randomly select and fix 10 images from the CelebA test set and add Gaussian noise with a standard deviation of $\sigma_\mathrm{y}=0.01$ in the measurement space. We compare PnP-Flow, OT-ODE, Flower, Restora-Flow, and ReBridge-Flow.

As shown in Table~\ref{Table_Hard}, the quantitative results show that all methods experience performance degradation as the super-resolution factor increases from $4\times$ to $8\times$, due to the further reduction in observed information. ReBridge-Flow achieves the best overall reconstruction quality at both scales. It also maintains low GPU-memory usage while providing faster inference than Restora-Flow and Flower. As shown in Figure~\ref{Figure_Challenging}, the qualitative results show a consistent trend. PnP-Flow suffers from evident over-smoothing at large upscaling factors. Flower recovers the main facial structures, but some local details remain blurred. OT-ODE and Restora-Flow produce sharper reconstructions, yet still exhibit deviations in facial contours and texture recovery. In contrast, ReBridge-Flow more consistently preserves identity, facial structure, and local details such as hair and glasses. These results indicate that, when the observation contains severely limited information, source--clean endpoint re-coupling helps maintain local transport consistency, thereby reducing structural shifts and detail loss.

\begin{table}[!htbp]
  \centering
  \caption{Quantitative comparison of the large-factor super-resolution task on the CelebA dataset. The \colorbox{best1}{best} and \colorbox{best2}{suboptimal} results are highlighted.}
  \label{Table_Hard}
  
  \footnotesize 
  \setlength{\tabcolsep}{5.0pt}
  \renewcommand{\arraystretch}{1.3} 
  \setlength{\aboverulesep}{0pt}
  \setlength{\belowrulesep}{0pt}
  
  \resizebox{0.95\linewidth}{!}{%
  \begin{tabular}{clccccc}
    \toprule
    Scale & Method & PSNR $\uparrow$ & SSIM $\uparrow$ & LPIPS $\downarrow$ & Time (s/img) $\downarrow$ & Peak GPU (GB) $\downarrow$ \\
    \midrule
    \multirow{6}{*}{$4\times$} 
     & Degraded               & $24.104 \pm 2.062$ & $0.7751 \pm 0.0331$ & $0.2674 \pm 0.0465$ & -- & -- \\
     & PnP-Flow               & $20.233 \pm 1.083$ & $0.6309 \pm 0.0824$ & $0.3073 \pm 0.0877$ & 1.032 & 0.626 \\
     & OT-ODE                 & $28.869 \pm 1.605$ & $0.8565 \pm 0.0317$ & \cellcolor{best2} $0.0653 \pm 0.0374$ & 1.201 & 5.657 \\
     & Restora-Flow           & $30.852 \pm 1.545$ & \cellcolor{best2} $0.8951 \pm 0.0205$ & $0.0699 \pm 0.0413$ & 2.602 & 0.622 \\
     & Flower                 & \cellcolor{best2} $30.860 \pm 1.391$ & $0.8926 \pm 0.0192$ & $0.0937 \pm 0.0469$ & 6.405 & 0.624 \\
     & ReBridge-Flow          & \cellcolor{best1} $31.806 \pm 1.678$ & \cellcolor{best1} $0.9168 \pm 0.0199$ & \cellcolor{best1} $0.0448 \pm 0.0165$ & 2.041 & 0.632 \\
    \midrule
    \multirow{6}{*}{$8\times$} 
     & Degraded               & $18.779 \pm 1.960$ & $0.5349 \pm 0.0673$ & $0.5021 \pm 0.0682$ & -- & -- \\
     & PnP-Flow               & $13.514 \pm 1.766$ & $0.3987 \pm 0.0910$ & $0.6318 \pm 0.0751$ & 1.032 & 0.579 \\
     & OT-ODE                 & $23.447 \pm 1.419$ & $0.6722 \pm 0.0667$ & $0.1990 \pm 0.0850$ & 1.186 & 5.610 \\
     & Restora-Flow           & \cellcolor{best2} $25.238 \pm 2.081$ & \cellcolor{best2} $0.7535 \pm 0.0491$ & \cellcolor{best2} $0.1866 \pm 0.0815$ & 2.605 & 0.576 \\
     & Flower                 & $24.652 \pm 2.112$ & $0.7367 \pm 0.0493$ & $0.2137 \pm 0.0850$ & 5.735 & 0.578 \\
     & ReBridge-Flow          & \cellcolor{best1} $25.787 \pm 2.024$ & \cellcolor{best1} $0.7787 \pm 0.0484$ & \cellcolor{best1} $0.0963 \pm 0.0249$ & 1.761 & 0.586 \\
    \bottomrule
  \end{tabular}
  }
\end{table}

\begin{figure}[!htbp]
\centering
\includegraphics[width=0.98\textwidth]{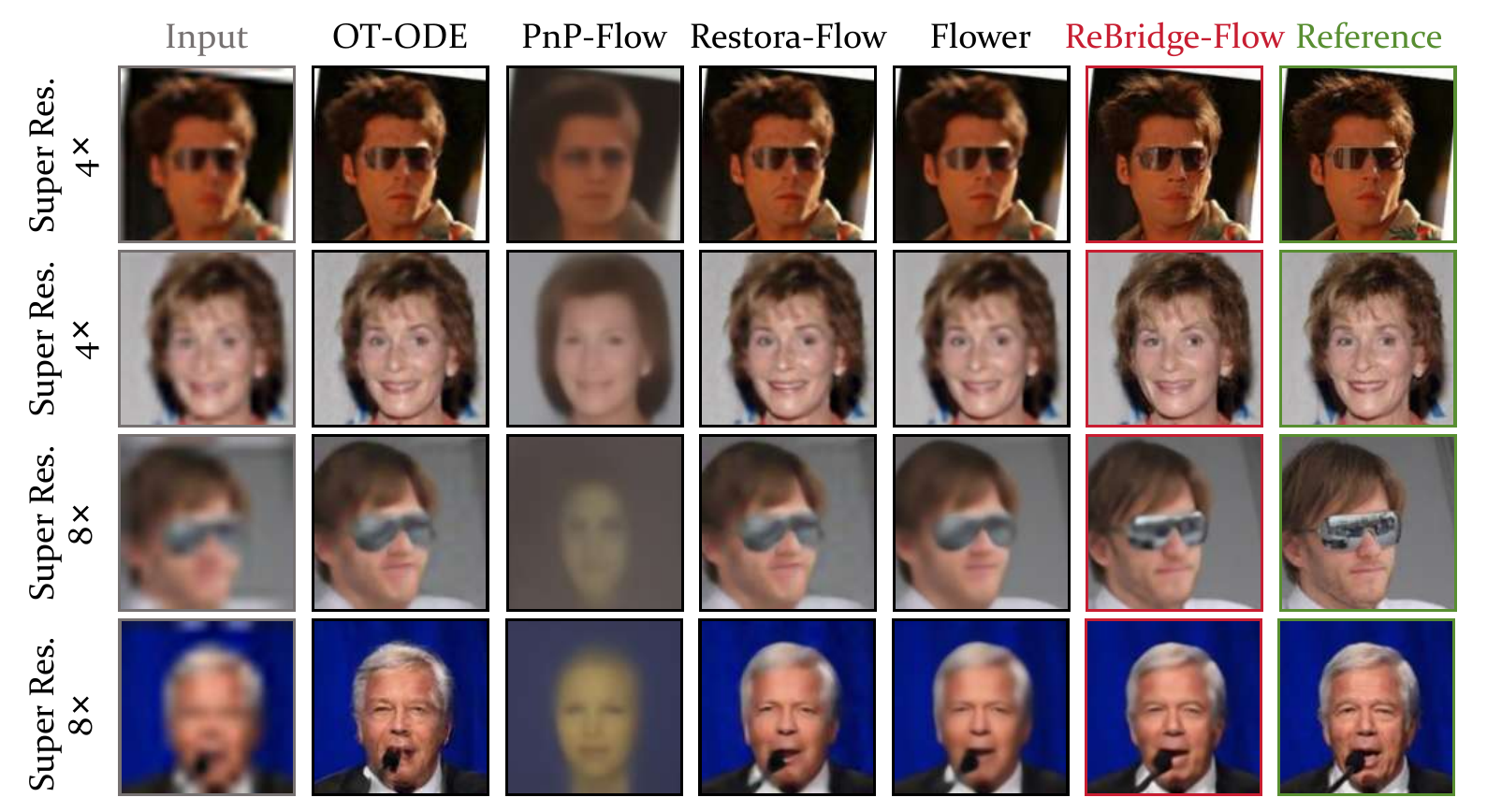} 
  \caption{Qualitative comparison of the large-factor super-resolution task on the CelebA dataset.}
\label{Figure_Challenging}
\end{figure}

\section{More Quantitative and Qualitative Results}
In this section, we provide the complete quantitative comparison results of ReBridge-Flow on the medical image datasets: IXI-Brain, PMUB, and X-Ray Hand, as shown in Table~\ref{Medical}. Furthermore, we provide additional qualitative comparison results for ReBridge-Flow, as illustrated in Figures~\ref{Figure_Supplementary_Materials_Qualitative_1}-\ref{Figure_Supplementary_Materials_Qualitative_27}.

\begin{table}[!t]
\centering
\scriptsize
\renewcommand{\arraystretch}{1.1}
\setlength{\tabcolsep}{5pt}
\caption{Hyperparameters used by all compared methods on the CelebA dataset.}
\label{Param2}
\begin{tabular}{@{}llccccc@{}}
\toprule
Method & Hyperparameter & Denoising & Deblurring & \makecell{Super-\\resolution} & \makecell{Random\\inpainting} & \makecell{Box\\inpainting} \\
\midrule
\multirow{2}{*}{OT-ODE}
& $t_0$ (initial time) & 0.3 & 0.4 & 0.1 & 0.1 & 0.1 \\
& $\gamma$ (guidance schedule) & \makecell{time-\\dependent} & \makecell{time-\\dependent} & constant & constant & \makecell{time-\\dependent} \\
\midrule
\multirow{2}{*}{Flow-Priors}
& $\lambda$ (regularization) & 100 & 1{,}000 & 10{,}000 & 10{,}000 & 10{,}000 \\
& $\eta$ (learning rate) & 0.01 & 0.01 & 0.1 & 0.01 & 0.01 \\
\midrule
\multirow{3}{*}{D-Flow}
& $\lambda$ (regularization) & 0.001 & 0.001 & 0.001 & 0.01 & 0.001 \\
& $\alpha$ (blending) & 0.1 & 0.1 & 0.1 & 0.1 & 0.1 \\
& $n_{\mathrm{iter}}$ (number of iterations) & 3 & 7 & 10 & 20 & 9 \\
\midrule
\multirow{2}{*}{PnP-Flow}
& $\alpha$ (learning-rate factor) & 0.8 & 0.01 & 0.3 & 0.01 & 0.5 \\
& $N$ (number of time steps) & 100 & 100 & 100 & 100 & 100 \\
\midrule
\multirow{2}{*}{Restora-Flow}
& $C$ (number of corrections) & 1 & 1 & 1 & 1 & 1 \\
& $N$ (number of ODE steps) & 64 & 128 & 128 & 128 & 64 \\
\midrule
\multirow{2}{*}{Flower}
& $\gamma$ (refinement uncertainty) & 0 & 0 & 0 & 0 & 0 \\
& $N$ (number of time steps) & 100 & 100 & 100 & 100 & 100 \\
\midrule
\multirow{4}{*}{ReBridge-Flow}
& $\rho$ (clean-side prior weight) & 1 & 1 & 1 & 1 & 1 \\
& $\lambda$ (source-side prior weight) & 1 & 1 & 1 & 1 & 1 \\
& $\kappa$ (bridge re-coupling weight) & 5 & 5 & 5 & 5 & 5 \\
& $K$ (number of sampling steps) & 100 & 100 & 100 & 100 & 100 \\
\bottomrule
\end{tabular}
\end{table}

\begin{table}[!t]
\centering
\scriptsize
\renewcommand{\arraystretch}{1.1}
\setlength{\tabcolsep}{5pt}
\caption{Hyperparameters used by all compared methods on the AFHQ-Cat dataset.}
\label{Param3}
\begin{tabular}{@{}llccccc@{}}
\toprule
Method & Hyperparameter & Denoising & Deblurring & \makecell{Super-\\resolution} & \makecell{Random\\inpainting} & \makecell{Box\\inpainting} \\
\midrule
\multirow{2}{*}{OT-ODE}
& $t_0$ (initial time) & 0.3 & 0.3 & 0.1 & 0.1 & 0.1 \\
& $\gamma$ (guidance schedule) & \makecell{time-\\dependent} & \makecell{time-\\dependent} & constant & constant & \makecell{time-\\dependent} \\
\midrule
\multirow{2}{*}{Flow-Priors}
& $\lambda$ (regularization) & 100 & 1{,}000 & 10{,}000 & 10{,}000 & 10{,}000 \\
& $\eta$ (learning rate) & 0.01 & 0.01 & 0.1 & 0.01 & 0.01 \\
\midrule
\multirow{3}{*}{D-Flow}
& $\lambda$ (regularization) & 0.001 & 0.01 & 0.001 & 0.001 & 0.01 \\
& $\alpha$ (blending) & 0.1 & 0.5 & 0.1 & 0.1 & 0.1 \\
& $n_{\mathrm{iter}}$ (number of iterations) & 3 & 20 & 20 & 20 & 9 \\
\midrule
\multirow{2}{*}{PnP-Flow}
& $\alpha$ (learning-rate factor) & 0.8 & 0.01 & 0.01 & 0.01 & 0.5 \\
& $N$ (number of time steps) & 100 & 500 & 500 & 200 & 100 \\
\midrule
\multirow{2}{*}{Restora-Flow}
& $C$ (number of corrections) & 1 & 1 & 1 & 1 & 1 \\
& $N$ (number of ODE steps) & 64 & 128 & 256 & 128 & 64 \\
\midrule
\multirow{2}{*}{Flower}
& $\gamma$ (refinement uncertainty) & 0 & 0 & 0 & 0 & 0 \\
& $N$ (number of time steps) & 100 & 100 & 500 & 200 & 100 \\
\midrule
\multirow{4}{*}{ReBridge-Flow}
& $\rho$ (clean-side prior weight) & 1 & 1 & 1 & 1 & 1 \\
& $\lambda$ (source-side prior weight) & 1 & 1 & 1 & 1 & 1 \\
& $\kappa$ (bridge re-coupling weight) & 5 & 5 & 5 & 5 & 5 \\
& $K$ (number of sampling steps) & 100 & 100 & 100 & 100 & 100 \\
\bottomrule
\end{tabular}
\end{table}

\begin{table}[!t]
\centering
\scriptsize
\renewcommand{\arraystretch}{1.1}
\setlength{\tabcolsep}{5pt}
\caption{Hyperparameters used by all compared methods on the COCO dataset.}
\label{Param4}
\begin{tabular}{@{}llccccc@{}}
\toprule
Method & Hyperparameter & Denoising & Deblurring & \makecell{Super-\\resolution} & \makecell{Random\\inpainting} & \makecell{Box\\inpainting} \\
\midrule
\multirow{2}{*}{OT-ODE}
& $t_0$ (initial time) & 0.3 & 0.4 & 0.1 & 0.1 & 0.1 \\
& $\gamma$ (guidance schedule) & \makecell{time-\\dependent} & \makecell{time-\\dependent} & constant & constant & \makecell{time-\\dependent} \\
\midrule
\multirow{2}{*}{Flow-Priors}
& $\lambda$ (regularization) & 100 & 1{,}000 & 10{,}000 & 10{,}000 & 10{,}000 \\
& $\eta$ (learning rate) & 0.01 & 0.01 & 0.1 & 0.01 & 0.01 \\
\midrule
\multirow{3}{*}{D-Flow}
& $\lambda$ (regularization) & 0.001 & 0.001 & 0.001 & 0.01 & 0.001 \\
& $\alpha$ (blending) & 0.1 & 0.1 & 0.1 & 0.1 & 0.1 \\
& $n_{\mathrm{iter}}$ (number of iterations) & 3 & 7 & 10 & 20 & 9 \\
\midrule
\multirow{2}{*}{PnP-Flow}
& $\alpha$ (learning-rate factor) & 0.8 & 0.01 & 0.3 & 0.01 & 0.5 \\
& $N$ (number of time steps) & 100 & 100 & 100 & 100 & 100 \\
\midrule
\multirow{2}{*}{Restora-Flow}
& $C$ (number of corrections) & 1 & 1 & 1 & 1 & 1 \\
& $N$ (number of ODE steps) & 64 & 128 & 128 & 128 & 64 \\
\midrule
\multirow{2}{*}{Flower}
& $\gamma$ (refinement uncertainty) & 0 & 0 & 0 & 0 & 0 \\
& $N$ (number of time steps) & 100 & 100 & 100 & 100 & 100 \\
\midrule
\multirow{4}{*}{ReBridge-Flow}
& $\rho$ (clean-side prior weight) & 1 & 1 & 1 & 1 & 1 \\
& $\lambda$ (source-side prior weight) & 1 & 1 & 1 & 1 & 1 \\
& $\kappa$ (bridge re-coupling weight) & 5 & 5 & 5 & 5 & 5 \\
& $K$ (number of sampling steps) & 100 & 100 & 100 & 100 & 100 \\
\bottomrule
\end{tabular}
\end{table}

\begin{table}[!t]
\centering
\scriptsize
\renewcommand{\arraystretch}{1.1}
\setlength{\tabcolsep}{6pt}
\caption{Hyperparameters used by all compared methods on the IXI-Brain dataset.}
\label{Param5}
\begin{tabular}{@{}llcccc@{}}
\toprule
Method & Hyperparameter & Denoising & \makecell{Super-\\resolution} & \makecell{Random\\inpainting} & \makecell{Box\\inpainting} \\
\midrule
\multirow{2}{*}{OT-ODE}
& $t_0$ (initial time) & 0.3 & 0.1 & 0.1 & 0.1 \\
& $\gamma$ (guidance schedule) & \makecell{time-\\dependent} & constant & constant & \makecell{time-\\dependent} \\
\midrule
\multirow{2}{*}{Flow-Priors}
& $\lambda$ (regularization) & 100 & 10{,}000 & 10{,}000 & 10{,}000 \\
& $\eta$ (learning rate) & 0.01 & 0.1 & 0.01 & 0.01 \\
\midrule
\multirow{3}{*}{D-Flow}
& $\lambda$ (regularization) & 0.001 & 0.001 & 0.001 & 0.01 \\
& $\alpha$ (blending) & 0.1 & 0.1 & 0.1 & 0.1 \\
& $n_{\mathrm{iter}}$ (number of iterations) & 3 & 20 & 20 & 9 \\
\midrule
\multirow{2}{*}{PnP-Flow}
& $\alpha$ (learning-rate factor) & 0.8 & 0.01 & 0.01 & 0.5 \\
& $N$ (number of time steps) & 100 & 500 & 200 & 100 \\
\midrule
\multirow{2}{*}{Restora-Flow}
& $C$ (number of corrections) & 1 & 1 & 1 & 1 \\
& $N$ (number of ODE steps) & 64 & 64 & 32 & 32 \\
\midrule
\multirow{2}{*}{Flower}
& $\gamma$ (refinement uncertainty) & 0 & 0 & 0 & 0 \\
& $N$ (number of time steps) & 100 & 500 & 200 & 100 \\
\midrule
\multirow{4}{*}{ReBridge-Flow}
& $\rho$ (clean-side prior weight) & 1 & 1 & 1 & 1 \\
& $\lambda$ (source-side prior weight) & 1 & 1 & 1 & 1 \\
& $\kappa$ (bridge re-coupling weight) & 5 & 5 & 5 & 5 \\
& $K$ (number of sampling steps) & 100 & 100 & 100 & 100 \\
\bottomrule
\end{tabular}
\end{table}

\begin{table}[!t]
\centering
\scriptsize
\renewcommand{\arraystretch}{1.1}
\setlength{\tabcolsep}{6pt}
\caption{Hyperparameters used by all compared methods on the PMUB dataset.}
\label{Param6}
\begin{tabular}{@{}llcccc@{}}
\toprule
Method & Hyperparameter & Denoising & \makecell{Super-\\resolution} & \makecell{Random\\inpainting} & \makecell{Box\\inpainting} \\
\midrule
\multirow{2}{*}{OT-ODE}
& $t_0$ (initial time) & 0.3 & 0.1 & 0.1 & 0.1 \\
& $\gamma$ (guidance schedule) & \makecell{time-\\dependent} & constant & constant & \makecell{time-\\dependent} \\
\midrule
\multirow{2}{*}{Flow-Priors}
& $\lambda$ (regularization) & 100 & 10{,}000 & 10{,}000 & 10{,}000 \\
& $\eta$ (learning rate) & 0.01 & 0.1 & 0.01 & 0.01 \\
\midrule
\multirow{3}{*}{D-Flow}
& $\lambda$ (regularization) & 0.001 & 0.001 & 0.001 & 0.01 \\
& $\alpha$ (blending) & 0.1 & 0.1 & 0.1 & 0.1 \\
& $n_{\mathrm{iter}}$ (number of iterations) & 3 & 20 & 20 & 9 \\
\midrule
\multirow{2}{*}{PnP-Flow}
& $\alpha$ (learning-rate factor) & 0.8 & 0.01 & 0.01 & 0.5 \\
& $N$ (number of time steps) & 100 & 500 & 200 & 100 \\
\midrule
\multirow{2}{*}{Restora-Flow}
& $C$ (number of corrections) & 1 & 1 & 1 & 1 \\
& $N$ (number of ODE steps) & 64 & 64 & 32 & 32 \\
\midrule
\multirow{2}{*}{Flower}
& $\gamma$ (refinement uncertainty) & 0 & 0 & 0 & 0 \\
& $N$ (number of time steps) & 100 & 500 & 200 & 100 \\
\midrule
\multirow{4}{*}{ReBridge-Flow}
& $\rho$ (clean-side prior weight) & 1 & 1 & 1 & 1 \\
& $\lambda$ (source-side prior weight) & 1 & 1 & 1 & 1 \\
& $\kappa$ (bridge re-coupling weight) & 5 & 5 & 5 & 5 \\
& $K$ (number of sampling steps) & 100 & 100 & 100 & 100 \\
\bottomrule
\end{tabular}
\end{table}

\begin{table}[!t]
\centering
\scriptsize
\renewcommand{\arraystretch}{1.1}
\setlength{\tabcolsep}{6pt}
\caption{Hyperparameters used by all compared methods on the X-Ray Hand dataset.}
\label{Param7}
\begin{tabular}{@{}llcccc@{}}
\toprule
Method & Hyperparameter & Denoising & \makecell{Super-\\resolution} & \makecell{Random\\inpainting} & \makecell{Box\\inpainting} \\
\midrule
\multirow{2}{*}{OT-ODE}
& $t_0$ (initial time) & 0.3 & 0.1 & 0.1 & 0.1 \\
& $\gamma$ (guidance schedule) & \makecell{time-\\dependent} & constant & constant & \makecell{time-\\dependent} \\
\midrule
\multirow{2}{*}{Flow-Priors}
& $\lambda$ (regularization) & 100 & 10{,}000 & 10{,}000 & 10{,}000 \\
& $\eta$ (learning rate) & 0.01 & 0.1 & 0.01 & 0.01 \\
\midrule
\multirow{3}{*}{D-Flow}
& $\lambda$ (regularization) & 0.001 & 0.001 & 0.001 & 0.01 \\
& $\alpha$ (blending) & 0.1 & 0.1 & 0.1 & 0.1 \\
& $n_{\mathrm{iter}}$ (number of iterations) & 3 & 20 & 20 & 9 \\
\midrule
\multirow{2}{*}{PnP-Flow}
& $\alpha$ (learning-rate factor) & 0.8 & 0.01 & 0.01 & 0.5 \\
& $N$ (number of time steps) & 100 & 500 & 200 & 100 \\
\midrule
\multirow{2}{*}{Restora-Flow}
& $C$ (number of corrections) & 1 & 1 & 1 & 1 \\
& $N$ (number of ODE steps) & 64 & 64 & 32 & 32 \\
\midrule
\multirow{2}{*}{Flower}
& $\gamma$ (refinement uncertainty) & 0 & 0 & 0 & 0 \\
& $N$ (number of time steps) & 100 & 500 & 200 & 100 \\
\midrule
\multirow{4}{*}{ReBridge-Flow}
& $\rho$ (clean-side prior weight) & 1 & 1 & 1 & 1 \\
& $\lambda$ (source-side prior weight) & 1 & 1 & 1 & 1 \\
& $\kappa$ (bridge re-coupling weight) & 5 & 5 & 5 & 5 \\
& $K$ (number of sampling steps) & 100 & 100 & 100 & 100 \\
\bottomrule
\end{tabular}
\end{table}

\begin{table}[!t]
\centering
\caption{Quantitative comparison on IXI-Brain, PMUB, and X-Ray Hand under different medical image restoration tasks. The \colorbox{best1}{best} and \colorbox{best2}{suboptimal} results are highlighted.}
\label{Medical}

\footnotesize 
\renewcommand{\arraystretch}{1.8} 
\setlength{\tabcolsep}{1.8pt} 
\setlength{\aboverulesep}{0pt}
\setlength{\belowrulesep}{0pt}

\resizebox{\textwidth}{!}{
\begin{tabular}{@{}l ccc ccc ccc ccc@{}}
\toprule

\multicolumn{13}{c}{IXI-Brain} \\
\midrule
\multirow{2}{*}{Model}
& \multicolumn{3}{c}{Denoising $\sigma_\mathbf{y}=0.08$}
& \multicolumn{3}{c}{Super Res. $2\times$}
& \multicolumn{3}{c}{Rand. Inpaint. 30\%}
& \multicolumn{3}{c}{Box Inpaint. $32\times32$} \\
\cmidrule(lr){2-4}
\cmidrule(lr){5-7}
\cmidrule(lr){8-10}
\cmidrule(l){11-13}
& PSNR$\uparrow$ & SSIM$\uparrow$ & LPIPS$\downarrow$
& PSNR$\uparrow$ & SSIM$\uparrow$ & LPIPS$\downarrow$
& PSNR$\uparrow$ & SSIM$\uparrow$ & LPIPS$\downarrow$
& PSNR$\uparrow$ & SSIM$\uparrow$ & LPIPS$\downarrow$ \\
\midrule
Degraded & 21.89$_{(0.11)}$ & 0.238$_{(0.047)}$ & 0.313$_{(0.127)}$ & 14.83$_{(1.12)}$ & 0.563$_{(0.078)}$ & 0.365$_{(0.140)}$ & 8.84$_{(0.26)}$ & 0.072$_{(0.015)}$ & 1.076$_{(0.250)}$ & 21.70$_{(1.14)}$ & 0.894$_{(0.009)}$ & 0.108$_{(0.047)}$ \\
\midrule

OT-ODE~\textcolor{gray}{\scriptsize [TMLR2024]} & 33.11$_{(1.17)}$ & 0.877$_{(0.017)}$ & \cellcolor{best2}0.023$_{(0.010)}$ & 22.07$_{(1.66)}$ & 0.713$_{(0.059)}$ & 0.110$_{(0.050)}$ & 28.53$_{(3.61)}$ & 0.837$_{(0.192)}$ & 0.083$_{(0.042)}$ & 24.25$_{(3.69)}$ & 0.901$_{(0.017)}$ & 0.073$_{(0.025)}$ \\
Flow-Priors~\textcolor{gray}{\scriptsize [NeurIPS2024]} & 25.17$_{(1.96)}$ & 0.839$_{(0.035)}$ & 0.069$_{(0.025)}$ & \cellcolor{best2}24.31$_{(1.28)}$ & 0.824$_{(0.028)}$ & \cellcolor{best1}0.071$_{(0.033)}$ & 29.63$_{(1.83)}$ & 0.919$_{(0.018)}$ & 0.032$_{(0.016)}$ & \cellcolor{best2}27.25$_{(2.61)}$ & \cellcolor{best2}0.925$_{(0.014)}$ & 0.042$_{(0.018)}$ \\
D-Flow~\textcolor{gray}{\scriptsize [PMLR2024]} & 27.02$_{(1.37)}$ & 0.835$_{(0.060)}$ & 0.095$_{(0.035)}$ & 19.53$_{(0.68)}$ & 0.654$_{(0.016)}$ & 0.276$_{(0.132)}$ & 28.52$_{(0.74)}$ & 0.794$_{(0.026)}$ & 0.091$_{(0.037)}$ & 22.70$_{(1.44)}$ & 0.785$_{(0.064)}$ & 0.087$_{(0.038)}$ \\
PnP-Flow~\textcolor{gray}{\scriptsize [ICLR2025]} & \cellcolor{best2}33.90$_{(1.15)}$ & \cellcolor{best2}0.930$_{(0.018)}$ & 0.040$_{(0.019)}$ & 21.34$_{(1.55)}$ & 0.724$_{(0.070)}$ & 0.192$_{(0.089)}$ & 32.35$_{(1.30)}$ & \cellcolor{best2}0.942$_{(0.015)}$ & 0.032$_{(0.013)}$ & 26.30$_{(3.67)}$ & 0.913$_{(0.018)}$ & 0.057$_{(0.023)}$ \\
Restora-Flow~\textcolor{gray}{\scriptsize [WACV2026]} & 29.87$_{(1.09)}$ & 0.852$_{(0.028)}$ & 0.053$_{(0.019)}$ & 22.24$_{(1.54)}$ & 0.768$_{(0.067)}$ & 0.150$_{(0.057)}$ & 30.03$_{(1.27)}$ & 0.914$_{(0.022)}$ & 0.036$_{(0.018)}$ & 25.92$_{(3.32)}$ & 0.920$_{(0.016)}$ & \cellcolor{best2}0.040$_{(0.015)}$ \\
Flower~\textcolor{gray}{\scriptsize [ICLR2026]} & 32.54$_{(1.24)}$ & 0.907$_{(0.025)}$ & 0.035$_{(0.015)}$ & 24.26$_{(1.45)}$ & \cellcolor{best2}0.865$_{(0.062)}$ & 0.107$_{(0.050)}$ & \cellcolor{best2}32.46$_{(1.43)}$ & 0.933$_{(0.017)}$ & \cellcolor{best2}0.028$_{(0.012)}$ & 26.62$_{(3.57)}$ & 0.915$_{(0.016)}$ & 0.053$_{(0.019)}$ \\
\midrule
ReBridge-Flow~\textcolor{gray}{\scriptsize [Ours]} & \cellcolor{best1}34.06$_{(1.38)}$ & \cellcolor{best1}0.942$_{(0.019)}$ & \cellcolor{best1}0.022$_{(0.009)}$ & \cellcolor{best1}26.39$_{(1.99)}$ & \cellcolor{best1}0.889$_{(0.013)}$ & \cellcolor{best2}0.076$_{(0.030)}$ & \cellcolor{best1}33.73$_{(2.46)}$ & \cellcolor{best1}0.954$_{(0.001)}$ & \cellcolor{best1}0.021$_{(0.011)}$ & \cellcolor{best1}28.21$_{(4.16)}$ & \cellcolor{best1}0.939$_{(0.002)}$ & \cellcolor{best1}0.028$_{(0.011)}$ \\

\midrule[0.8pt]

\multicolumn{13}{c}{PMUB} \\
\midrule
\multirow{2}{*}{Model}
& \multicolumn{3}{c}{Denoising $\sigma_\mathbf{y}=0.08$}
& \multicolumn{3}{c}{Super Res. $2\times$}
& \multicolumn{3}{c}{Rand. Inpaint. 30\%}
& \multicolumn{3}{c}{Box Inpaint. $60\times60$} \\
\cmidrule(lr){2-4}
\cmidrule(lr){5-7}
\cmidrule(lr){8-10}
\cmidrule(l){11-13}
& PSNR$\uparrow$ & SSIM$\uparrow$ & LPIPS$\downarrow$
& PSNR$\uparrow$ & SSIM$\uparrow$ & LPIPS$\downarrow$
& PSNR$\uparrow$ & SSIM$\uparrow$ & LPIPS$\downarrow$
& PSNR$\uparrow$ & SSIM$\uparrow$ & LPIPS$\downarrow$ \\
\midrule
Degraded & 21.50$_{(1.61)}$ & 0.751$_{(0.098)}$ & 0.085$_{(0.034)}$ & 8.93$_{(2.60)}$ & 0.132$_{(0.028)}$ & 0.592$_{(0.226)}$ & 14.71$_{(1.24)}$ & 0.410$_{(0.128)}$ & 0.606$_{(0.250)}$ & 12.79$_{(2.11)}$ & 0.848$_{(0.011)}$ & 0.075$_{(0.031)}$ \\
\midrule
OT-ODE~\textcolor{gray}{\scriptsize [TMLR2024]} & 26.49$_{(1.25)}$ & 0.899$_{(0.041)}$ & 0.026$_{(0.011)}$ & 15.23$_{(1.89)}$ & 0.581$_{(0.063)}$ & 0.124$_{(0.048)}$ & 24.48$_{(2.49)}$ & 0.906$_{(0.036)}$ & 0.020$_{(0.009)}$ & 17.16$_{(1.80)}$ & 0.930$_{(0.038)}$ & 0.058$_{(0.024)}$ \\
Flow-Priors~\textcolor{gray}{\scriptsize [NeurIPS2024]} & 25.95$_{(1.41)}$ & 0.908$_{(0.038)}$ & 0.031$_{(0.012)}$ & 18.59$_{(1.28)}$ & 0.726$_{(0.053)}$ & 0.052$_{(0.021)}$ & \cellcolor{best2}28.44$_{(2.75)}$ & \cellcolor{best1}0.967$_{(0.007)}$ & \cellcolor{best1}0.006$_{(0.004)}$ & 18.31$_{(2.53)}$ & \cellcolor{best2}0.949$_{(0.012)}$ & 0.046$_{(0.020)}$ \\
D-Flow~\textcolor{gray}{\scriptsize [PMLR2024]} & 22.96$_{(1.71)}$ & 0.786$_{(0.073)}$ & 0.090$_{(0.037)}$ & 18.18$_{(2.16)}$ & 0.714$_{(0.034)}$ & \cellcolor{best2}0.048$_{(0.018)}$ & 27.16$_{(1.76)}$ & 0.936$_{(0.025)}$ & 0.008$_{(0.004)}$ & 17.81$_{(2.23)}$ & 0.867$_{(0.037)}$ & 0.049$_{(0.021)}$ \\
PnP-Flow~\textcolor{gray}{\scriptsize [ICLR2025]} & \cellcolor{best2}28.69$_{(1.32)}$ & \cellcolor{best2}0.932$_{(0.038)}$ & \cellcolor{best1}0.011$_{(0.005)}$ & \cellcolor{best2}19.73$_{(2.07)}$ & \cellcolor{best2}0.802$_{(0.021)}$ & 0.061$_{(0.025)}$ & 26.79$_{(1.44)}$ & 0.931$_{(0.021)}$ & 0.025$_{(0.010)}$ & \cellcolor{best2}19.22$_{(2.30)}$ & 0.935$_{(0.016)}$ & 0.052$_{(0.023)}$ \\
Restora-Flow~\textcolor{gray}{\scriptsize [WACV2026]} & 24.16$_{(1.43)}$ & 0.864$_{(0.075)}$ & 0.016$_{(0.007)}$ & 18.17$_{(1.94)}$ & 0.726$_{(0.030)}$ & 0.060$_{(0.027)}$ & 24.32$_{(1.22)}$ & 0.882$_{(0.035)}$ & 0.021$_{(0.010)}$ & 17.55$_{(2.10)}$ & 0.890$_{(0.031)}$ & \cellcolor{best2}0.044$_{(0.017)}$ \\
Flower~\textcolor{gray}{\scriptsize [ICLR2026]} & 26.18$_{(1.49)}$ & 0.907$_{(0.041)}$ & 0.025$_{(0.009)}$ & 19.28$_{(2.01)}$ & 0.786$_{(0.032)}$ & 0.064$_{(0.026)}$ & 27.66$_{(1.37)}$ & \cellcolor{best2}0.938$_{(0.025)}$ & 0.024$_{(0.012)}$ & 18.49$_{(2.29)}$ & 0.919$_{(0.016)}$ & 0.057$_{(0.022)}$ \\
\midrule
ReBridge-Flow~\textcolor{gray}{\scriptsize [Ours]} & \cellcolor{best1}29.03$_{(1.50)}$ & \cellcolor{best1}0.936$_{(0.018)}$ & \cellcolor{best2}0.014$_{(0.006)}$ & \cellcolor{best1}20.09$_{(2.07)}$ & \cellcolor{best1}0.826$_{(0.021)}$ & \cellcolor{best1}0.044$_{(0.019)}$ & \cellcolor{best1}29.35$_{(1.51)}$ & \cellcolor{best1}0.967$_{(0.006)}$ & \cellcolor{best2}0.007$_{(0.004)}$ & \cellcolor{best1}19.30$_{(5.17)}$ & \cellcolor{best1}0.951$_{(0.005)}$ & \cellcolor{best1}0.041$_{(0.015)}$ \\

\midrule[0.8pt]

\multicolumn{13}{c}{X-Ray Hand} \\
\midrule
\multirow{2}{*}{Model}
& \multicolumn{3}{c}{Denoising $\sigma_\mathbf{y}=0.08$}
& \multicolumn{3}{c}{Super Res. $2\times$}
& \multicolumn{3}{c}{Rand. Inpaint. 30\%}
& \multicolumn{3}{c}{Box Inpaint. $32\times32$} \\
\cmidrule(lr){2-4}
\cmidrule(lr){5-7}
\cmidrule(lr){8-10}
\cmidrule(l){11-13}
& PSNR$\uparrow$ & SSIM$\uparrow$ & LPIPS$\downarrow$
& PSNR$\uparrow$ & SSIM$\uparrow$ & LPIPS$\downarrow$
& PSNR$\uparrow$ & SSIM$\uparrow$ & LPIPS$\downarrow$
& PSNR$\uparrow$ & SSIM$\uparrow$ & LPIPS$\downarrow$ \\
\midrule
Degraded & 20.62$_{(0.93)}$ & 0.461$_{(0.045)}$ & 0.310$_{(0.112)}$ & 9.37$_{(0.60)}$ & 0.242$_{(0.069)}$ & 0.502$_{(0.198)}$ & 11.00$_{(0.90)}$ & 0.137$_{(0.031)}$ & 1.100$_{(0.250)}$ & 19.94$_{(1.24)}$ & 0.631$_{(0.070)}$ & 0.303$_{(0.127)}$ \\
\midrule
OT-ODE~\textcolor{gray}{\scriptsize [TMLR2024]} & 29.69$_{(1.79)}$ & 0.885$_{(0.072)}$ & 0.027$_{(0.011)}$ & 18.78$_{(1.64)}$ & 0.451$_{(0.067)}$ & 0.152$_{(0.063)}$ & 21.08$_{(1.99)}$ & 0.568$_{(0.080)}$ & 0.136$_{(0.068)}$ & 24.96$_{(3.70)}$ & \cellcolor{best1}0.847$_{(0.095)}$ & \cellcolor{best1}0.045$_{(0.016)}$ \\
Flow-Priors~\textcolor{gray}{\scriptsize [NeurIPS2024]} & 24.04$_{(2.52)}$ & 0.858$_{(0.067)}$ & 0.054$_{(0.024)}$ & 21.84$_{(1.38)}$ & 0.810$_{(0.062)}$ & 0.068$_{(0.031)}$ & 17.96$_{(2.77)}$ & 0.747$_{(0.075)}$ & 0.099$_{(0.047)}$ & 21.78$_{(2.58)}$ & 0.814$_{(0.072)}$ & 0.063$_{(0.026)}$ \\
D-Flow~\textcolor{gray}{\scriptsize [PMLR2024]} & 23.77$_{(1.91)}$ & 0.743$_{(0.079)}$ & 0.065$_{(0.025)}$ & 22.60$_{(2.96)}$ & 0.792$_{(0.113)}$ & 0.070$_{(0.031)}$ & 16.55$_{(2.86)}$ & 0.639$_{(0.122)}$ & 0.125$_{(0.067)}$ & 20.90$_{(2.40)}$ & 0.782$_{(0.079)}$ & 0.070$_{(0.025)}$ \\
PnP-Flow~\textcolor{gray}{\scriptsize [ICLR2025]} & \cellcolor{best2}30.81$_{(1.88)}$ & \cellcolor{best2}0.896$_{(0.074)}$ & \cellcolor{best1}0.023$_{(0.010)}$ & 24.26$_{(2.30)}$ & 0.855$_{(0.070)}$ & 0.056$_{(0.024)}$ & 23.68$_{(2.15)}$ & 0.830$_{(0.067)}$ & \cellcolor{best2}0.060$_{(0.026)}$ & 24.33$_{(2.50)}$ & \cellcolor{best2}0.838$_{(0.081)}$ & 0.053$_{(0.020)}$ \\
Restora-Flow~\textcolor{gray}{\scriptsize [WACV2026]} & 25.33$_{(1.20)}$ & 0.610$_{(0.073)}$ & 0.065$_{(0.024)}$ & 23.19$_{(2.39)}$ & 0.825$_{(0.080)}$ & \cellcolor{best2}0.052$_{(0.020)}$ & 22.54$_{(2.22)}$ & 0.802$_{(0.072)}$ & 0.068$_{(0.035)}$ & 23.98$_{(2.27)}$ & 0.819$_{(0.085)}$ & \cellcolor{best2}0.048$_{(0.020)}$ \\
Flower~\textcolor{gray}{\scriptsize [ICLR2026]} & 30.46$_{(1.69)}$ & 0.883$_{(0.063)}$ & 0.034$_{(0.013)}$ & \cellcolor{best2}24.69$_{(2.03)}$ & \cellcolor{best2}0.861$_{(0.076)}$ & \cellcolor{best2}0.052$_{(0.022)}$ & \cellcolor{best2}24.36$_{(2.23)}$ & \cellcolor{best2}0.832$_{(0.079)}$ & 0.071$_{(0.032)}$ & \cellcolor{best2}25.13$_{(2.56)}$ & 0.826$_{(0.079)}$ & 0.056$_{(0.019)}$ \\
\midrule
ReBridge-Flow~\textcolor{gray}{\scriptsize [Ours]} & \cellcolor{best1}32.28$_{(2.00)}$ & \cellcolor{best1}0.913$_{(0.031)}$ & \cellcolor{best2}0.025$_{(0.011)}$ & \cellcolor{best1}26.34$_{(2.53)}$ & \cellcolor{best1}0.884$_{(0.035)}$ & \cellcolor{best1}0.049$_{(0.019)}$ & \cellcolor{best1}25.95$_{(2.39)}$ & \cellcolor{best1}0.863$_{(0.002)}$ & \cellcolor{best1}0.057$_{(0.024)}$ & \cellcolor{best1}25.34$_{(2.73)}$ & \cellcolor{best1}0.847$_{(0.001)}$ & 0.051$_{(0.023)}$ \\

\bottomrule
\end{tabular}
}
\end{table}

\begin{figure}[!t]
\centering
\includegraphics[width=0.88\textwidth]{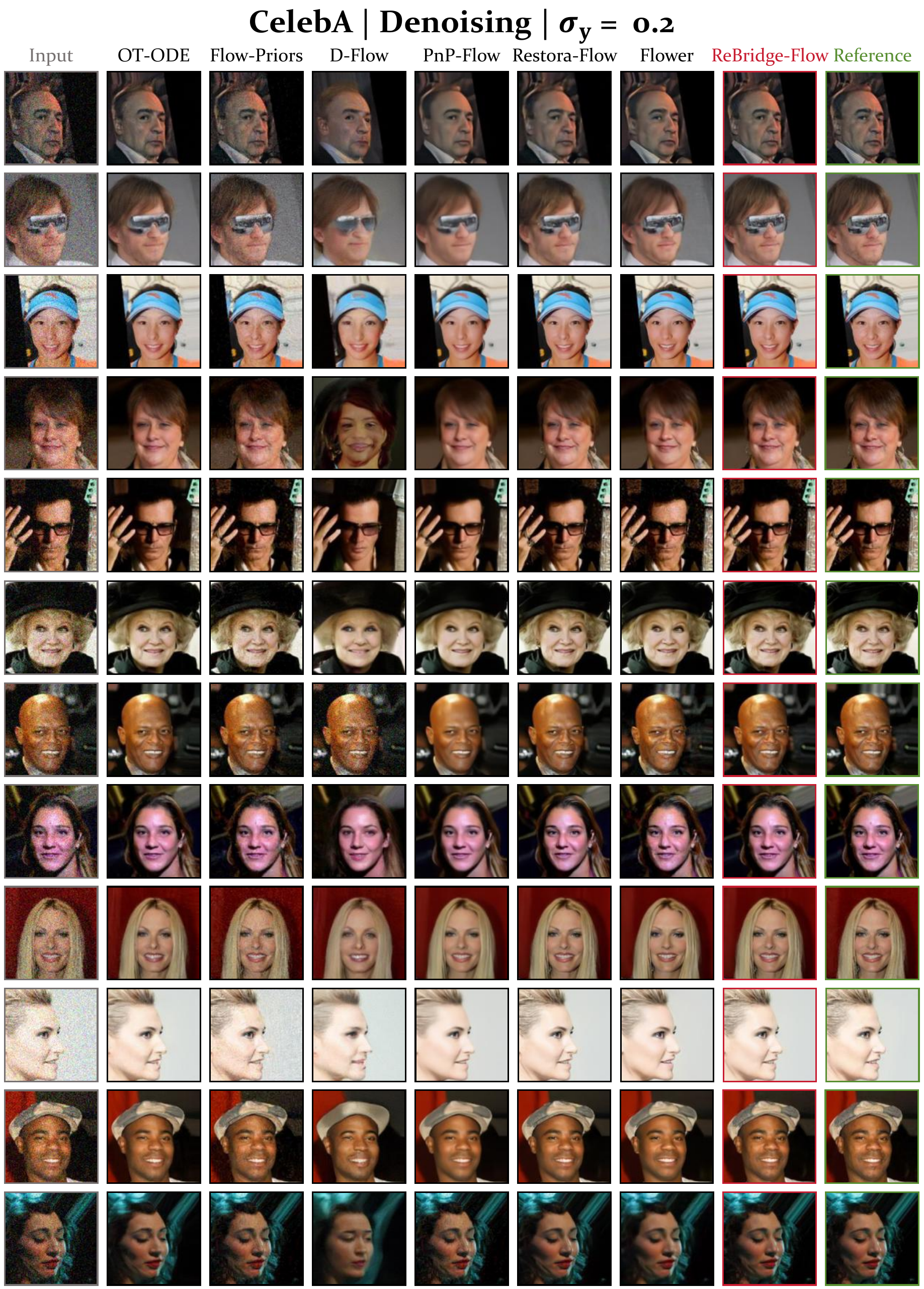} 
\caption{Qualitative visual comparison of Denoising ($\sigma_\mathbf{y} = 0.2$) on CelebA.}
\label{Figure_Supplementary_Materials_Qualitative_1}
\end{figure}

\begin{figure}[!t]
\centering
\includegraphics[width=0.88\textwidth]{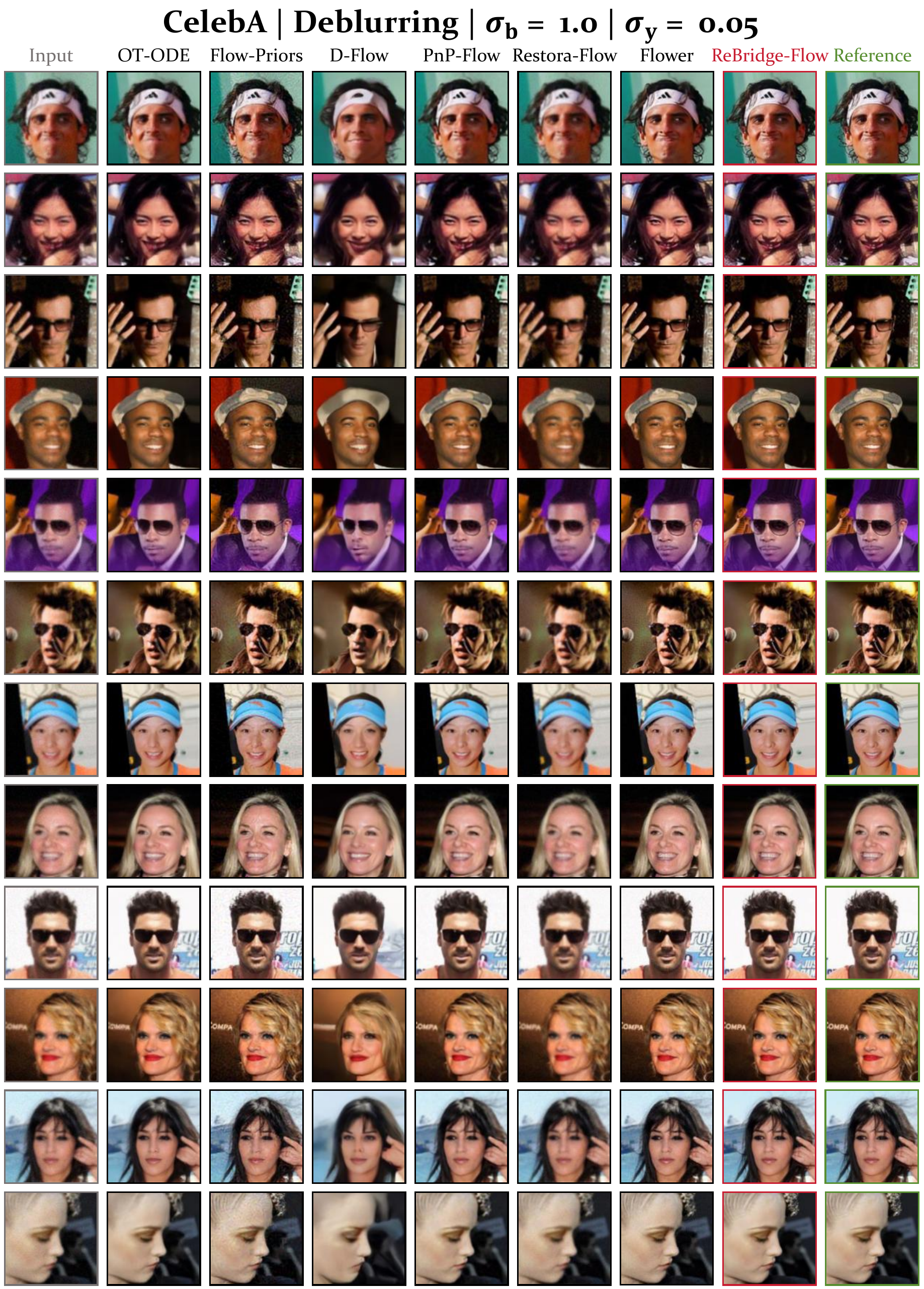} 
\caption{Qualitative visual comparison of Deblurring ($\sigma_{b} = 1.0$, $\sigma_\mathbf{y} = 0.05$) on CelebA.}
\label{Figure_Supplementary_Materials_Qualitative_2}
\end{figure}

\begin{figure}[!t]
\centering
\includegraphics[width=0.88\textwidth]{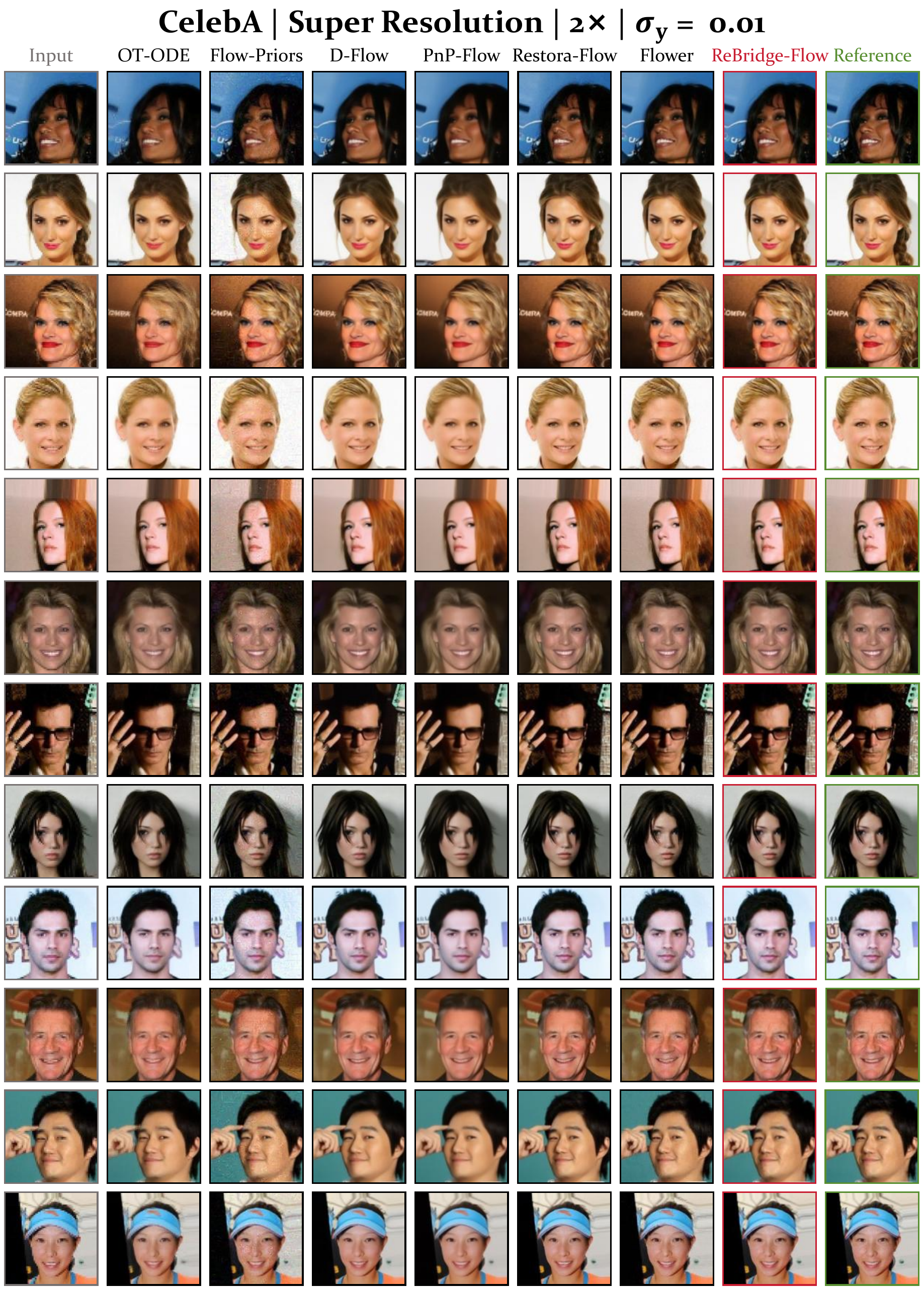} 
\caption{Qualitative visual comparison of Super Resolution ($2\times$, $\sigma_\mathbf{y} = 0.01$) on CelebA.}
\label{Figure_Supplementary_Materials_Qualitative_3}
\end{figure}

\begin{figure}[!t]
\centering
\includegraphics[width=0.88\textwidth]{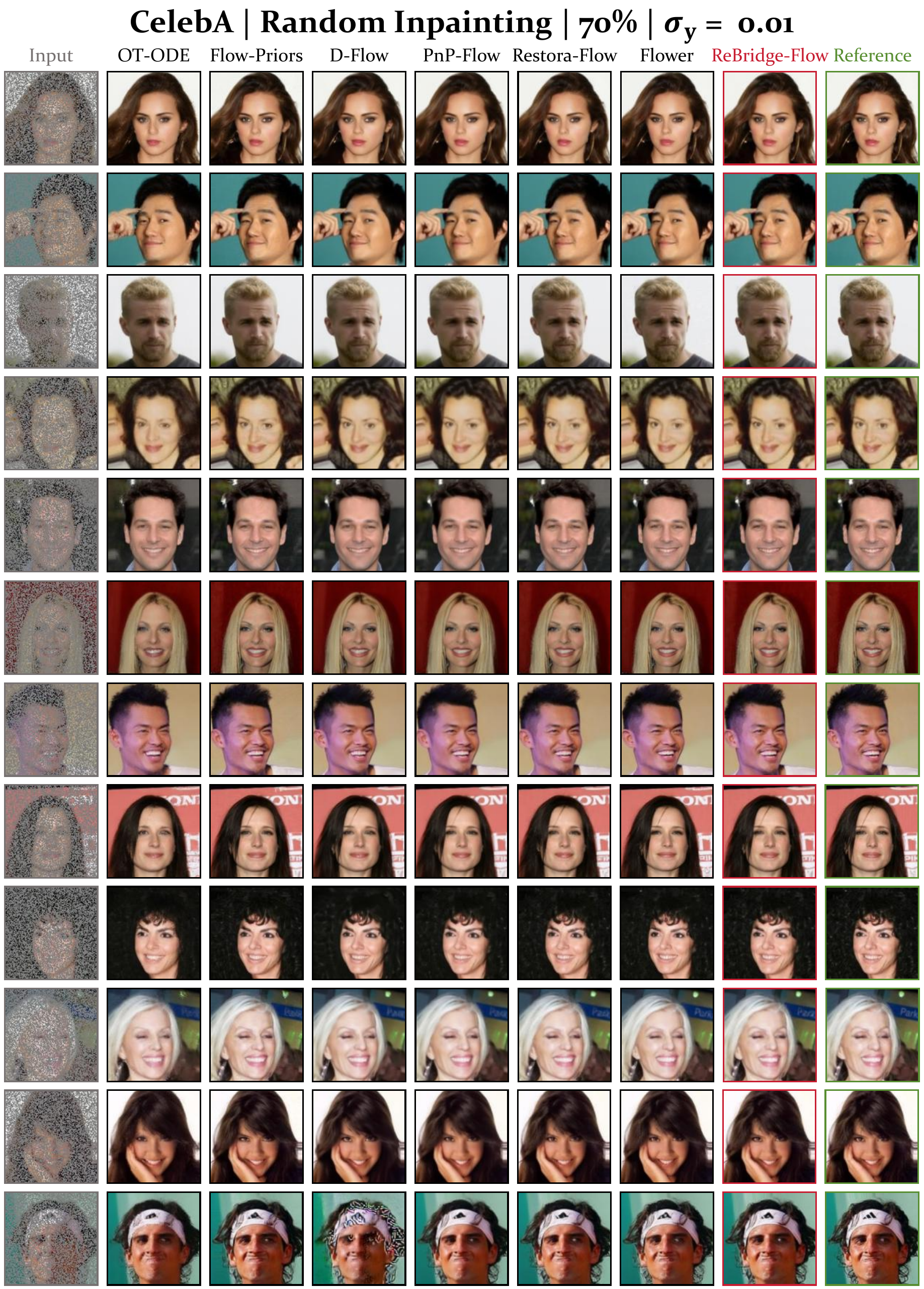} 
\caption{Qualitative visual comparison of Random Inpainting (70\%, $\sigma_\mathbf{y} = 0.01$) on CelebA.}
\label{Figure_Supplementary_Materials_Qualitative_4}
\end{figure}

\begin{figure}[!t]
\centering
\includegraphics[width=0.88\textwidth]{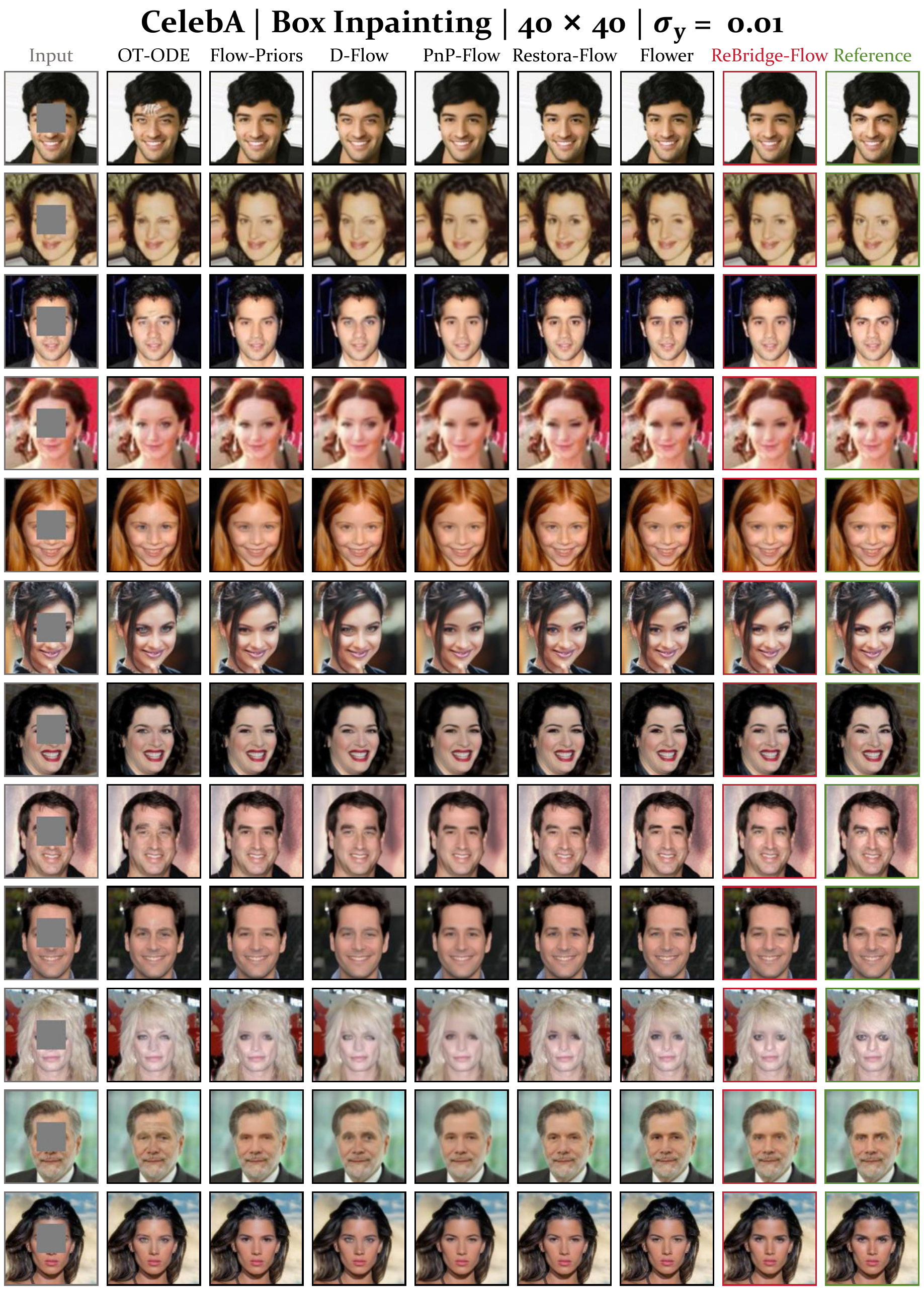} 
\caption{Qualitative visual comparison of Box Inpainting ($40 \times 40$, $\sigma_\mathbf{y} = 0.01$) on CelebA.}
\label{Figure_Supplementary_Materials_Qualitative_5}
\end{figure}

\begin{figure}[!t]
\centering
\includegraphics[width=0.88\textwidth]{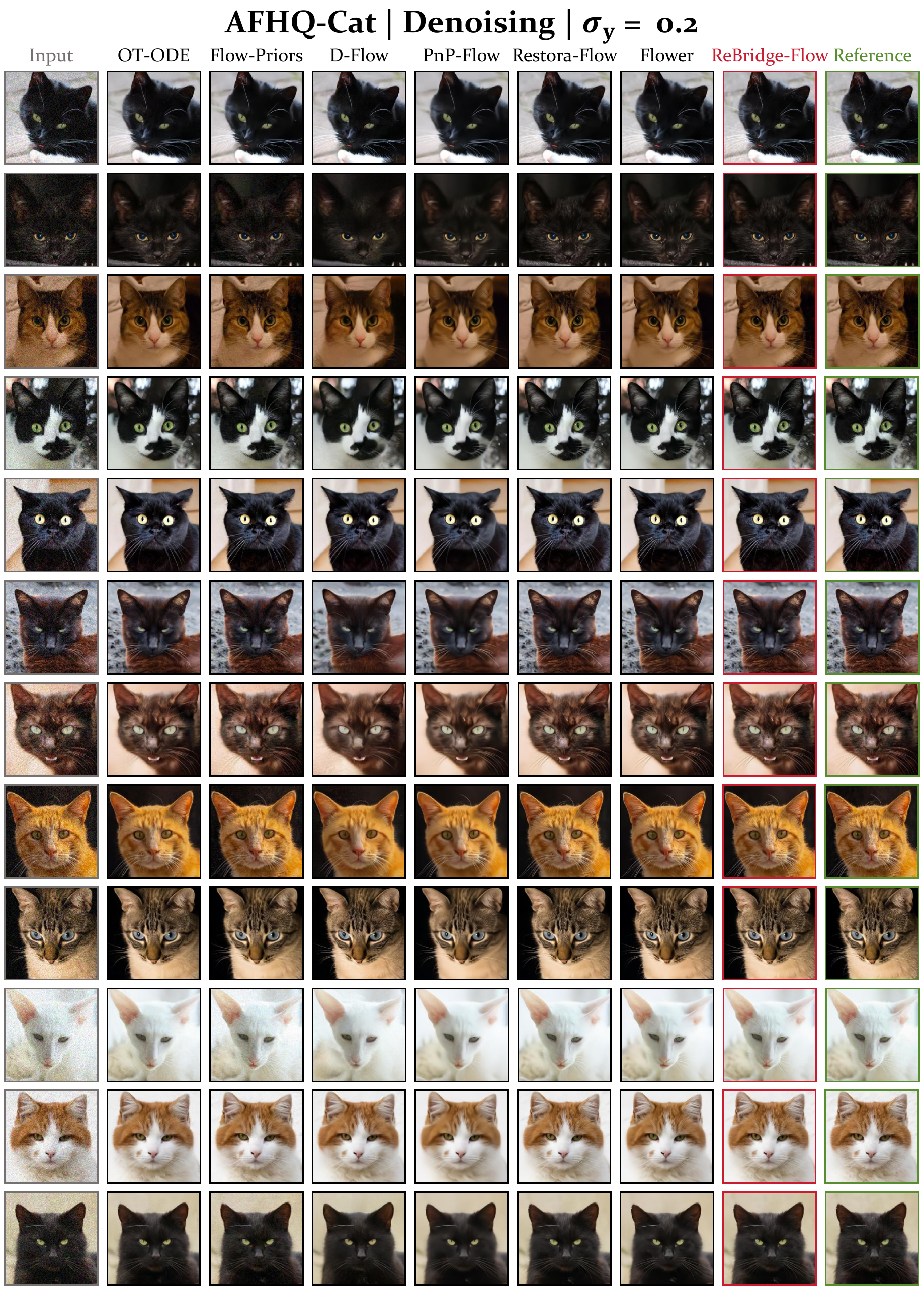} 
\caption{Qualitative visual comparison of Denoising ($\sigma_\mathbf{y} = 0.2$) on AFHQ-Cat.}
\label{Figure_Supplementary_Materials_Qualitative_6}
\end{figure}

\begin{figure}[!t]
\centering
\includegraphics[width=0.88\textwidth]{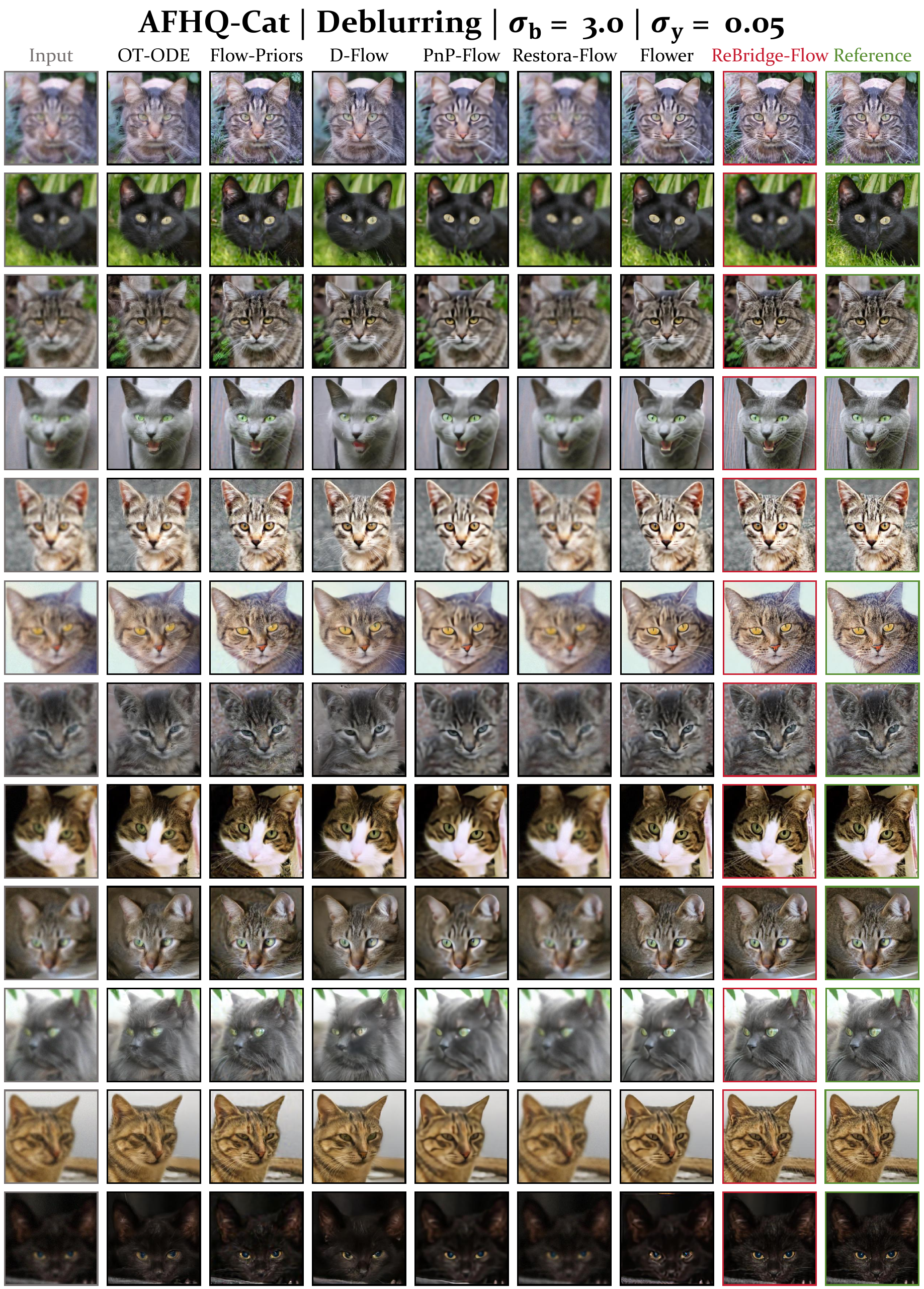} 
\caption{Qualitative visual comparison of Deblurring ($\sigma_b = 3.0$, $\sigma_\mathbf{y} = 0.05$) on AFHQ-Cat.}
\label{Figure_Supplementary_Materials_Qualitative_7}
\end{figure}

\begin{figure}[!t]
\centering
\includegraphics[width=0.88\textwidth]{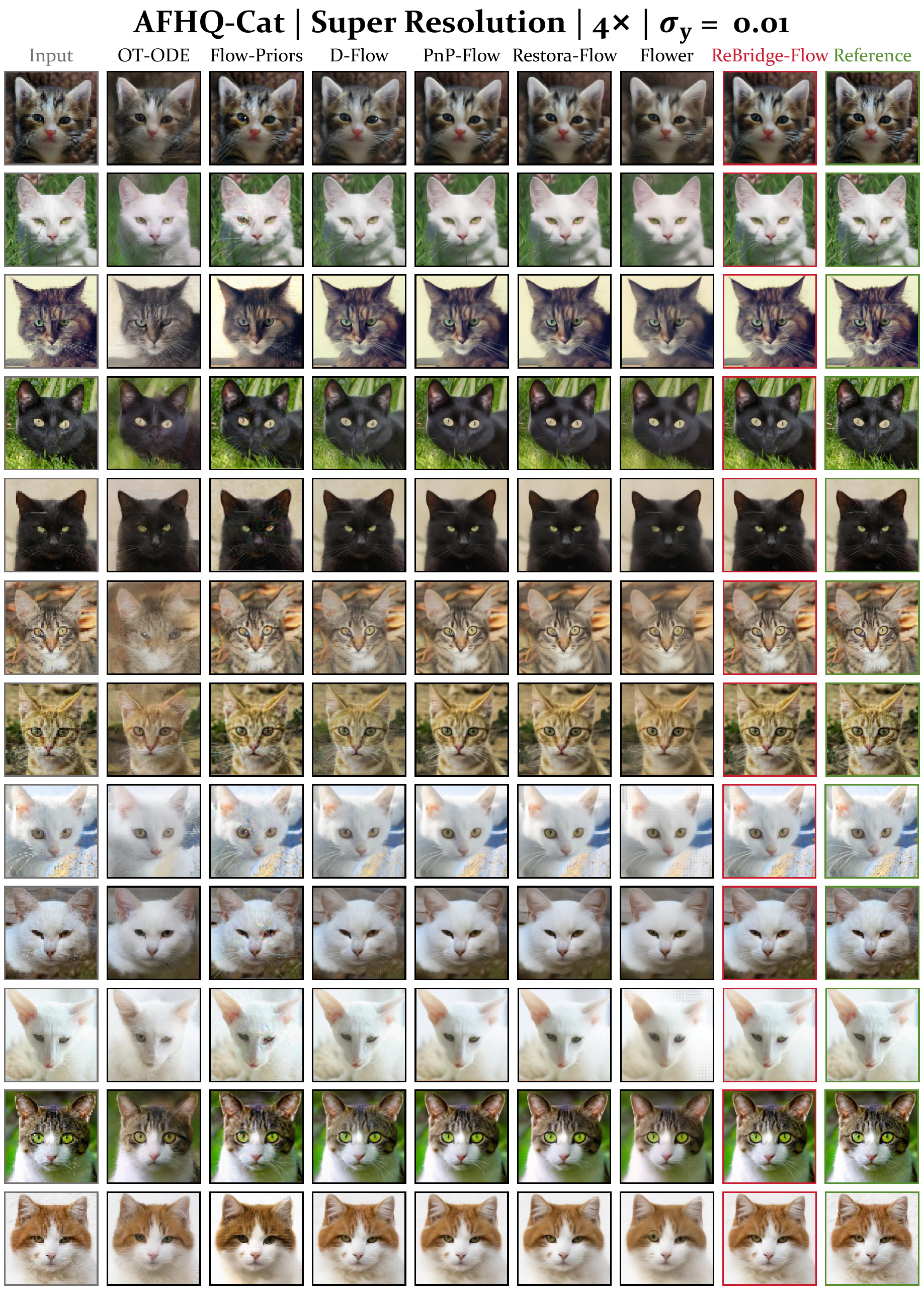} 
\caption{Qualitative visual comparison of Super Resolution ($4\times$, $\sigma_\mathbf{y} = 0.01$) on AFHQ-Cat.}
\label{Figure_Supplementary_Materials_Qualitative_8}
\end{figure}

\begin{figure}[!t]
\centering
\includegraphics[width=0.88\textwidth]{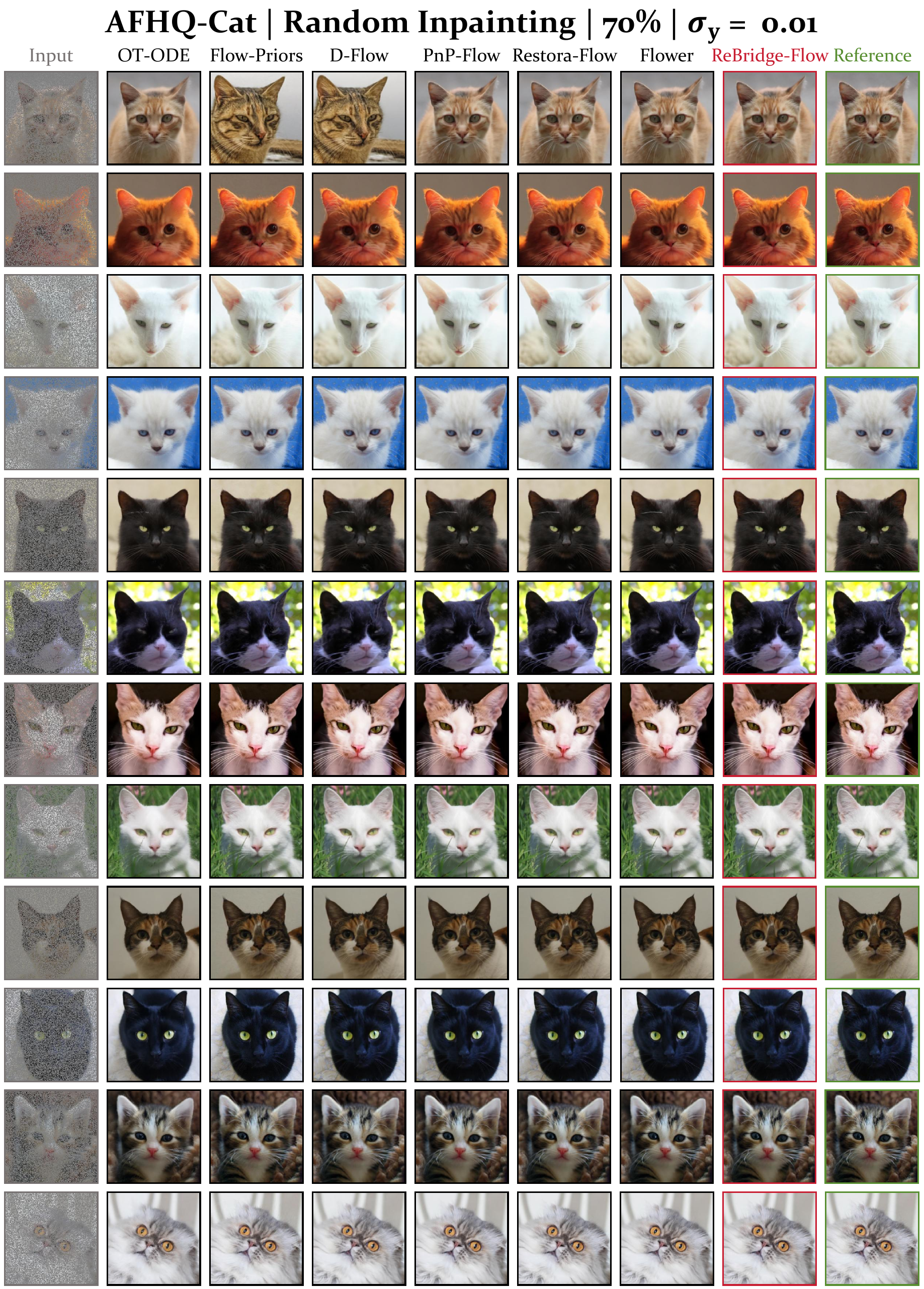} 
\caption{Qualitative visual comparison of Random Inpainting (70\%, $\sigma_\mathbf{y} = 0.01$) on AFHQ-Cat.}
\label{Figure_Supplementary_Materials_Qualitative_9}
\end{figure}

\begin{figure}[!t]
\centering
\includegraphics[width=0.88\textwidth]{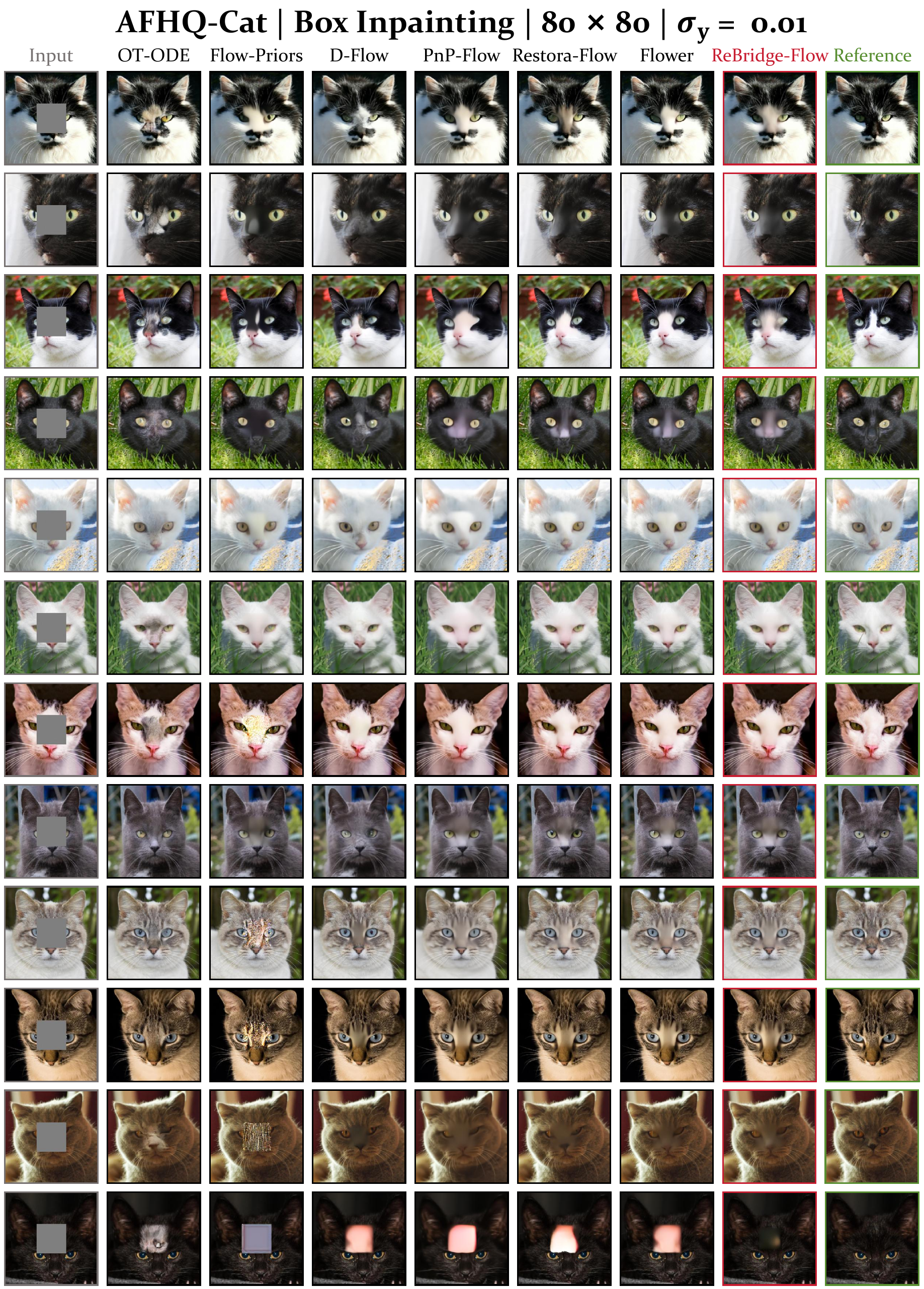} 
\caption{Qualitative visual comparison of Box Inpainting ($80 \times 80$, $\sigma_\mathbf{y} = 0.01$) on AFHQ-Cat.}
\label{Figure_Supplementary_Materials_Qualitative_10}
\end{figure}

\begin{figure}[!t]
\centering
\includegraphics[width=0.88\textwidth]{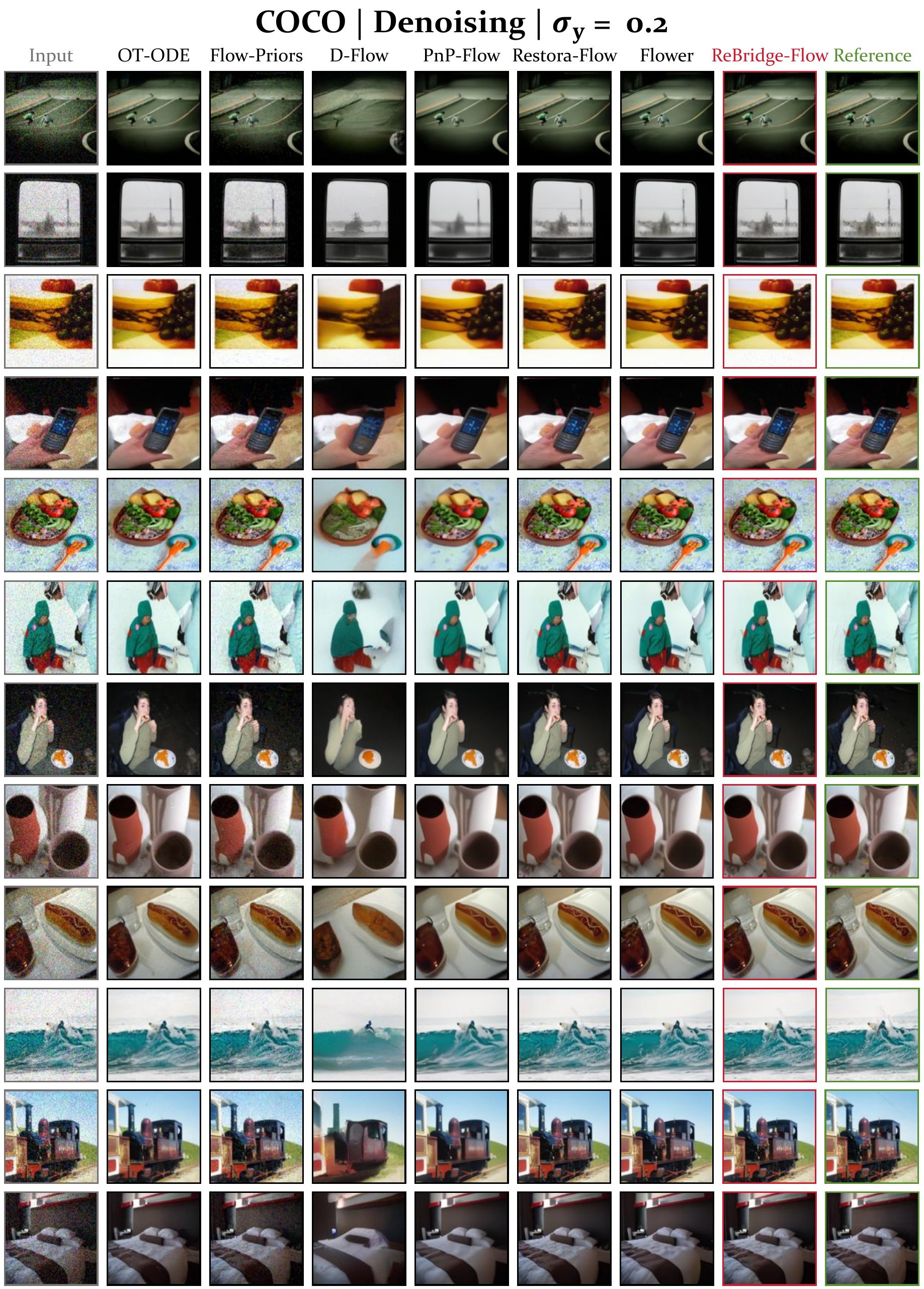} 
\caption{Qualitative visual comparison of Denoising ($\sigma_\mathbf{y} = 0.2$) on COCO.}
\label{Figure_Supplementary_Materials_Qualitative_11}
\end{figure}

\begin{figure}[!t]
\centering
\includegraphics[width=0.88\textwidth]{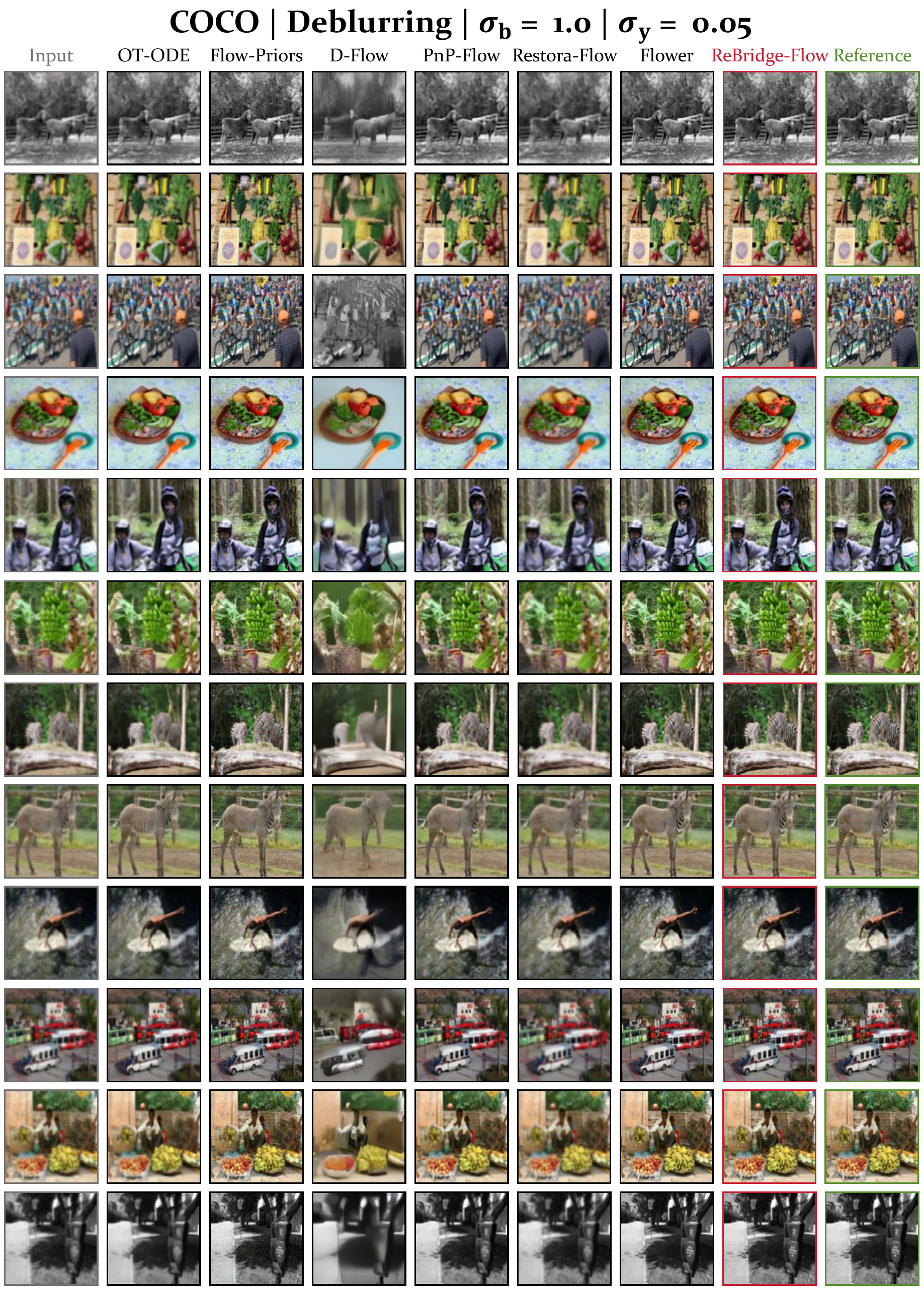} 
\caption{Qualitative visual comparison of Deblurring ($\sigma_b = 1.0$, $\sigma_\mathbf{y} = 0.05$) on COCO.}
\label{Figure_Supplementary_Materials_Qualitative_12}
\end{figure}

\begin{figure}[!t]
\centering
\includegraphics[width=0.88\textwidth]{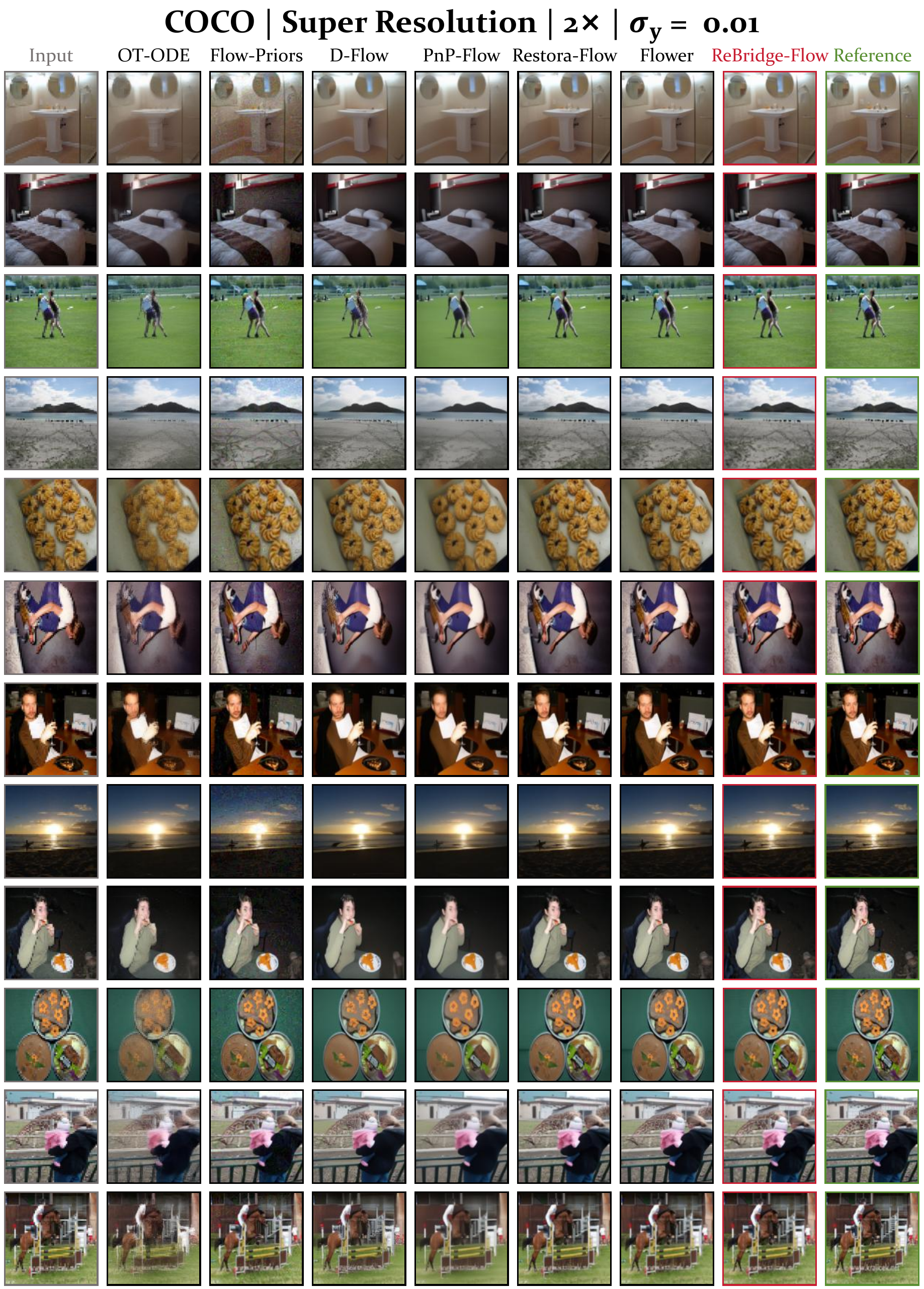} 
\caption{Qualitative visual comparison of Super Resolution ($2\times$, $\sigma_\mathbf{y} = 0.01$) on COCO.}
\label{Figure_Supplementary_Materials_Qualitative_13}
\end{figure}

\begin{figure}[!t]
\centering
\includegraphics[width=0.88\textwidth]{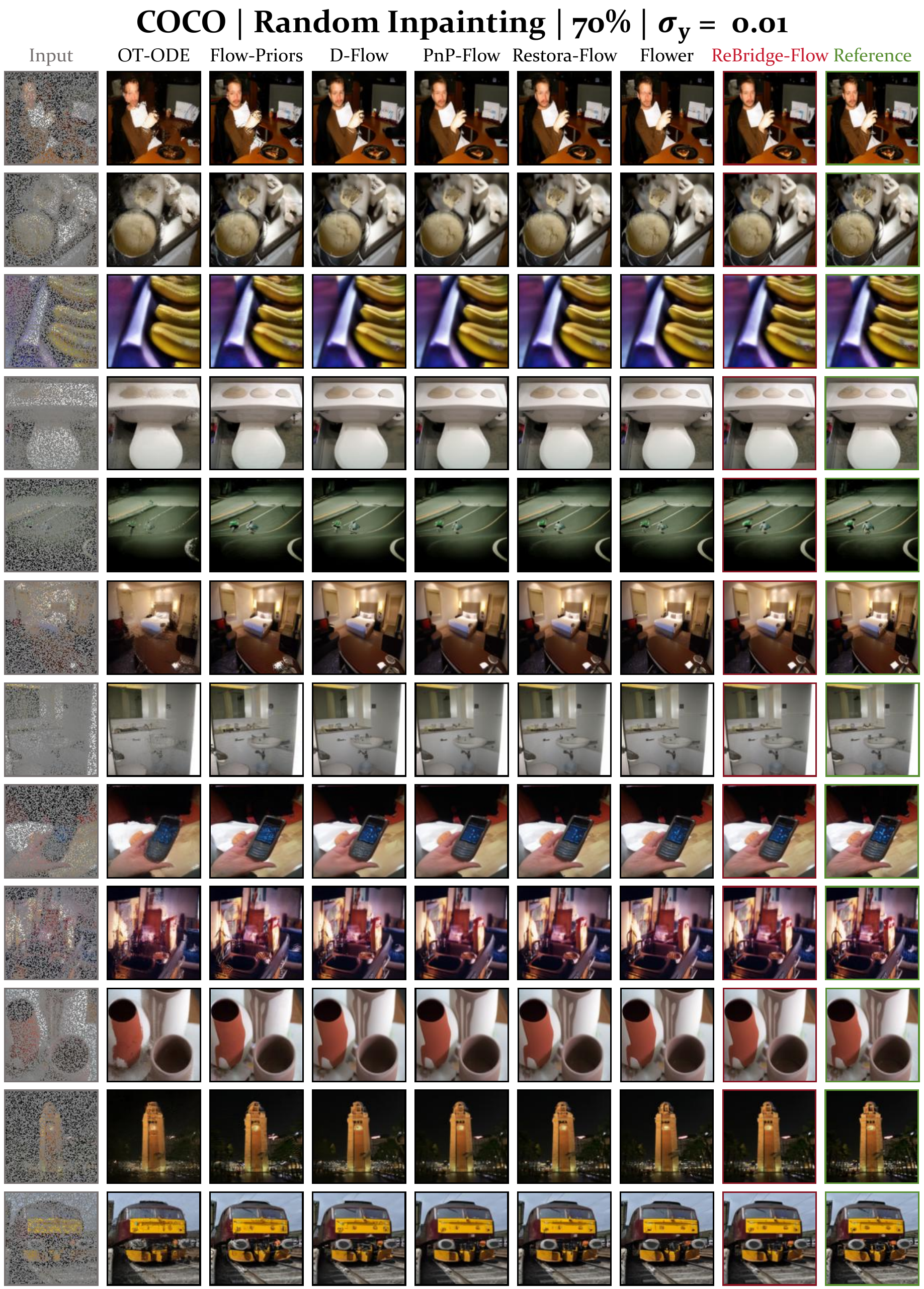} 
\caption{Qualitative visual comparison of Random Inpainting (70\%, $\sigma_\mathbf{y} = 0.01$) on COCO.}
\label{Figure_Supplementary_Materials_Qualitative_14}
\end{figure}

\begin{figure}[!t]
\centering
\includegraphics[width=0.88\textwidth]{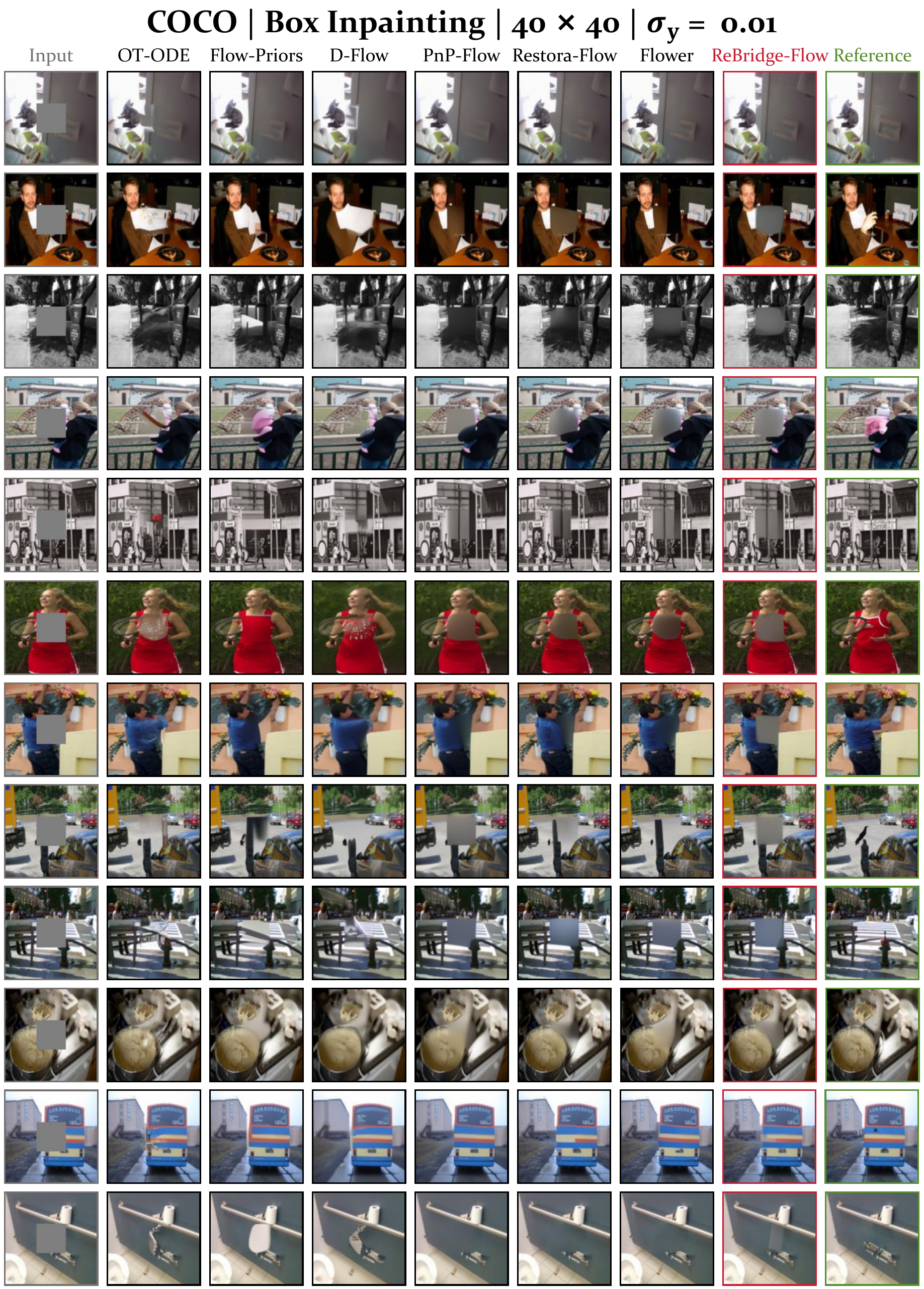} 
\caption{Qualitative visual comparison of Box Inpainting ($40 \times 40$, $\sigma_\mathbf{y} = 0.01$) on COCO.}
\label{Figure_Supplementary_Materials_Qualitative_15}
\end{figure}

\begin{figure}[!t]
\centering
\includegraphics[width=0.88\textwidth]{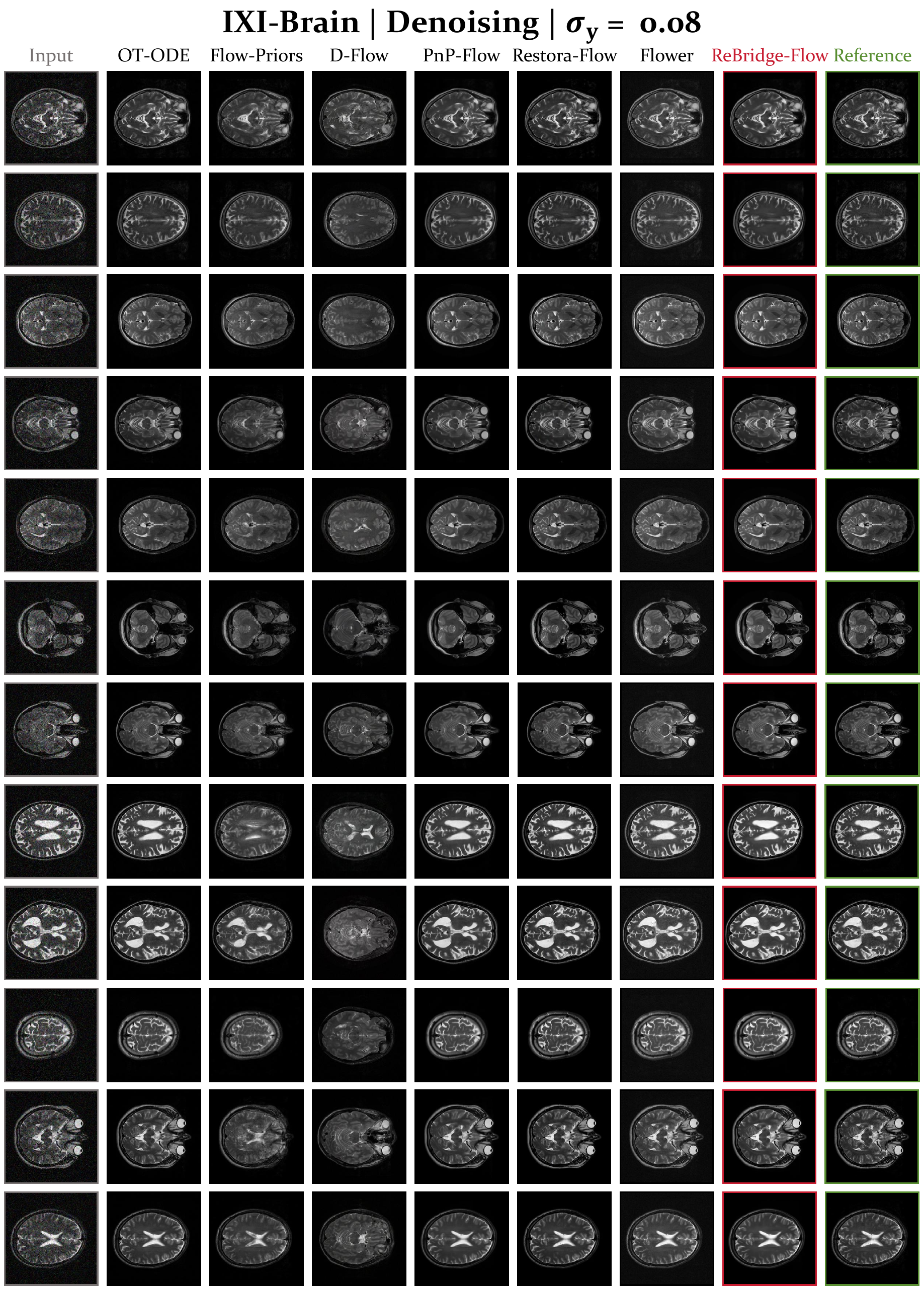} 
\caption{Qualitative visual comparison of Denoising ($\sigma_\mathbf{y} = 0.08$) on IXI-Brain.}
\label{Figure_Supplementary_Materials_Qualitative_16}
\end{figure}

\begin{figure}[!t]
\centering
\includegraphics[width=0.88\textwidth]{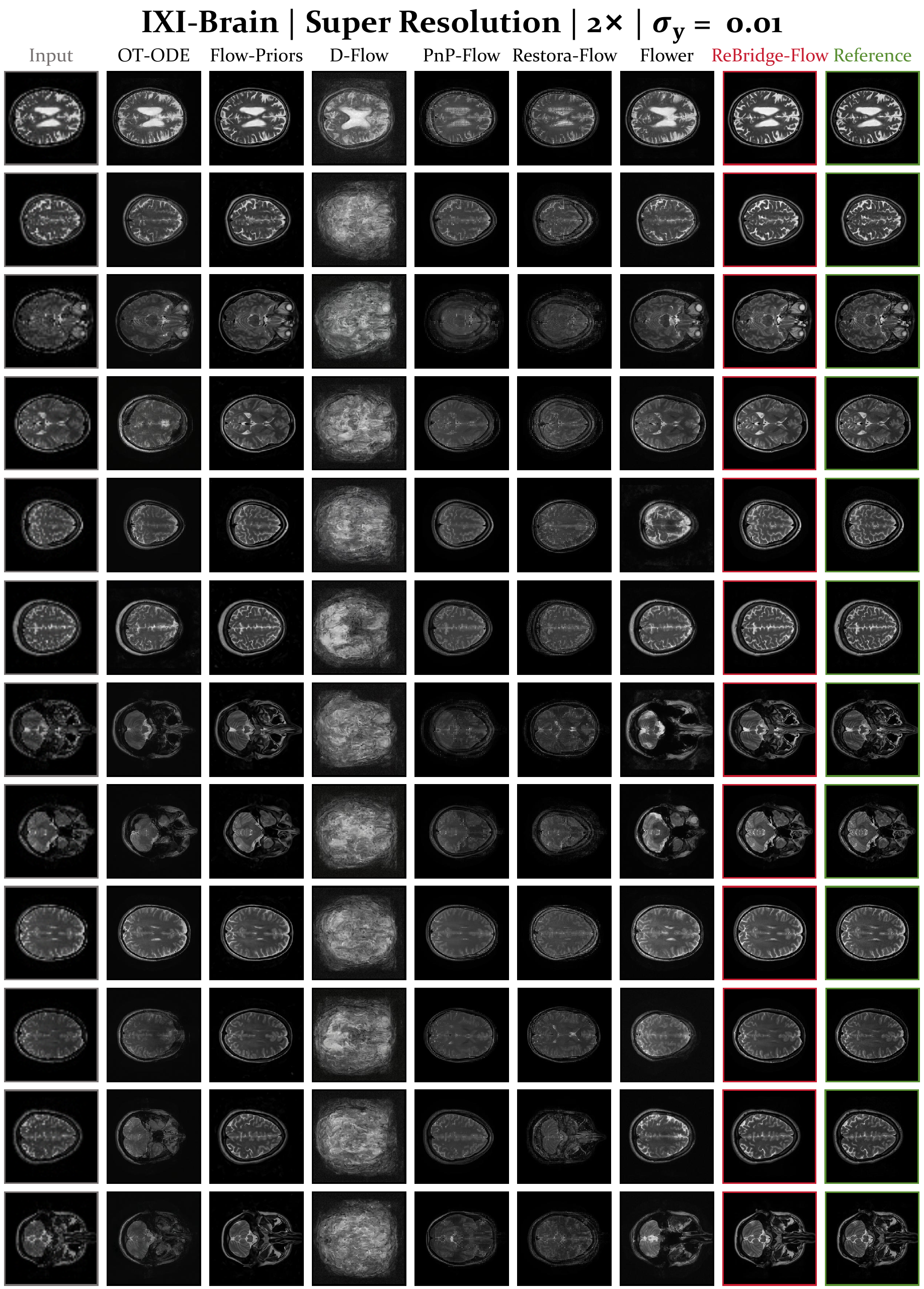} 
\caption{Qualitative visual comparison of Super Resolution ($2\times$, $\sigma_\mathbf{y} = 0.01$) on IXI-Brain.}
\label{Figure_Supplementary_Materials_Qualitative_17}
\end{figure}

\begin{figure}[!t]
\centering
\includegraphics[width=0.88\textwidth]{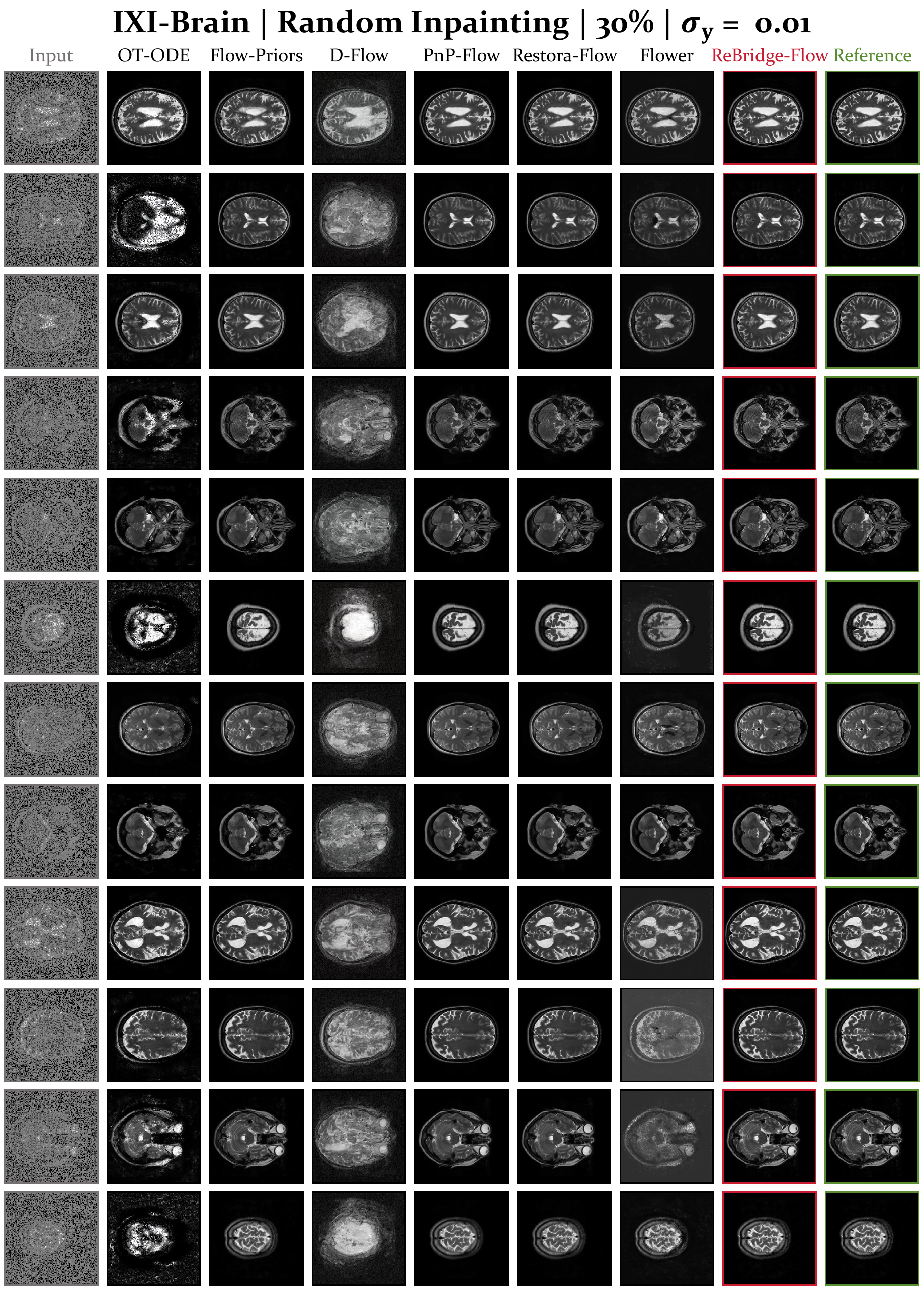} 
\caption{Qualitative visual comparison of Random Inpainting (30\%, $\sigma_\mathbf{y} = 0.01$) on IXI-Brain.}
\label{Figure_Supplementary_Materials_Qualitative_18}
\end{figure}

\begin{figure}[!t]
\centering
\includegraphics[width=0.88\textwidth]{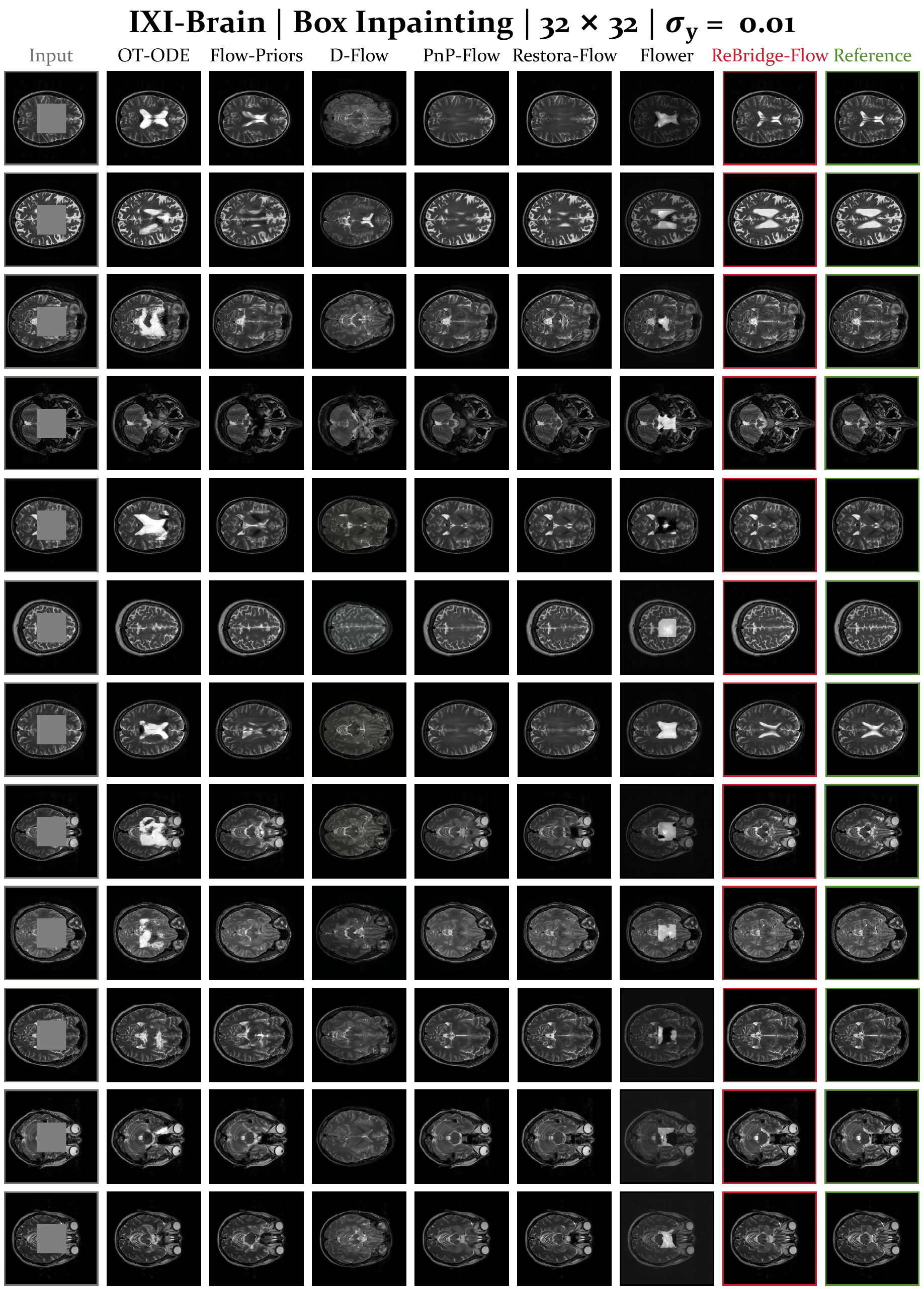} 
\caption{Qualitative visual comparison of Box Inpainting ($32 \times 32$, $\sigma_\mathbf{y} = 0.01$) on IXI-Brain.}
\label{Figure_Supplementary_Materials_Qualitative_19}
\end{figure}

\begin{figure}[!t]
\centering
\includegraphics[width=0.88\textwidth]{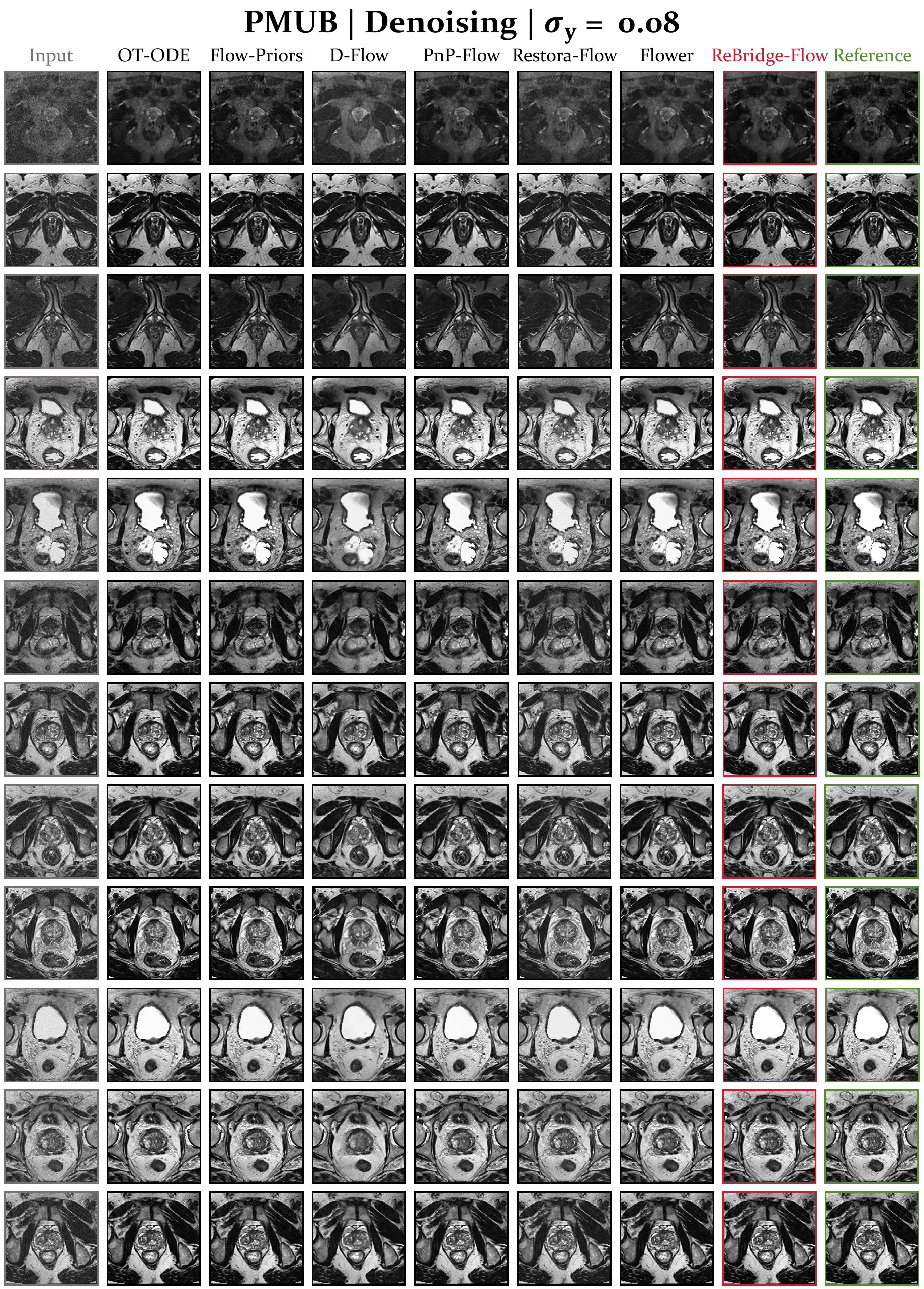} 
\caption{Qualitative visual comparison of Denoising ($\sigma_\mathbf{y} = 0.08$) on PMUB.}
\label{Figure_Supplementary_Materials_Qualitative_20}
\end{figure}

\begin{figure}[!t]
\centering
\includegraphics[width=0.88\textwidth]{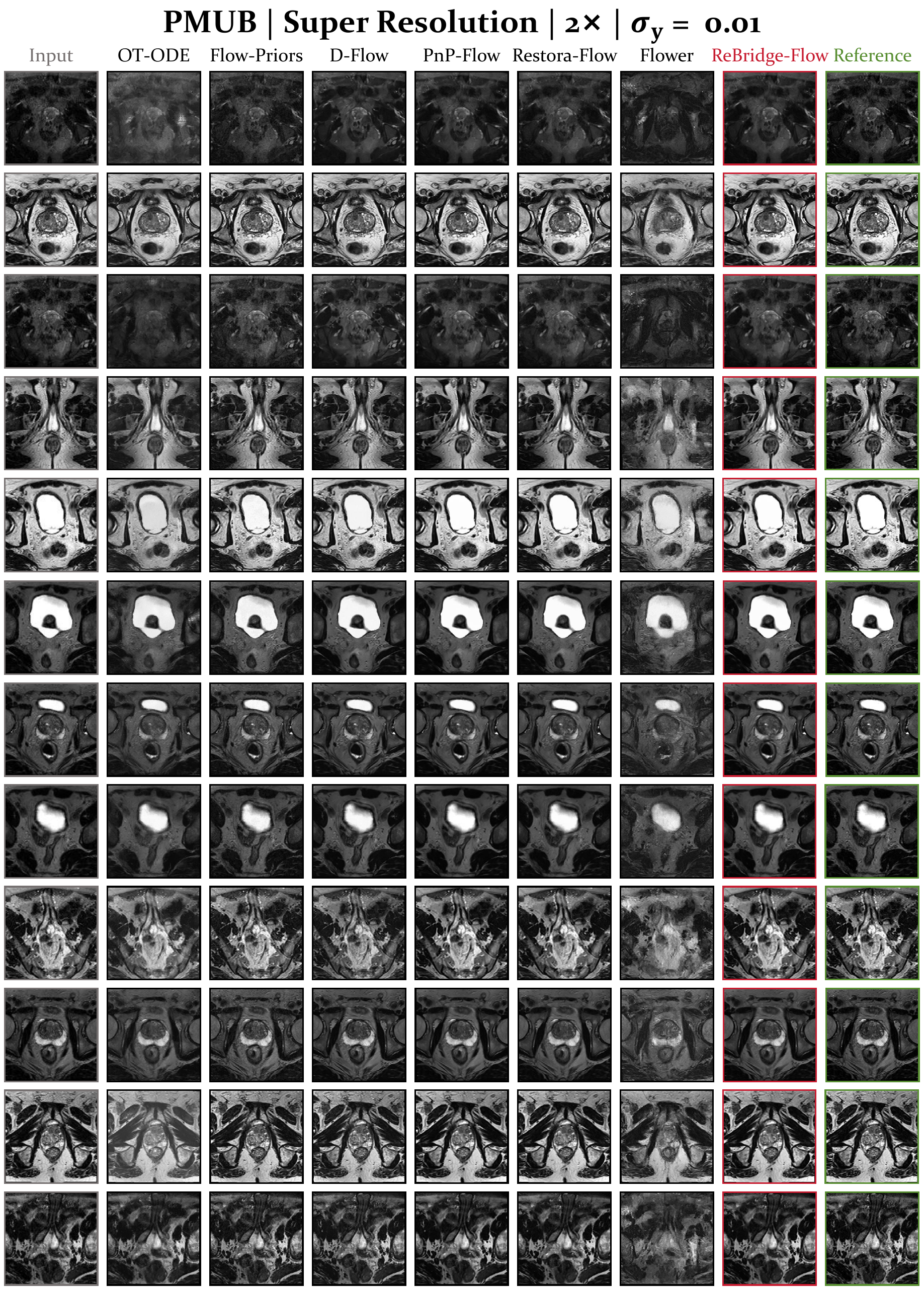} 
\caption{Qualitative visual comparison of Super Resolution ($2\times$, $\sigma_\mathbf{y} = 0.01$) on PMUB.}
\label{Figure_Supplementary_Materials_Qualitative_21}
\end{figure}

\begin{figure}[!t]
\centering
\includegraphics[width=0.88\textwidth]{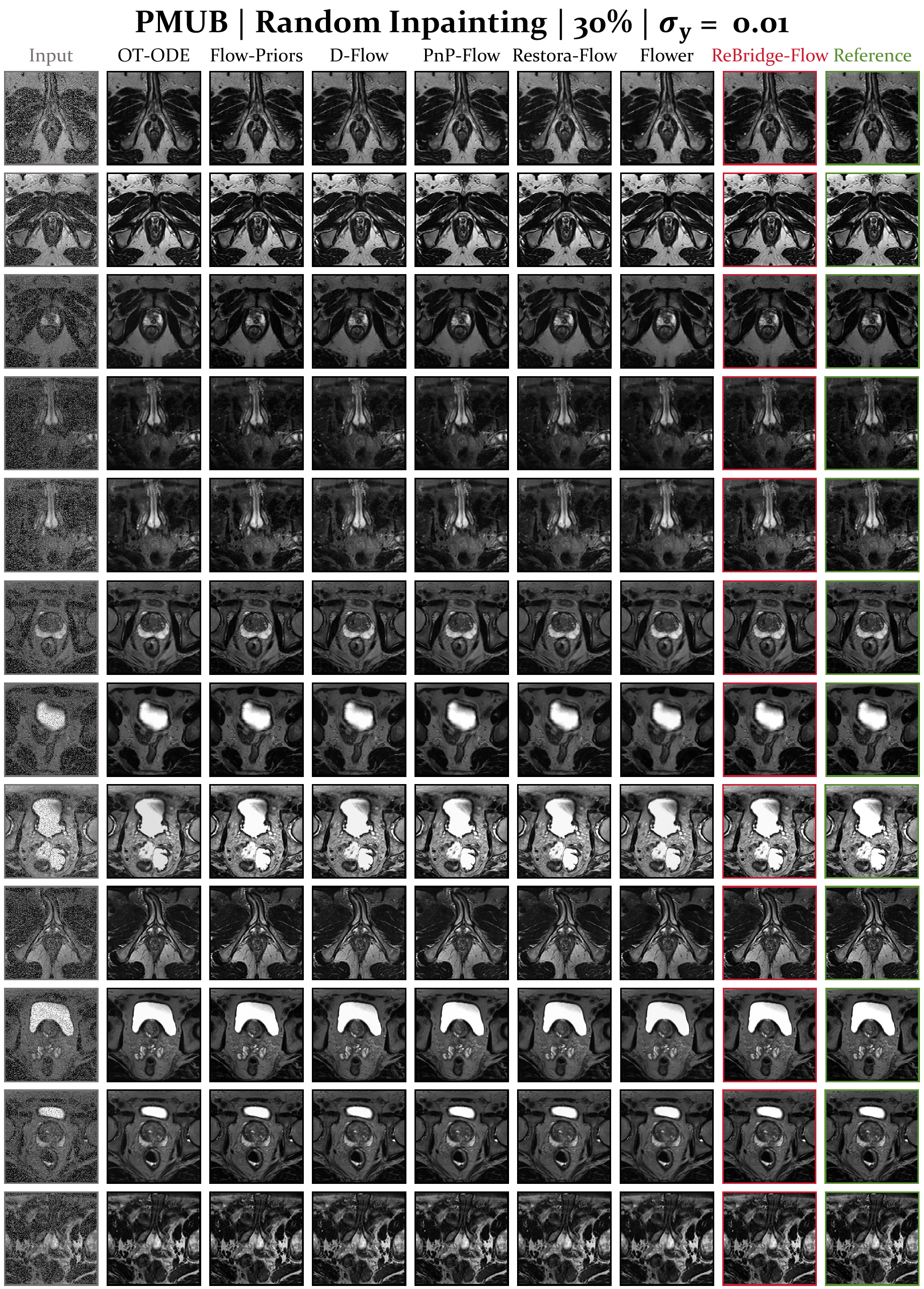} 
\caption{Qualitative visual comparison of Random Inpainting (30\%, $\sigma_\mathbf{y} = 0.01$) on PMUB.}
\label{Figure_Supplementary_Materials_Qualitative_22}
\end{figure}

\begin{figure}[!t]
\centering
\includegraphics[width=0.88\textwidth]{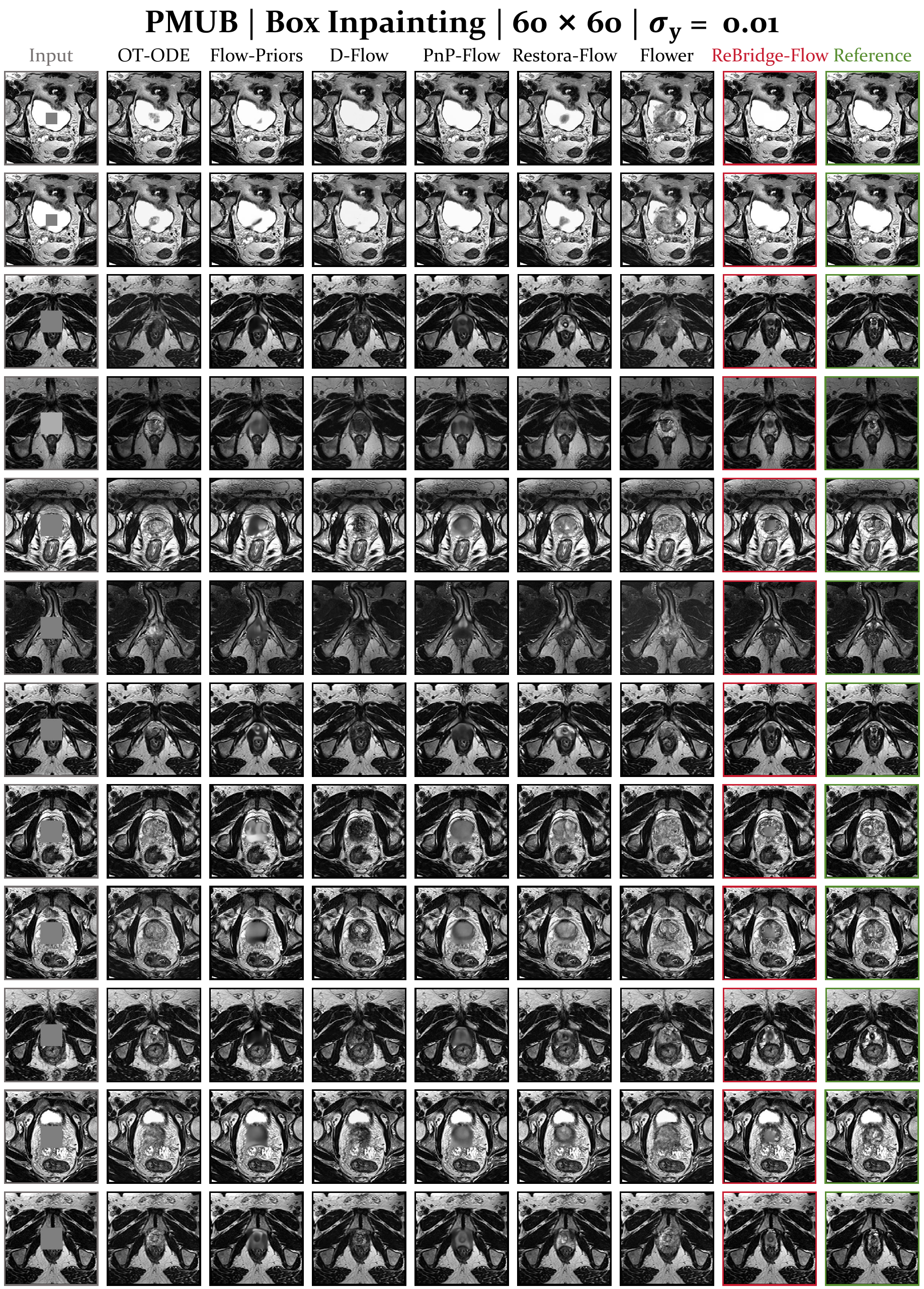} 
\caption{Qualitative visual comparison of Box Inpainting ($60 \times 60$, $\sigma_\mathbf{y} = 0.01$) on PMUB.}
\label{Figure_Supplementary_Materials_Qualitative_23}
\end{figure}

\begin{figure}[!t]
\centering
\includegraphics[width=0.88\textwidth]{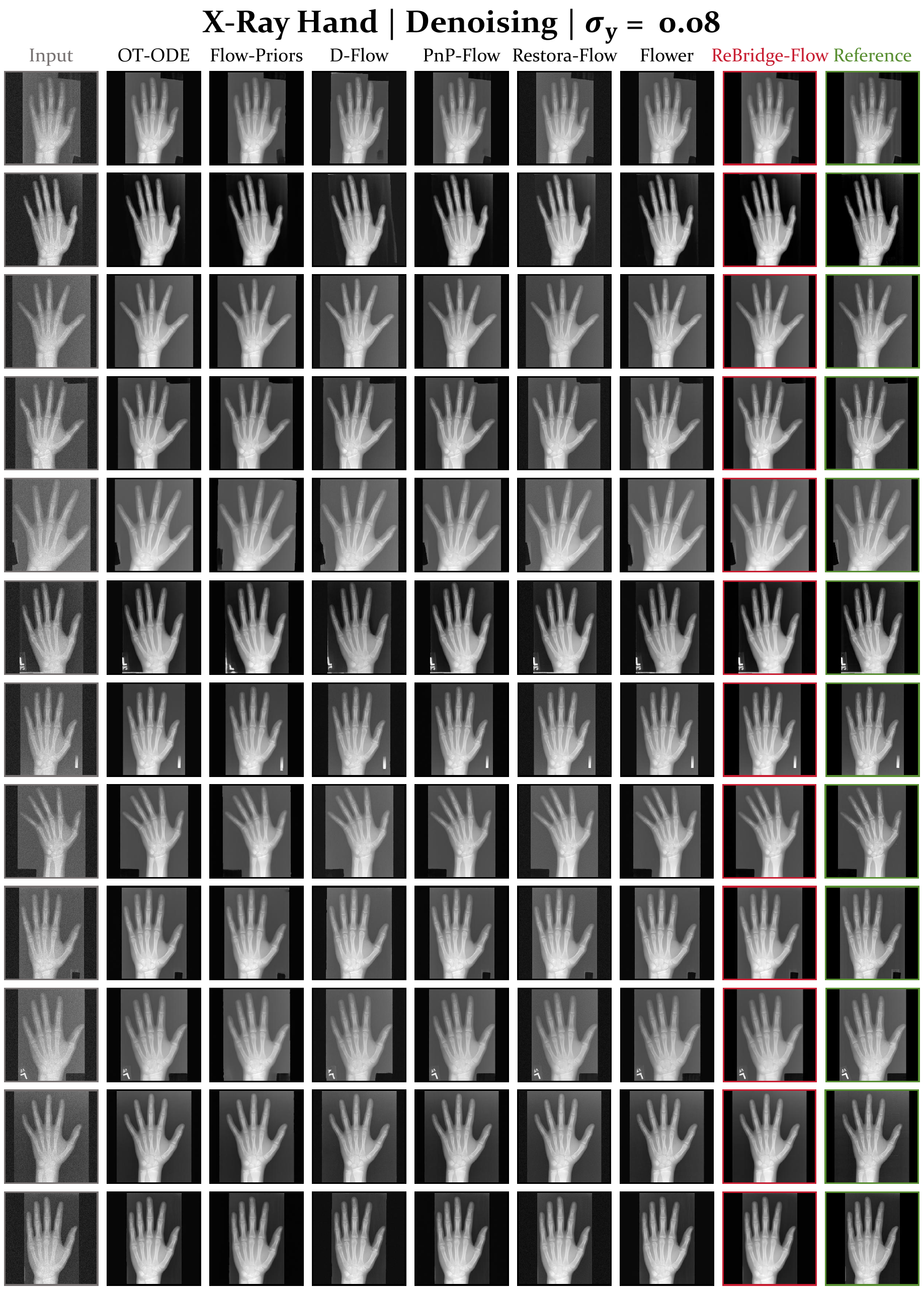} 
\caption{Qualitative visual comparison of Denoising ($\sigma_\mathbf{y} = 0.08$) on X-Ray Hand.}
\label{Figure_Supplementary_Materials_Qualitative_24}
\end{figure}

\begin{figure}[!t]
\centering
\includegraphics[width=0.88\textwidth]{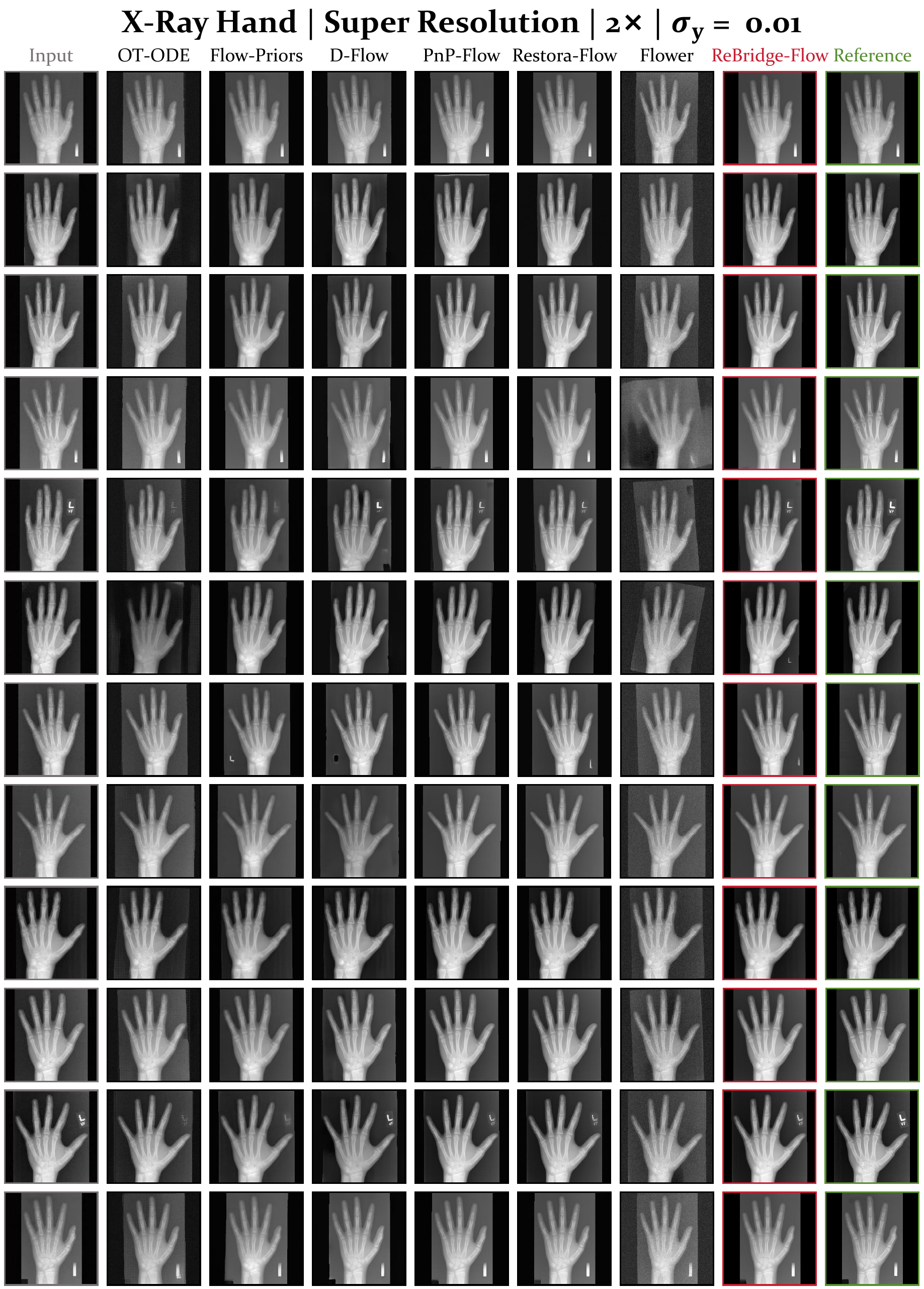} 
\caption{Qualitative visual comparison of Super Resolution ($2\times$, $\sigma_\mathbf{y} = 0.01$) on X-Ray Hand.}
\label{Figure_Supplementary_Materials_Qualitative_25}
\end{figure}

\begin{figure}[!t]
\centering
\includegraphics[width=0.88\textwidth]{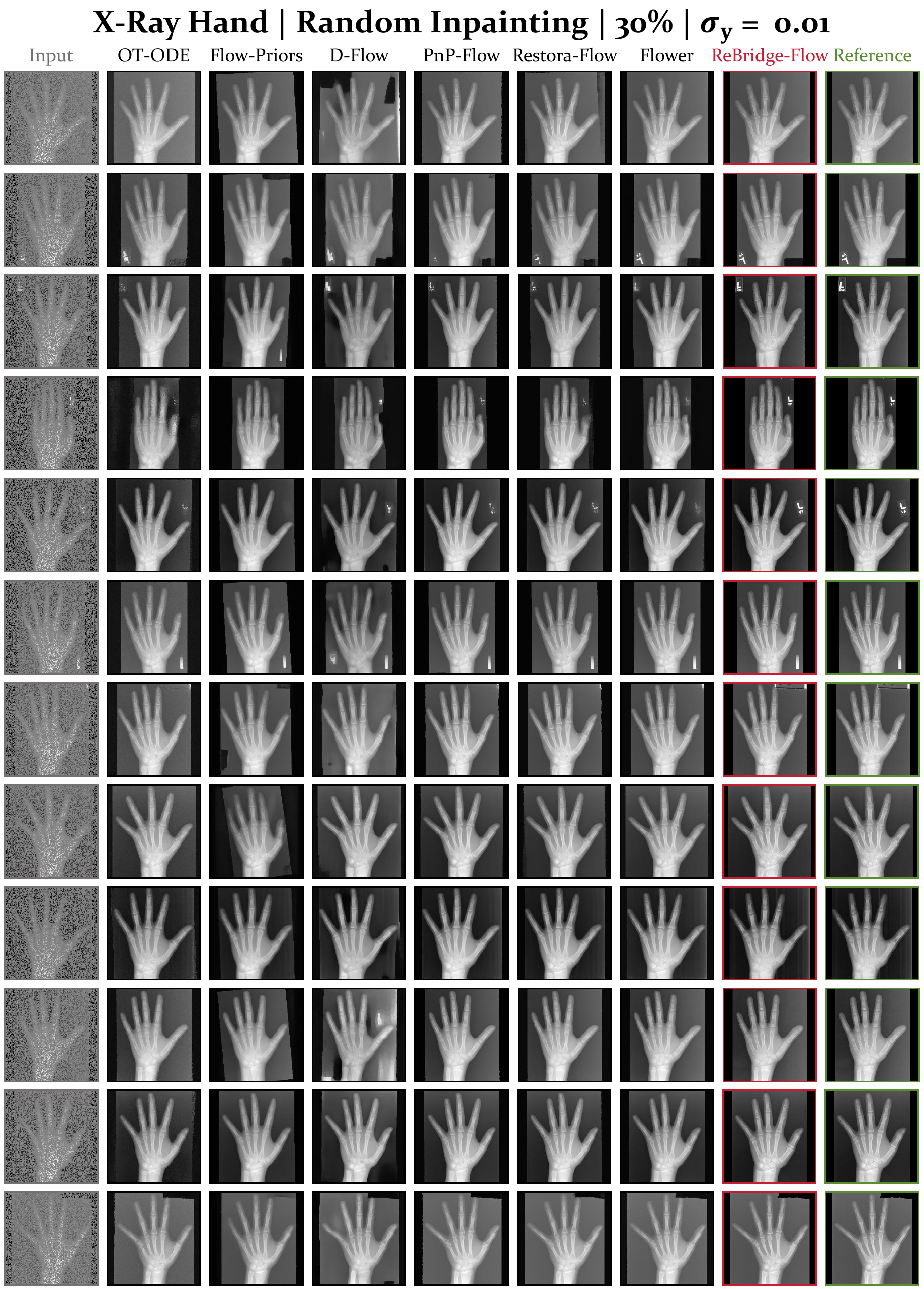} 
\caption{Qualitative visual comparison of Random Inpainting (30\%, $\sigma_\mathbf{y} = 0.01$) on X-Ray Hand.}
\label{Figure_Supplementary_Materials_Qualitative_26}
\end{figure}

\begin{figure}[!t]
\centering
\includegraphics[width=0.88\textwidth]{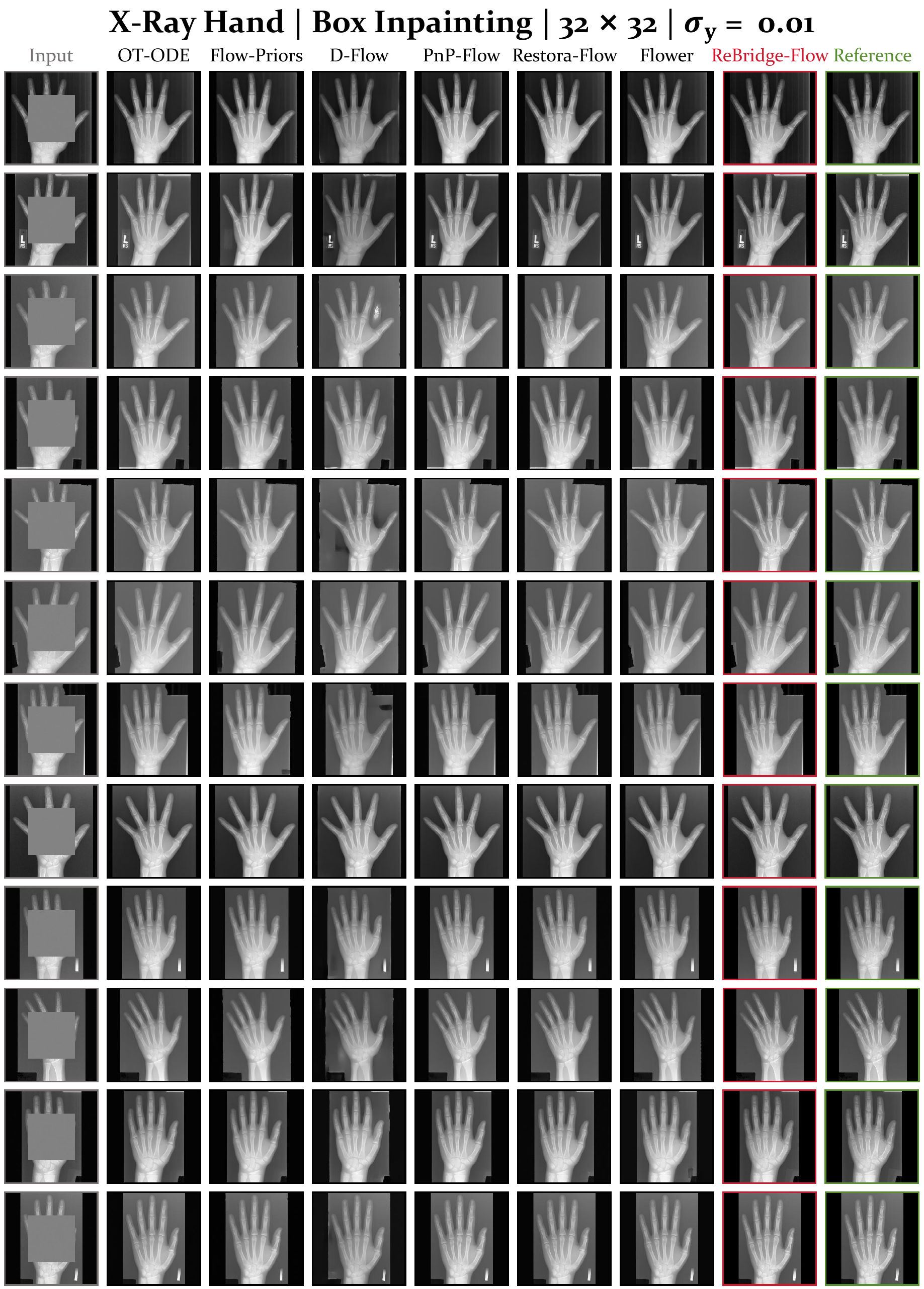} 
\caption{Qualitative visual comparison of Box Inpainting ($32 \times 32$, $\sigma_\mathbf{y} = 0.01$) on X-Ray Hand.}
\label{Figure_Supplementary_Materials_Qualitative_27}
\end{figure}

\end{document}